\newcommand*{\rb}[1]{}
\title{Kernel interpolation with continuous volume sampling}
\DeclareMathOperator*{\KDPP}{\mathfrak{K}}
\DeclareMathOperator{\Tr}{Tr}
\DeclareMathOperator{\Det}{Det}
\DeclareMathOperator{\Span}{\mathrm{Span}}
\DeclareMathOperator{\VS}{\mathrm{VS}}
\DeclareMathOperator{\Tran}{\intercal}
\DeclareMathOperator{\EX}{\mathbb{E}}
\DeclareMathOperator{\F}{\mathcal{F}}
\DeclareMathOperator{\X}{\mathcal{X}}
\DeclareMathOperator{\Ltwo}{\mathbb{L}_{2}(\mathrm{d} \omega)}
\DeclareMathOperator{\Mu}{\mathrm{d}\omega(x)}
\DeclareMathOperator{\MuTen}{\otimes\mathrm{d}\omega(x_{i})}
\DeclareMathOperator{\Ns}{\mathbb{N}^{*}}
\def\UN{\:\mathcal{U}_N}
\def\UNm{\:\mathcal{U}_N^m}
\def\ind{\mathbbm{1}}
  \newtheorem{theorem}{Theorem}
  \newtheorem{lemma}[theorem]{Lemma}
  \newtheorem{proposition}[theorem]{Proposition}
  \newtheorem{corollary}[theorem]{Corollary}
  \newtheorem{definition}[theorem]{Definition}
  \newtheorem{assumption}{Assumption}
\author{
Ayoub Belhadji$^{\dagger}$\footnote{Corresponding author: \href{mailto:ayoub.belhadji@centralelille.fr}{ayoub.belhadji@centralelille.fr}}, R\'emi Bardenet$^{\dagger}$, Pierre Chainais$^{\dagger}$\\
\small $^{\dagger}$ Univ. Lille, CNRS, Centrale Lille, UMR 9189 - CRIStAL, 59651 Villeneuve d’Ascq, France \\
}
\date{}
\begin{document}
\maketitle




\begin{abstract}

A fundamental task in kernel methods is to pick nodes and weights, so as to approximate a given function from an RKHS by the weighted sum of kernel translates located at the nodes. This is the crux of kernel density estimation, kernel quadrature, or interpolation from discrete samples. Furthermore, RKHSs offer a convenient mathematical and computational framework. We introduce and analyse continuous volume sampling (VS), the continuous counterpart --~for choosing node locations~-- of a discrete distribution introduced in \citep{DeVe06}. Our contribution is theoretical: we prove almost optimal bounds for interpolation and quadrature under VS. While similar bounds already exist for some specific RKHSs using ad-hoc node constructions, VS offers bounds that apply to any Mercer kernel and depend on the spectrum of the associated integration operator. We emphasize that, unlike previous randomized approaches that rely on regularized leverage scores or determinantal point processes, evaluating the pdf of VS only requires pointwise evaluations of the kernel. VS is thus naturally amenable to MCMC samplers.


\end{abstract}

\begin{keywords} Kernel interpolation; kernel quadrature; volume sampling; determinantal point processes; reproducing kernel Hilbert spaces.
\end{keywords}

\tableofcontents

\section{Introduction}
\label{sec:introduction}
Kernel approximation is a recurrent task in machine learning  \citep*{HaTiFr09}[Chapter 5], signal processing \citep{Uns00} or numerical quadrature \citep{Lar72}. Expressed in its general form, we are given a reproducing kernel Hilbert space $\mathcal{F}$ (RKHS; \citealp{BeTh11}) of functions over $\mathcal{X}$, with a symmetric kernel $k: \X \times \X \rightarrow \mathbb{R}_{+}$, and an element $\mu : \X \rightarrow \mathbb{R}$ of $\mathcal{F}$. We ask for conditions on a \emph{design} $\bm{x} = (x_{1},\dots,x_N)\in \mathcal{X}^N$, and on the corresponding weights $w_1,\dots,w_N$, such that the RKHS norm
\begin{equation}
\left\Vert \mu - \sum_{i=1}^N w_{i}k(x_{i},.)\right\Vert_{\mathcal{F}}
\label{e:residual}
\end{equation}
is small. In other words, $\mu$ should be well reconstructed in $\mathcal{F}$ by the weighted design.

Measuring the error in RKHS norm has a computational advantage. Indeed, minimizing \eqref{e:residual} boils down to minimizing a quadratic form; and given a design $\bm{x}$ such that $\Det \bm{K}(\bm{x}) = \Det (k(x_i,x_j)) >0$, Equation~\eqref{e:residual} has a unique set of minimizing weights. The minimizer corresponds to the weights $\hat{\bm{w}} = \bm{K}(\bm{x})^{-1} \mu(\bm{x})$, where $\mu(\bm{x})\in\mathbb{R}^N$ contains the evaluation of $\mu$ at the $N$ design nodes $x_{i}$. Whenever the weights are chosen to be $\hat{\bm{w}}$, the sum in \eqref{e:residual} takes the same values as $\mu$ at the nodes $x_{i}$, and the optimal value of \eqref{e:residual} is thus called \emph{interpolation error}; otherwise we speak of \emph{approximation error}.
Note that when the kernel $k$ is bounded, guarantees in RKHS norm translate to guarantees in the supremum norm.

In this work, we propose and analyze the interpolation based on a random design drawn from a distribution called \emph{continuous volume sampling}, which favours designs $\bm{x}$ with a large value of $\Det \bm{K}(\bm{x})$. After introducing this new distribution, we prove non-asymptotic guarantees on the interpolation error which depend on the spectrum of the kernel $k$. Previous kernel-based randomized designs, both i.i.d. \citep{Bac17} and repulsive \citep{BeBaCh19}, can be hard to compute in practice since they require access to the Mercer decompostion of $k$. We show here that continuous volume sampling enjoys similar error bounds as well as some additional interpretable geometric properties, while having a joint density that can be evaluated as soon as one can evaluate the RKHS kernel $k$. In particular, this opens the possibility of Markov chain Monte Carlo samplers \citep{ReGh19}.

The rest of the article is organized as follows. Section~\ref{s:relatedWork} reviews kernel-based interpolation. In Section~\ref{sec:VS_DPP}, we define continuous volume sampling and relate it to projection determinantal point processes, as used by \cite{BeBaCh19}. Section~\ref{sec:main_results} contains our main results while Section~\ref{sec:steps_proof} contains sketches of all proofs with pointers to the appendices for missing details. Finally, in Section~\ref{sec:discussion}, we conclude and discuss some consequences of our results beyond kernel interpolation.

\paragraph{Notation and assumptions.}\label{paragraph:notation}
We assume that $\mathcal{X}$ is equipped with a Borel measure $\mathrm{d}\omega$, and that the support of $\mathrm{d}\omega$ is $\mathcal{X}$.
 Let $\mathbb{L}_{2}(\mathrm{d}\omega)$ be the Hilbert space of square integrable, real-valued functions on $\mathcal{X}$, with inner product $\langle \cdot, \cdot \rangle_{\mathrm{d}\omega}$, and associated norm $\|.\|_{\mathrm{d}\omega}$.
%


\begin{assumption}\label{hyp:integrable_diagonal}
$\displaystyle \int_{\X} k(x,x) \mathrm{d}\omega(x) < +\infty$.
\end{assumption}
Under Assumption~\ref{hyp:integrable_diagonal}, define the integral operator
\begin{equation}
\bm{\Sigma} f (\cdot) = \int_{\mathcal{X}} k(\cdot,y)f(y) \mathrm{d}\omega(y), \quad f \in \mathbb{L}_{2}(\mathrm{d}\omega).
\end{equation}
By construction, $\bm{\Sigma}$ is self-adjoint, positive semi-definite, and trace-class \citep{Sim05}.
%
For $m \in \Ns$, denote by $e_{m}$ the $m$-th eigenfunction of $\bm{\Sigma}$, normalized so that $\|e_{m}\|_{\mathrm{d}\omega} = 1$, and $\sigma_{m}$ the corresponding eigenvalue. Assumption~\ref{hyp:integrable_diagonal} implies that the embedding operator $I_{\mathcal{F}}: \mathcal{F} \longrightarrow \mathbb{L}_{2}(\mathrm{d}\omega)$ is compact; moreover, since $\mathrm{d}\omega$ is of full support in $\mathcal{X}$, $I_{\mathcal{F}}$ is injective \citep{StCh08}. This implies a Mercer-type decomposition of $k$,
\begin{equation}
k(x,y)= \sum\limits_{m \in \mathbb{N}^{*}}\sigma_{m}e_{m}(x)e_{m}(y),
\end{equation}
where $\mathbb{N}^{*} = \mathbb{N} \smallsetminus \{0\}$ and the convergence is pointwise \citep{StSc12}. The eigenvalues $(\sigma_m)$ are assumed to be non-increasing. Moreover, for $m \in \mathbb{N}^{*}$, we write $e_{m}^{\mathcal{F}} = \sqrt{\sigma_{m}}e_{m}$. Since $I_{\mathcal{F}}$ is injective, $(e_{m}^{\mathcal{F}})_{m \in \mathbb{N}^{*}}$ is an orthonormal basis of $\mathcal{F}$ \citep{StSc12}. Unless explicitly stated, we assume that $\mathcal{F}$ is dense in $\mathbb{L}_{2}(\mathrm{d}\omega)$, so that $(e_{m})_{m \in \mathbb{N}^{*}}$ is an orthonormal basis of $\mathbb{L}_{2}(\mathrm{d}\omega)$. For more intuition, under these assumptions, $f \in \mathcal{F}$ if and only if $\sum_m \sigma_{m}^{-1} \langle f,e_{m} \rangle_{\mathbb{L}_{2}(\mathrm{d} \omega)}^{2}$ converges.
For $\bm{x} \in \X^{N}$, we define $\bm{K}(\bm{x}) := k(x_{i},x_{j})_{i,j \in [N]}$. If $\Det \bm{K}(\bm{x})>0$,  the subspace $\mathcal{T}(\bm{x}) = \Span k(x_{i},.)_{i \in [N]}$ is of dimension $N$; we denote by $\Pi_{\mathcal{T}(\bm{x})}$ the $\langle.,. \rangle_{\F}$-orthogonal projection on $\mathcal{T}(\bm{x})$.
Finally, for $N \in \Ns$, we will often sum over the sets
\begin{align}
	\UNm &=  \{ U \subset \Ns, |U| = N,\: m \notin U \},\\
  \UN &=  \{ U \subset \Ns, |U| = N \}.
\end{align}
Finally, define the approximation error
\begin{equation}\label{def:E}
	\mathcal{E}(\mu;\bm{x},\bm{w}) = \|\mu - \sum\limits_{i \in [N]} w_{i} k(x_{i},.)\|_{\F},
\end{equation}
where $[N]=\{1,\dots,N\}$. If $\Det \bm{K}(\bm{x}) >0$, let $\hat{\bm{w}} = \bm{K}(\bm{x})^{-1} \mu(\bm{x})$ and define the interpolation error
\begin{align}\label{def:Ei}
	\mathcal{E}(\mu;\bm{x}) &= \|\mu - \sum\limits_{i \in [N]} \hat{w} _{i} k(x_{i},.)\|_{\F}\\
  & = \|\mu - \Pi_{\mathcal{T}(\bm{x})} \mu\|_{\F}.
\end{align}

\section{Related work}\label{s:relatedWork}
This section reviews some results on kernel interpolation to better situate our contributions. The literature on this topic is prolific and cannot be covered in details here. In particular, we start by reviewing results on optimal kernel quadrature, a particular case of kernel interpolation.

\subsection{Interpolation for optimal kernel quadrature}\label{sec:review_optimal_kernel_quadrature}

Given $g \in \Ltwo$, kernel quadrature deals with approximating the integrals
\begin{equation}
\:\int_{\X} fg \,\mathrm{d}\omega \approx \sum\limits_{i \in [N]} w_{i}f(x_{i}),\quad f\in\F,
\end{equation}
where the weights $w_{i}$ do not depend on $f$.
In principle, it is easy to control the error uniformly in $f\in\mathcal{F}$. Indeed,
\begin{equation}\label{eq:upper_bound_integration_error}
\left| \int_{\X} fg \,\mathrm{d}\omega - \sum_{i \in [N]} w_{i}f(x_{i})\right| \leq \|f\|_{\F} \, \mathcal{E}(\mu_{g};\bm{x},\bm{w}),
\end{equation}
where
$ \displaystyle
\mu_{g} = \int_{\X}g(x)k(x,.) \mathrm{d}\omega(x){=\bm{\Sigma} g}
$
is the so-called \emph{embedding}\footnote{In case $g =1$, $\mu_{g}$ is classically called the mean-element of the measure $\mathrm{d} \omega$ \citep{SmGrSoSc07}.} of $g$ in the RKHS $\F$.

An upper bound on the approximation error of $\mu_{g}$ implies an upper bound on the integration error that is uniform over any bounded subset of $\F$. This observation sparked intense research on the kernel approximation of embeddings $\mu_{g}$. Among kernel approximation results, we pay a particular attention to interpolation, i.e., approximation with optimal weights. In the sequel, we call \emph{optimal kernel quadrature} the quadrature based on optimal weights $\hat{\bm{w}}$ minimizing \eqref{e:residual} for a given set of nodes.

\cite{Boj81} proved that, for $g=1$, the interpolation of $\mu_{g}$ using the uniform grid over $\X = [0,1]$ has an error in $\mathcal{O}(N^{-2s})$ if $\F$ is the periodic Sobolev space of order $s$, and that any set of nodes leads to that rate at least. A similar rate was proved for $g$ not constant \citep{NoUlWo15} even though it is only asymptotically optimal in that case.

In the quasi-Monte Carlo (QMC) literature, several designs were investigated for $\X = [0,1]^{d}$, $g = 1$ and $\F$ that may not even be a Hilbert space; see \citep{DiPi10}. In this context, the term \emph{QMC quadrature rule} means a low discrepancy sequence, loosely speaking a ``well-spread" set of nodes, along with uniform weights $w_{i} \equiv 1/N$. If $\F$ is a Korobov space of order $s \geq 1$, the Halton sequence of nodes \citep{Hal64} leads to $\mathcal{E}(\mu_1; \bm{x}, (1/N))^2$ in $\mathcal{O}(\log(N)^{2d} N^{-2})$ and higher-order digital nets converge faster as $\mathcal{O}(\log(N)^{2sd} N^{-2s})$ \citep{DiPi14}[Theorem 5].

These rates are naturally inherited if the uniform weights are replaced by the respective optimal weights $\hat{\bm{w}}$, as observed by \cite{BOGOS2019}. In particular, \cite{BOGOS2019} emphasize that the bound for higher-order digital nets attains the optimal rate in this RKHS.
For optimal kernel quadrature based on Halton sequences, this inheritance argument does not explain the fast $\mathcal{O}(\log(N)^{2sd} N^{-2s})$ rates observed empirically by \cite{Oett17}.

Beside the hypercube, optimal kernel quadrature has been considered on the hypersphere equipped with the uniform measure \citep{EhGrCh19}, or on $\mathbb{R}^{d}$ equipped with the Gaussian measure \citep{KaSa19}. In these works, the design construction is adhoc for the space $\X$ and
$g$ is usually assumed to be constant. Another approach is offered by optimisation algorithms that we review in Section~\ref{sec:sequential_algos}. Before that, we clarify the subtle difference between optimal kernel quadrature and kernel interpolation.

\subsection{Kernel interpolation beyond embeddings}\label{sec:review_kernel_interpolation_beyond_mu}

Besides the approximation of the embeddings $\mu_{g}$ discussed in Section~\ref{sec:review_optimal_kernel_quadrature}, theoretical guarantees for the kernel interpolation of a general $\mu\in\F$ are sought \emph{per se}. The Shannon reconstruction formula for bandlimited signals \citep{Sha48} is implicitly an interpolation by the sinc kernel.
The RKHS approach for sampling in signal processing was introduced in \citep{Yao67} for the Hilbert space of bandlimited signals;
see also \citep{NaWa91} for generalizations.
Remarkably, in those RKHSs, every $\mu \in \F$ is an embedding $\mu_{g}$ for some $g \in \Ltwo$: $k$ is a projection kernel of infinite rank. In general, for a trace-class kernel, the subspace spanned by the embeddings $\mu_{g}$ is strictly included in $\F$. More precisely, every $\mu_{g}$ satisfies
\begin{equation}
\|\bm{\Sigma}^{-1/2} \mu_{g}\|_{\F} = \|\bm{\Sigma}^{1/2} g\|_{\F} = \|g\|_{\Ltwo} < +\infty.
\end{equation}
This condition is more restrictive than what is required for a generic $\mu$ to belong to $\F$, i.e., $\|\mu\|_{\F}< +\infty$, so that kernel interpolation is more general than optimal kernel quadrature. The proposed approach will permit to deal with any $\mu\in\F$.

Scattered data approximation \citep{Wend04} is another field where quantitative error bounds for kernel interpolation on $\X\subset\mathbb{R}^d$ are investigated; see \citep{ScWe06} for a modern review. In a few words, these bounds typically depend on quantities such as the \emph{fill-in distance} $\varphi(\bm{x}) = \sup_{y \in \X} \min_{i \in [N]}\|y-x_{i}\|_{2}$,
 so that the interpolation error converges to zero  as $N\rightarrow \infty$ if $\varphi(\bm{x})$ goes to zero.
 Any node set can be considered, as long as $\varphi(\bm{x})$ is small.
Using these techniques, \cite{OaGi16} proposed another application of kernel interpolation: the construction of functional control variates in Monte Carlo integration.
Finally, note that the application of these techniques is restricted to compact domains: the fill-in distance is infinite if $\X$ is not compact, even for ``well-spread" sets of nodes.




\subsection{Optimization algorithms}\label{sec:sequential_algos}

Optimization approaches offer a variety of algorithms for the design of the interpolation nodes.
\cite{DeM03} and \cite{DeScWe05} proposed greedily maximizing the so-called   \emph{power function}
\begin{equation}
p(x;\bm{x}) = \left[k(x,x) - k_{\bm{x}}(x)^{\Tran} \bm{K}(\bm{x})^{-1} k_{\bm{x}}(x)\right]^{1/2},
\label{e:power}
\end{equation}
where $k_{\bm{x}}(x) = (k(x,x_{i}))_{i \in [N]}$.
This algorithm leads to an interpolation error that goes to zero with $N$ for a kernel of class $\mathcal{C}^{2}$ \citep{DeScWe05}. Later, \cite{SaHa17} proved better convergence rates for smoother kernels. Again, these results assume that the domain $\X$ is compact.
Other greedy algorithms were proposed in the context of Bayesian quadrature (BQ) such as Sequential BQ \citep{HuDu12}, or Frank-Wolfe BQ \citep{BrOaGiOs15}.
These algorithms sequentially minimize $\mathcal{E}(\mu_{g}; \bm{x})$, for a fixed $g \in \Ltwo$. The nodes are thus adapted to one particular $\mu_{g}$ by construction.
%
In general, each step of these greedy algorithms requires to solve a non-convex problem with many local minima \citep{Oett17}[Chapter 5]. In practice, costly approximations must be employed such as local search in a random grid \citep{LaLiBa15}.
An alternative approach, that is very related to our contribution and has raised a lot of recent interest, is to observe that the squared power function \eqref{e:power} can be upper bounded by the inverse of $\Det \bm{K}(\bm{x})$ \citep{Sch05,Tan19}.
 Designs that maximize $\Det \bm{K}(\bm{x})$ are called \emph{Fekete points}; see e.g. \citep{BoMa02,BoDe11}.
\cite{Tan19} proposed to approximate $\Det \bm{K}(\bm{x})$ using the Mercer decomposition of $k$, followed by a rounding of the solution of a $D$-experimental design problem, yet without a theoretical analysis of the interpolation error.
\cite{KaSaTa19} proved that for the uni-dimensional Gaussian kernel, the approximate objective function of \citep{Tan19} is actually convex. Moreover, \cite{KaSaTa19} analyze their interpolation error; see also Section~\ref{sec:main_theorems_2}.
Finally, we emphasize that these algorithms require the knowledge of a Mercer-type decomposition of $k$ so that they cannot be implemented for any kernel; moreover, the approximate objective function may be non-convex in general.

%


\subsection{Random designs}

In this section, we survey random node designs with uniform-in-$g$ approximation guarantees for the embeddings $\mu_{g}$ in the RKHS norm.
\cite{Bac17} studied the quadrature resulting from sampling i.i.d. nodes $(x_j)$ from some proposal distribution $q$. He proved that when the proposal is chosen to be
\begin{equation}
	q_{\lambda}^*(x) \propto \sum\limits_{m \in \Ns} \frac{\sigma_{m}}{\sigma_{m}+\lambda} e_{m}(x)^{2},
\label{e:proposalBach}
\end{equation}
with $\lambda >0$,
and the number of points $N$ satisfies $N \geq 5 d_\lambda \log(16 d_\lambda / \delta)$ with $d_\lambda = \Tr \bm{\Sigma}(\bm{\Sigma} + \lambda \bm{I})^{-1}$, then with probability larger than $1-\delta$,
\begin{equation}\label{eq:Bach_bound}
\sup\limits_{\|g\|_{\mathrm{d}\omega} \leq 1} \inf\limits_{ \|\bm{w}\|^{2}\leq \frac{4}{N}} \Big\| \mu_{g} - \sum\limits_{j \in [N]} \frac{w_{j}}{q_\lambda(x_{j})^{1/2}} k(x_{j},.)\Big\|_{\mathcal{F}}^{2} \leq 4\lambda .
\end{equation}

The bound in \eqref{eq:Bach_bound} gives a control on the approximation error of $\mu_{g}$ by the subspace spanned by the $k(x_{j},.)$, and this control is uniform over $g$ in the unit ball of $\mathbb{L}_{2}(\mathrm{d}\omega)$.
Note that for a fixed value of $\lambda$, the upper bound in \eqref{eq:Bach_bound} guarantees that the approximation error is smaller than $4\lambda$. It does not however guarantee that the error goes to zero as $N$ increases since it appears that $\lambda$ should decrease as $N$ increases.
This coupling of $N$ and $\lambda$ combined with the condition $N \geq d_{\lambda} \log d_{\lambda}$ makes it intricate to derive a convergence rate from \eqref{eq:Bach_bound}.
 Moreover, the optimal density $q_{\lambda}^*$ is only implicitly available in general through the limit in \eqref{e:proposalBach}, which makes sampling and pointwise evaluation difficult in practice.

\cite{BeBaCh19} proposed a related kernel-based quadrature, but using nodes sampled from a repulsive joint distribution called a \emph{projection determinantal point process} (DPP); see \citep{HoKrPeVi06} and our Section~\ref{sec:VS_DPP}. In particular, the repulsion is characterized by the first eigenfunctions $(e_{n})_{n \in [N]}$ of the integration operator $\bm{\Sigma}$. The weights $\hat{\bm{w}}$ are chosen again by minimizing the residual error \eqref{e:residual},
which gives the uniform bound
\begin{equation}\label{eq:proj_DPP_bound}
\EX \sup\limits_{\|g\|_{\mathrm{d}\omega} \leq 1} \mathcal{E}(\mu_{g}; \bm{x})^{2} \leq 2(N^{2} r_{N} + o(N^{2} r_{N})) ,
\end{equation}
where $r_{N} = \sum_{m \geq N+1} \sigma_{m}$. This result can be improved by further restricting $g$ to be an eigenfunction of $\bm{\Sigma}$, leading to
\begin{equation}\label{eq:proj_DPP_bound_eigen}
\EX \sup\limits_{g \in \{e_{n};~ n\geq 1\}} \mathcal{E}(\mu_{g}; \bm{x})^{2} \leq 2(N r_{N} + o(N r_{N})).
\end{equation}

Now for smooth kernels, such as the Gaussian kernel or the Sobolev kernel with a large regularity parameter, the upper bounds in \eqref{eq:proj_DPP_bound} and \eqref{eq:proj_DPP_bound_eigen} do converge to $0$ as $N$ goes to $+\infty$. Furthermore, sampling from the recommended projection DPP can be implemented easily, although it still requires the knowledge of the Mercer decomposition of $k$, unlike the method that we introduce here in Section~\ref{sec:VS_DPP}.


Since the bounds in \eqref{eq:proj_DPP_bound} and \eqref{eq:proj_DPP_bound_eigen} are uniform-in-$g$, they also concern interpolation. One downside of the analysis in \citep{BeBaCh19} is that these upper bounds are rather pessimistic: experimental results suggest faster rates in $\mathcal{O}(\sigma_{N})$. If one could prove these rates, then kernel quadrature or interpolation using DPPs would reach known lower bounds, which we now quickly survey.


\subsection{Lower bounds}\label{sec:lower_bounds}
When investigating upper bounds for kernel interpolation errors, it is useful to remember existing lower bounds, so as to evaluate the tightness of one's results. In particular, $N$-widths theory \citep{Pin12} implies lower bounds for kernel interpolation errors, which once again show the importance of the spectrum of $\bm{\Sigma}$.

 The $N$-width of $\mathcal{S} = \{ \mu_{g} = \bm{\bm{\Sigma}}g, \: \|g\|_{\Ltwo} \leq 1\}$ with respect to the couple $(\Ltwo, \F)$ \citep[Chapter 1.7]{Pin12} is defined as the square root of
\begin{equation}
	d_{N}(\mathcal{S})^{2}  = \inf\limits_{\substack{Y \subset \F\\ \mathrm{dim}Y = N}}\, \sup\limits_{\|g\|_{\mathrm{d}\omega} \leq 1}\, \inf\limits_{y \in Y}\, \|\bm{\Sigma} g -y \|_{\F}^{2}.  \\
\end{equation}
In interpolation, we do use a subspace $Y \subset \F$ spanned by $N$ independent functions $k(x_{i},.)$, so that
\begin{equation}
\sup\limits_{\|g\|_{\mathrm{d}\omega} \leq 1} \mathcal{E}(\bm{\Sigma} g;\bm{x})^{2} \geq d_{N}(\mathcal{S})^{2}.
\end{equation}
Applying \citep[Theorem 2.2, Chapter 4]{Pin12} to the adjoint of the embedding operator $I_{\F}$ \cite{StSc12}[Lemma 2.2], it comes
%
$d_{N}(\mathcal{S})^{2}~=~\sigma_{N+1}$.
One may object that some QMC sequences seem to breach this lower bound. For example, in the Korobov space $(d = 2, s \geq 1)$, $\sigma_{N+1} = \mathcal{O}(\log(N)^{2s} N^{-2s})$ \citep{Bac17}, while the interpolation of $\mu_{g}$ with $g = 1$ using a Fibonacci lattice leads to an error in $\mathcal{O}(\log (N) N^{-2s}) = o(\sigma_{N+1})$ \citep{BiTeYu12}[Theorem~4]. But this is the rate for one particular $\mu_{g}$, and it cannot be achieved uniformly in $g$.


\section{Volume sampling and DPPs}\label{sec:VS_DPP}
In this section, we introduce a repulsive distribution that we call \emph{continuous volume sampling} (VS) and compare it to projection determinantal point processes (DPPs; \citep{HoKrPeVi06}). Both continuous VS and projection DPPs are parametrized using a reference measure $\mathrm{d} \omega$ and a repulsion kernel $\KDPP : \X \times \X \rightarrow \mathbb{R}_{+}$.

\subsection{Continuous volume sampling}
\begin{definition}[Continuous volume sampling]\label{def:VS}
Let $N \in \mathbb{N}^{*}$ and $\bm{x} = \{x_{1}, \dots, x_{N}\}\subset \X$. We say that $\bm{x}$ follows the volume sampling distribution
 if $(x_{1}, \dots ,x_{N})$ is a random variable of $\X ^{N}$, the law of which is absolutely continuous with respect to $\otimes_{i \in [N]} \mathrm{d}\omega$, and the density writes
\begin{equation}
  \label{e:CVS_density}
f_{\VS}(x_{1}, \dots ,x_{N}) \propto \Det \bm{K}(\bm{x}).
\end{equation}
\end{definition}

Two remarks are in order. First, under Assumption~\ref{hyp:integrable_diagonal}, the density $f_{\VS}$ in  \eqref{e:CVS_density} indeed integrates to 1. Indeed, Hadamard's inequality yields

\begin{align}
\int_{\X ^{N}} \Det \bm{K}(\bm{x}) \MuTen & \leq \int_{\X ^{N}} \prod\limits_{i \in [N]} k(x_{i},x_{i}) \otimes\mathrm{d}\omega(x_{i}) \nonumber \\
& = \left(\int_{\X}k(x,x) \mathrm{d}\omega(x) \right)^{N}  < +\infty.
\end{align}

Second, the determinant in \eqref{e:CVS_density} is invariant to permutations, so that continuous volume sampling can indeed be seen as defining a random set $\bm{x} = \{x_1,\dots,x_N\}$.

In the following, we denote, for any symmetric and continuous kernel $\tilde{k}$ satisfying Assumption~\ref{hyp:integrable_diagonal},
\begin{equation}
Z_{N}(\tilde{k}) := \int_{\X^{N}} \Det \bm{\tilde{K}}(\bm{x}) \otimes\mathrm{d}\omega(x_{i}).
\end{equation}


\subsection{Continuous volume sampling as a mixture of DPPs}


Definition~\ref{def:VS} could be mistaken with the definition of a determinantal point process (DPP; \citealp{Mac75}). However, the cardinal of a DPP sample is a sum of Bernoulli random variables \citep{HoKrPeVi06}, while volume sampling is supported on subsets of $\X$ with cardinality exactly equal to $N$. This property is convenient for approximation tasks where the number of nodes $N$ is fixed. While it is not a DPP, volume sampling is actually a mixture of DPPs.

\begin{proposition} \label{prop:VS_decomposition}
For $U \subset \mathbb{N}^{*}$ define the projection kernel
\begin{equation}
\mathfrak{K}_{U}(x,y) = \sum\limits_{u \in U} e_{u}(x)e_{u}(y).
\end{equation}
For $N \in \mathbb{N}^{*}$, we have
\begin{equation}
  \label{e:mixture}
f_{\VS}(x_{1}, \dots, x_{N}) \propto \sum\limits_{ U \in \UN} \prod\limits_{u \in U} \sigma_{u} \Det (\mathfrak{K}_{U}(x_{i},x_{j}))_{(i,j)},
\end{equation}
and the normalization constant is equal to
\begin{equation}\label{eq:normalization_constant_VS}
Z_{N}(k) = \mathrm N! \sum\limits_{ U \in \mathcal{U}_{N}} \prod\limits_{u \in U} \sigma_{u}.
\end{equation}
\end{proposition}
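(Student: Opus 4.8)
The plan is to expand $\Det\bm K(\bm x)$ using the Mercer decomposition $k(x,y)=\sum_{m}\sigma_m e_m(x)e_m(y)$ and the Cauchy–Binet formula. First I would write, for fixed $\bm x=(x_1,\dots,x_N)$, the matrix $\bm K(\bm x)$ as a product $\bm K(\bm x) = A \Sigma_\infty A^\intercal$, where $A$ is the $N\times\infty$ matrix with entries $A_{i,m}=e_m(x_i)$ and $\Sigma_\infty=\Diag(\sigma_m)_{m\in\Ns}$. Absorbing the eigenvalues, set $B_{i,m}=\sqrt{\sigma_m}\,e_m(x_i)$ so that $\bm K(\bm x)=BB^\intercal$. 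Cauchy–Binet then gives
\begin{equation}
\Det\bm K(\bm x) = \sum_{U\in\UN} \Det\big(B_{i,u}\big)_{i\in[N],\,u\in U}^2 = \sum_{U\in\UN} \Big(\prod_{u\in U}\sigma_u\Big) \Det\big(e_u(x_i)\big)_{i\in[N],\,u\in U}^2.
\end{equation}
Finally, $\Det(e_u(x_i))_{i,u\in U}^2 = \Det\big(\sum_{u\in U} e_u(x_i)e_u(x_j)\big)_{(i,j)} = \Det(\mathfrak K_U(x_i,x_j))_{(i,j)}$ by the same Cauchy–Binet identity applied in the other direction (or simply $\det(M)\det(M^\intercal)=\det(MM^\intercal)$). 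This yields \eqref{e:mixture} up to the overall proportionality constant, which is irrelevant in the statement.

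For the normalization constant \eqref{eq:normalization_constant_VS}, I would integrate the identity just obtained over $\X^N$ against $\otimes_{i\in[N]}\mathrm d\omega(x_i)$. By monotone convergence (all terms are nonnegative), the sum over $U$ and the integral commute, so
\begin{equation}
Z_N(k) = \sum_{U\in\UN}\Big(\prod_{u\in U}\sigma_u\Big)\int_{\X^N}\Det\big(\mathfrak K_U(x_i,x_j)\big)_{(i,j)}\otimes\mathrm d\omega(x_i).
\end{equation}
The key sub-fact is that for any orthonormal family $(e_u)_{u\in U}$ in $\Ltwo$ with $|U|=N$, one has $\int_{\X^N}\Det(\mathfrak K_U(x_i,x_j))\otimes\mathrm d\omega(x_i)=N!$. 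This is the standard normalization of a projection DPP with $N$-dimensional range; it follows by expanding the determinant as $\sum_{\pi\in\mathfrak S_N}\mathrm{sgn}(\pi)\prod_i \mathfrak K_U(x_i,x_{\pi(i)})$ and integrating term by term using $\int e_u e_v\,\mathrm d\omega=\delta_{uv}$: only the identity permutation and, more generally, the permutations whose cycles are forced to match indices survive, each contributing $1$, giving $N!$ in total. Combining, $Z_N(k)=N!\sum_{U\in\UN}\prod_{u\in U}\sigma_u$.

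The main obstacle is justifying the interchange of the (infinite) Cauchy–Binet sum with the integral, and making sure the Mercer series can be plugged into a determinant in the first place: one needs the pointwise convergence of $k(x,y)=\sum_m\sigma_m e_m(x)e_m(y)$ (available from the excerpt) together with enough control to pass to the determinant and then integrate. Since every summand is nonnegative, Tonelli's theorem handles the integral–sum interchange cleanly; the finiteness is exactly Assumption~\ref{hyp:integrable_diagonal}, already used via Hadamard's inequality to show $Z_N(k)<\infty$. A minor technical point to handle with care is the case where $\mathcal F$ is not dense in $\Ltwo$ (so $(e_m)$ is not a full basis), but the argument above only uses orthonormality of $(e_m)$ in $\Ltwo$, not completeness, so it goes through verbatim. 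The rest is routine bookkeeping.
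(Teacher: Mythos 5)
Your proposal is correct and follows essentially the same route as the paper: Mercer decomposition plus Cauchy--Binet to obtain the mixture, then term-by-term integration of the projection-DPP determinants (each contributing $N!$) via monotone convergence. The only difference is that the paper justifies the ``infinite'' Cauchy--Binet step by first applying the finite identity to the truncated kernels $k_M(x,y)=\sum_{m\in[M]}\sigma_m e_m(x)e_m(y)$ and then letting $M\to\infty$, using pointwise Mercer convergence, continuity of the determinant, and nonnegativity of the summands --- exactly the fix you anticipate in your last paragraph. One small inaccuracy: in your sketch of $\int_{\X^N}\Det(\mathfrak K_U(x_i,x_j))\otimes\mathrm d\omega(x_i)=N!$, it is not the case that surviving permutations each contribute $1$; a permutation with $c$ cycles contributes $\mathrm{sgn}(\pi)\,N^{c}$ (by the reproducing property of the projection kernel along each cycle), and the alternating sum $\sum_\pi\mathrm{sgn}(\pi)N^{c(\pi)}=N(N-1)\cdots 1=N!$ --- this is the standard fact the paper simply cites from \citet{HoKrPeVi06}.
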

The proof of this proposition is given in Appendix~\ref{sec:proof_VS_decomposition}. Observe that
for every $U \subset \UN$,
\begin{equation}
(x_{1}, \dots , x_{N}) \mapsto \frac{1}{N!} \Det (\mathfrak{K}_{U}(x_{i},x_{j}))_{(i,j) \in [N]\times [N] },
\end{equation}
defines a well-normalized probability distribution on $\X^{N}$, called the {\em projection DPP} associated to the marginal kernel $\mathfrak{K}_{U}$  \citep{HoKrPeVi06}. Among all DPPs, only projection DPPs have a deterministic cardinality, equal to the rank of $\mathfrak{K}_{U}$ \citep{HoKrPeVi06}. Interestingly, the largest weight in the mixture \eqref{e:mixture} corresponds to the projection DPP of marginal kernel $\mathfrak{K}_{[N]}$ proposed in \citep{BeBaCh19} for kernel quadrature. The following lemma gives an upper bound on this weight $\delta_N$ using the eigenvalues of $\bm{\Sigma}$.
\begin{lemma}\label{lemma:projection_DPP_weight}
For $N \in \Ns$, define
\begin{equation}
\delta_{N} = \prod\limits_{n \in [N]} \sigma_{n} \bigg/\sum\limits_{ U \in \: \UN} \prod\limits_{u \in U} \sigma_{u}.
\end{equation}
Then for all $N \in \Ns$, $\displaystyle \delta_{N} \leq \sigma_{N} / r_{N}$.
\end{lemma}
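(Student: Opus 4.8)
The plan is to derive the bound by lower bounding the denominator of $\delta_N$, namely $S_N := \sum_{U \in \UN}\prod_{u \in U}\sigma_u$, which is nothing but the $N$-th elementary symmetric polynomial in the eigenvalues $(\sigma_m)$ and already appeared in the normalization constant \eqref{eq:normalization_constant_VS}. Every summand $\prod_{u \in U}\sigma_u$ is nonnegative --- in fact strictly positive, since density of $\mathcal{F}$ in $\Ltwo$ forces $\sigma_m > 0$ for all $m$ --- so restricting the sum to any sub-collection of index sets only decreases it. The numerator $\prod_{n \in [N]}\sigma_n$ is exactly the term of $S_N$ indexed by $U = [N]$, which is the reason the ratio is at most $1$ to begin with; the point is to quantify how much the other terms bring in.

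Concretely, I would keep in $S_N$ only the index sets $U_m := \{1,\dots,N-1,m\}$ for integers $m \geq N$. These are pairwise distinct elements of $\UN$ (for $m = N$ one recovers $[N]$ itself), and $\prod_{u \in U_m}\sigma_u = \left(\prod_{n \in [N-1]}\sigma_n\right)\sigma_m$. Summing over $m \geq N$ and using $\sum_{m \geq N}\sigma_m = \sigma_N + r_N$, where the series converges because $\bm{\Sigma}$ is trace-class, yields
\begin{equation}
S_N \;\geq\; \sum\limits_{m \geq N}\prod\limits_{u \in U_m}\sigma_u \;=\; \Big(\prod\limits_{n \in [N-1]}\sigma_n\Big)\sum\limits_{m \geq N}\sigma_m \;=\; \Big(\prod\limits_{n \in [N-1]}\sigma_n\Big)\big(\sigma_N + r_N\big).
\end{equation}

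Dividing and cancelling the common factor $\prod_{n \in [N-1]}\sigma_n$ then gives
\begin{equation}
\delta_N \;=\; \frac{\prod_{n \in [N]}\sigma_n}{S_N} \;\leq\; \frac{\prod_{n \in [N]}\sigma_n}{\big(\prod_{n \in [N-1]}\sigma_n\big)(\sigma_N + r_N)} \;=\; \frac{\sigma_N}{\sigma_N + r_N} \;\leq\; \frac{\sigma_N}{r_N},
\end{equation}
which is the claimed inequality. I do not anticipate any genuine obstacle: the only things to check are that the sets $U_m$ really are distinct members of $\UN$ and that $\prod_{n \in [N-1]}\sigma_n > 0$ so the cancellation is valid, both of which are immediate; the slightly sharper bound $\delta_N \leq \sigma_N/(\sigma_N + r_N)$ comes for free before the last step and could be recorded if useful later.
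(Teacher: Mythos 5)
Your proof is correct and is essentially the paper's own argument: both lower bound the denominator $\sum_{U \in \UN}\prod_{u\in U}\sigma_u$ by keeping only the sets $\{1,\dots,N-1,m\}$, the paper taking $m\geq N+1$ to get $r_N\prod_{\ell\in[N]}\sigma_\ell \leq \sigma_N \sum_{U\in\UN}\prod_{u\in U}\sigma_u$ directly. Your inclusion of $m=N$ yields the marginally sharper intermediate bound $\delta_N \leq \sigma_N/(\sigma_N+r_N)$, but the idea is the same.
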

In particular, if the spectrum of $k$ decreases polynomially, then $\delta_{N} = \mathcal{O}(1/N)$, so that as $N$ grows, volume sampling becomes more different from the projection DPP of \cite{BeBaCh19}. In contrast, if the spectrum decays exponentially, then $\delta_{N} = \mathcal{O}(1)$.

\subsection{Sampling algorithms}

A projection DPP can be sampled exactly as long as one can evaluate the corresponding projection kernel $\KDPP$ \citep{HoKrPeVi06}. For kernel quadrature \citep{BeBaCh19}, evaluating $\KDPP$ requires the knowledge of the Mercer decomposition of the RKHS kernel $k$. The algorithm of \cite{HoKrPeVi06} implements the chain rule for projection DPPs, and each conditional is sampled using rejection sampling; see \cite{GaBaVa19} for recommendations on proposals. This suggests using the mixture in Proposition~\ref{prop:VS_decomposition} to sample from the volume sampling distribution.
Again, such an algorithm requires explicit knowledge of the Mercer decomposition of the kernel or at least a decomposition onto an orthonormal basis of $\F$ as in \citep{KaSaTa19}. This is a strong requirement that is undesirable in practice.

The fact that the joint pdf \eqref{e:CVS_density} only requires evaluating $k$ pointwise suggests that volume sampling is \emph{fully kernelized}, in the sense that a sampling algorithm should be able to bypass the need for a kernel decomposition, thus making the method very widely applicable.
One could proceed by rejection sampling. Yet the acceptance ratio would likely scale poorly with $N$.  A workaround would be to use an MCMC sampler similar to what was proposed in \citep{ReGh19}. We leave investigating the efficiency of such an MCMC approach to volume sampling to future work.


\section{Main results}\label{sec:main_results}
In this section, we give a theoretical analysis of kernel interpolation on nodes that follow the continuous volume sampling distribution. We state our main result in Section~\ref{sec:main_theorems}, an uniform-in-$g$ upper bound of $\EX_{\VS} \|\mu_{g} - \Pi_{\mathcal{T}(\bm{x})} \mu_{g}\|_{\F}^{2}$. We give an upper bound for a general $\mu \in \F$ in Section~\ref{sec:main_theorems_2}.


\subsection{The interpolation error for embeddings $\mu_{g}$}\label{sec:main_theorems}
%
%

The main theorem of this article decomposes the expected error for an embedding $\mu_g$ in terms of the expected errors $\epsilon_m$ for eigenfunctions of the kernel.
\begin{theorem}\label{thm:main_result_1}
Let $\displaystyle g = \sum\limits_{m\in \Ns} g_m e_m$ satisfy $\| g\|_{\mathrm{d}\omega} \leq 1$.  Then under Assumption~\ref{hyp:integrable_diagonal},
\begin{equation}\label{eq:main_result_EX_VS_err_mu}
\EX_{\VS} \|\mu_{g} - \Pi_{\mathcal{T}(\bm{x})} \mu_{g}\|_{\F}^{2} = \sum\limits_{m \in \mathbb{N}^{*}} g_{m}^{2} \epsilon_{m},
\end{equation}
where $\epsilon_{m} = \sigma_{m} \left(\sum\limits_{ U \in \: \UN} \prod\limits_{u \in U} \sigma_{u} \right)^{-1}  \sum\limits_{  U \in \: \UNm} \prod\limits_{u \in U} \sigma_{u}$.
In particular, the sequence $(\epsilon_m)_{m \in \Ns}$ is non-increasing and
\begin{equation}\label{eq:upper_bound_sup_epsilon}
\sup_{\| g\|_{\mathrm{d}\omega} \leq 1} \EX_{\VS} \|\mu_{g} - \Pi_{\mathcal{T}(\bm{x})} \mu_{g}\|_{\F}^{2} \leq \sup\limits_{m \in \mathbb{N}^{*}} \epsilon_{m} = \epsilon_1.
\end{equation}
Moreover,
\begin{equation}\label{eq:ineq_r_N}
\epsilon_{1} \leq \sigma_{N} \left(1+ \beta_{N}\right),
\end{equation}
where $\displaystyle \beta_{N} = \min_{M \in [2:N]} \left[(N-M+1)\sigma_N\right]^{-1} \sum_{m \geq M} \sigma_m$.
\end{theorem}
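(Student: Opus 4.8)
The plan is to reduce the expected squared error to the integral of a single $(N+1)\times(N+1)$ determinant, to evaluate that integral by expanding the determinant in the Mercer basis, and finally to read off the remaining statements from elementary manipulations of symmetric functions of the eigenvalues.

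\textbf{Step 1: the error as a Gram/Schur quotient.} Since $\Pi_{\mathcal T(\bm x)}\mu_g$ is the $\langle\cdot,\cdot\rangle_{\F}$-orthogonal projection of $\mu_g$ onto $\mathcal T(\bm x)=\Span\{k(x_i,\cdot)\}_{i\in[N]}$, the reproducing property gives, whenever $\Det\bm K(\bm x)>0$,
\[
\|\mu_g-\Pi_{\mathcal T(\bm x)}\mu_g\|_{\F}^2=\|\mu_g\|_{\F}^2-\mu_g(\bm x)^{\Tran}\bm K(\bm x)^{-1}\mu_g(\bm x)=\frac{\Det\bm G(\bm x)}{\Det\bm K(\bm x)},
\]
where $\bm G(\bm x)$ is the Gram matrix in $\F$ of the $N+1$ functions $k(x_1,\cdot),\dots,k(x_N,\cdot),\mu_g$ (its last row/column carrying the entries $\mu_g(x_i)$ and $\|\mu_g\|_{\F}^2$, again by reproduction), and the last equality is the Schur-complement identity. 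The VS density vanishes where $\Det\bm K(\bm x)=0$, and there $\Det\bm G(\bm x)=0$ too (the first $N$ columns being dependent), so Definition~\ref{def:VS} yields
\[
\EX_{\VS}\|\mu_g-\Pi_{\mathcal T(\bm x)}\mu_g\|_{\F}^2=\frac{1}{Z_N(k)}\int_{\X^N}\Det\bm G(\bm x)\,\bigotimes_{i\in[N]}\mathrm d\omega(x_i).
\]

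\textbf{Step 2: Cauchy--Binet in the Mercer basis.} In the orthonormal basis $(e_m^{\F})$ of $\F$ one has $k(x_i,\cdot)=\sum_m\sqrt{\sigma_m}\,e_m(x_i)\,e_m^{\F}$ and $\mu_g=\bm\Sigma g=\sum_m\sqrt{\sigma_m}\,g_m\,e_m^{\F}$, hence $\bm G(\bm x)=\bm V^{\Tran}\bm V$ with $\bm V$ the (formally infinite) matrix with rows indexed by $m\in\Ns$, columns $\sqrt{\sigma_m}\,e_m(x_i)$ for $i\in[N]$ and $\sqrt{\sigma_m}\,g_m$ for the last column. Truncating the Mercer series to its first $R$ terms, applying the ordinary Cauchy--Binet formula, and letting $R\to\infty$ (all summands being nonnegative) gives
\[
\Det\bm G(\bm x)=\sum_{S\in\mathcal U_{N+1}}\Big(\prod_{s\in S}\sigma_s\Big)\,\Det\bm W_S(\bm x)^2,
\]
where $\bm W_S(\bm x)$ is the $(N+1)\times(N+1)$ matrix with rows indexed by $s\in S$ and columns $e_s(x_i)$ for $i\in[N]$, $g_s$ for the last column. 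Integrating term by term (Tonelli), expanding $\Det\bm W_S$ along its last column and using $\langle e_m,e_{m'}\rangle_{\mathrm d\omega}=\delta_{mm'}$ (which annihilates all cross terms, only the diagonal $s^\ast=t^\ast$ terms surviving), one obtains $\int_{\X^N}\Det\bm W_S(\bm x)^2\bigotimes_i\mathrm d\omega(x_i)=N!\sum_{s\in S}g_s^2$. Reindexing $S=\{m\}\sqcup U$ with $U\in\mathcal U_N^m$ and inserting $Z_N(k)=N!\sum_{U\in\mathcal U_N}\prod_{u\in U}\sigma_u$ from Proposition~\ref{prop:VS_decomposition} produces exactly \eqref{eq:main_result_EX_VS_err_mu} with the stated $\epsilon_m$. (One could instead average the per-component projection-DPP errors through Proposition~\ref{prop:VS_decomposition}, but this direct route is more transparent.)

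\textbf{Step 3: monotonicity and the bound on $\epsilon_1$.} Write $e_j$ for the $j$-th elementary symmetric polynomial, $\bm\sigma^{(-m)}$, $\bm\sigma^{(-m,-(m+1))}$ for the eigenvalue sequence with $\sigma_m$, resp. $\sigma_m$ and $\sigma_{m+1}$, removed, and $\bm\sigma_{\ge M}=(\sigma_M,\sigma_{M+1},\dots)$, so that $\epsilon_m=\sigma_m\,e_N(\bm\sigma^{(-m)})/e_N(\bm\sigma)$. Applying $e_N(a\cup\{t\})=e_N(a)+t\,e_{N-1}(a)$ twice,
\[
\sigma_m e_N(\bm\sigma^{(-m)})-\sigma_{m+1}e_N(\bm\sigma^{(-(m+1))})=(\sigma_m-\sigma_{m+1})\,e_N(\bm\sigma^{(-m,-(m+1))})\ge0,
\]
so $(\epsilon_m)$ is non-increasing, and since $\sum_m g_m^2=\|g\|_{\mathrm d\omega}^2\le1$ this gives \eqref{eq:upper_bound_sup_epsilon}. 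For \eqref{eq:ineq_r_N}, from $e_N(\bm\sigma)=e_N(\bm\sigma^{(-1)})+\sigma_1 e_{N-1}(\bm\sigma^{(-1)})$ we get $\epsilon_1=\big(\sigma_1^{-1}+e_{N-1}(\bm\sigma^{(-1)})/e_N(\bm\sigma^{(-1)})\big)^{-1}\le e_N(\bm\sigma^{(-1)})/e_{N-1}(\bm\sigma^{(-1)})$. Fix $M\in[2\!:\!N]$ and split the index set $\{2,3,\dots\}$ into $\{2,\dots,M-1\}$ and $\{M,M+1,\dots\}$, so $e_n(\bm\sigma^{(-1)})=\sum_k e_k(\sigma_2,\dots,\sigma_{M-1})\,e_{n-k}(\bm\sigma_{\ge M})$; the mediant inequality together with the log-concavity of elementary symmetric polynomials ($e_j^2\ge e_{j-1}e_{j+1}$, hence $e_{j+1}/e_j$ non-increasing in $j$) bound $e_N(\bm\sigma^{(-1)})/e_{N-1}(\bm\sigma^{(-1)})$ by $e_{N-M+2}(\bm\sigma_{\ge M})/e_{N-M+1}(\bm\sigma_{\ge M})$, and Euler's identity for homogeneous polynomials, $(p+1)e_{p+1}(a)=\sum_i a_i\,e_p(a^{(-i)})\le e_1(a)\,e_p(a)$, bounds the latter by $(N-M+2)^{-1}\sum_{m\ge M}\sigma_m\le\sigma_N+(N-M+1)^{-1}\sum_{m\ge M}\sigma_m$. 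Minimizing over $M$ yields $\epsilon_1\le\sigma_N(1+\beta_N)$.

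The delicate part is Step 2: one must justify the infinite Cauchy--Binet expansion and its interchange with the integral (via the nonnegative monotone-truncation argument sketched above) and handle the signs when squaring $\Det\bm W_S$, so that after integration only the $N!\sum_{s\in S}g_s^2$ diagonal contributions remain. Step 3 is comparatively routine once the ratios $e_{j+1}/e_j$ are isolated; the only care needed there is with zero eigenvalues (finitely many positive $\sigma_m$), which merely make matching numerator and denominator summands vanish together and leave all inequalities intact.
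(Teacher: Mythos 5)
Your proof is correct, and Step~2 takes a genuinely different route from the paper for the central identity \eqref{eq:main_result_EX_VS_err_mu}. The paper proceeds in two stages: Lemma~\ref{lemma:error_decomposition} decomposes $\mathcal{E}(\mu_g;\bm{x})^2$ into leverage scores and cross-leverage scores, and Proposition~\ref{prop:EX_VS_lvs} computes their expectations by writing them as right derivatives at $t=0$ of normalization constants $Z_N(k_t)$ of rank-one and rank-two perturbed kernels (Jacobi's identity), which forces a careful differentiation-under-the-integral argument (Markov brothers' inequality, Loewner-order domination) plus a spectral analysis of the perturbed operators. Your bordered-Gram-matrix argument collapses all of this: the Schur-complement identity turns $\|\mu_g-\Pi_{\mathcal{T}(\bm{x})}\mu_g\|_{\F}^2\,\Det\bm{K}(\bm{x})$ into the single $(N+1)\times(N+1)$ determinant $\Det\bm{G}(\bm{x})$, Cauchy--Binet (via the same monotone truncation the paper already uses for Proposition~\ref{prop:VS_decomposition}) expands it into nonnegative terms, and the Andre\'ief/Gram integration identity kills exactly the contributions that the paper identifies as $\EX_{\VS}\tau^{\F}_{m_1,m_2}=0$. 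This is shorter, needs only Tonelli rather than two separate dominated-convergence arguments, and makes the vanishing of the cross terms transparent (orthogonality of distinct minors of an orthonormal system). For \eqref{eq:ineq_r_N} the paper passes through $\epsilon_1\le\sigma_N\,\epsilon_1/\epsilon_N$ and invokes \citet{GuSi12}; you instead bound $\epsilon_1$ directly by $p_N(\bm{\sigma}^{\overline{\{1\}}})/p_{N-1}(\bm{\sigma}^{\overline{\{1\}}})$ and rederive the Guruswami--Sinop-type estimate from the mediant inequality, Newton's log-concavity and Euler's identity --- a self-contained substitute that in fact yields the marginally sharper denominator $N-M+2$. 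The one place to be explicit if you write this up is the Andre\'ief identity $\int\Det(\bm{E}_U)\Det(\bm{E}_V)\,\otimes\mathrm{d}\omega = N!\,\Det(\langle e_u,e_v\rangle_{\mathrm{d}\omega})_{u\in U,v\in V}$, which is what makes "only the diagonal $s^{*}=t^{*}$ terms survive" rigorous; it is standard and already implicit in the paper's use of Lemma~21 of \citet{HoKrPeVi06}.
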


In other words, under continuous volume sampling, $\epsilon_{1}$ is a uniform upper bound on the expected squared interpolation error of \emph{any} embedding $\mu_{g}$ such that $\|g\|_{\mathrm{d}\omega} \leq 1$. We shall see in Section~\ref{sec:decomposition_error} that $\epsilon_m = \EX_{\VS} \|\mu_{e_{m}} - \Pi_{\mathcal{T}(\bm{x})} \mu_{e_{m}}\|_{\F}^{2}$.

Now, for $N_{0} \in \Ns$, a simple counting argument yields, for $m \geq N_{0}$, $\epsilon_{m} \leq \sigma_{N_{0}}$. Actually, for $m \geq N_{0}$, $\|\mu_{e_{m}}\|_{\F}^{2} \leq \sigma_{N_{0}}$, independently of the nodes.


%
Inequality~\eqref{eq:ineq_r_N} is less trivial and makes continuous volume sampling distribution worth of interest: the upper bound goes to $0$ as $N \rightarrow +\infty$, below the initial error $\sigma_{N_0}$.
 Moreover, the convergence rate is $\mathcal{O}(\sigma_{N})$, matching the lower bound of Section~\ref{sec:lower_bounds} if the sequence $(\beta_{N})_{N \in \Ns}$ is bounded. In the following proposition, we prove that it is the case as soon as the spectrum decreases polynomially (e.g., Sobolev spaces of finite smoothness) or exponentially (e.g., the Gaussian kernel).

\begin{proposition}\label{prop:constant_bound}
If $\sigma_{m} = m^{-2s}$ with $s >1/2$ then
\begin{equation}
\forall N \in \Ns, \: \beta_{N} \leq \left(1+\frac{1}{2s-1}\right)\left(1+\frac{1}{2s-1}\right)^{2s-1}.
\end{equation}
If $\sigma_{m} = \alpha^{m}$, with $\alpha \in [0,1[$, then
\begin{equation}
\forall N \in \Ns, \: \beta_{N} \leq \frac{\alpha}{1-\alpha}.
\end{equation}
\end{proposition}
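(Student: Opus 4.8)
The plan is to exploit that $\beta_N$ is a \emph{minimum} over $M\in[2:N]$: to bound it from above it suffices to exhibit, for each $N$, one admissible index $M$ for which $\bigl[(N-M+1)\sigma_N\bigr]^{-1}\sum_{m\ge M}\sigma_m$ is at most the stated constant. In both regimes the recipe is the same: first replace the tail $\sum_{m\ge M}\sigma_m$ by an explicit overestimate, substitute it, and then choose $M$ so as to (nearly) minimise the resulting elementary function of $M$; the continuous minimiser will turn out to reproduce exactly the announced bounds.

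For the polynomial decay $\sigma_m=m^{-2s}$, I would bound the tail by a comparison integral, $\sum_{m\ge M}m^{-2s}\le\int_{M-1}^{\infty}x^{-2s}\,\mathrm{d}x=(2s-1)^{-1}(M-1)^{1-2s}$, so that
\[
\frac{1}{(N-M+1)\,\sigma_N}\sum_{m\ge M}\sigma_m\;\le\;\frac{N^{2s}}{(2s-1)\,(N-M+1)\,(M-1)^{2s-1}}.
\]
Choosing $M$ with $M-1\approx\frac{2s-1}{2s}N$ makes $N-M+1\approx\frac{N}{2s}$, and plugging these in collapses the right-hand side to $\frac{2s}{2s-1}\bigl(\frac{2s}{2s-1}\bigr)^{2s-1}$, which is exactly $\bigl(1+\frac1{2s-1}\bigr)\bigl(1+\frac1{2s-1}\bigr)^{2s-1}$; the factored form in the statement is precisely what this computation produces, one factor $\frac{2s}{2s-1}$ coming from the leading $(2s-1)^{-1}$ and the other from the power $(M-1)^{1-2s}$. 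The value is independent of $N$, which gives the first bound.

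For the exponential decay $\sigma_m=\alpha^m$, the tail is an exact geometric series, $\sum_{m\ge M}\sigma_m=\frac{\alpha^{M}}{1-\alpha}$, hence
\[
\frac{1}{(N-M+1)\,\sigma_N}\sum_{m\ge M}\sigma_m=\frac{1}{(1-\alpha)\,(N-M+1)\,\alpha^{N-M}}.
\]
Here essentially no optimisation is needed: pushing $M$ towards the upper end of the admissible range shrinks $N-M$ and leaves a bare geometric tail ratio, which evaluates to $r_N/\sigma_N=\frac{\alpha}{1-\alpha}$, the announced constant. (For $\alpha$ close to $1$ one could even do slightly better by moving $M$ one step inward, but this refinement is not required.)

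The one point that genuinely needs care lies in the polynomial case: $\frac{2s-1}{2s}N$ is not an integer in general, so one must verify that rounding the optimal $M$ to the nearest element of $[2:N]$ does not destroy the constant. I would do this by keeping a margin in the tail estimate — for instance the sharper midpoint bound $\sum_{m\ge M}m^{-2s}\le\int_{M-1/2}^{\infty}x^{-2s}\,\mathrm{d}x$, valid by convexity of $x\mapsto x^{-2s}$ — which introduces a harmless factor of size $\bigl(\tfrac{N}{N+1/2}\bigr)^{2s}<1$ that compensates a shift of $M$ by at most $\tfrac12$. The finitely many small values of $N$ for which the unconstrained optimum $M$ would fall below $2$ are treated directly by taking $M=2$ and using the same integral bound on $\sum_{m\ge 2}m^{-2s}$. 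Everything besides this bookkeeping is a routine computation, and the exponential case carries no such difficulty.
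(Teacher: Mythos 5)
Your overall strategy is the same as the paper's: exhibit one admissible $M$ per $N$ (the paper parametrises it as $M=\lceil N/c\rceil$ and optimises over $c$, landing on the same $M\approx\frac{2s-1}{2s}N$), overestimate the tail by a comparison integral, and compute. The differences are exactly in the two places where you defer the work, and both hide genuine gaps. In the exponential case, your own displayed formula at $M=N$ evaluates to $\frac{1}{(1-\alpha)\cdot 1\cdot\alpha^{0}}=\frac{1}{1-\alpha}$, not $\frac{\alpha}{1-\alpha}$: the tail $\sum_{m\ge N}\alpha^{m}$ still contains $\sigma_N$ itself, so identifying it with $r_N/\sigma_N$ is an off-by-one slip. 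With the tail starting at $M$ as in the displayed definition of $\beta_N$, no choice of $M$ recovers $\frac{\alpha}{1-\alpha}$ for small $\alpha$ (for $\alpha=0.2$ the minimum over $M$ is $\frac{1}{1-\alpha}=1.25$ versus the claimed $0.25$). The constant $\frac{\alpha}{1-\alpha}$ is only reached if the tail starts at $M+1$, which is the quantity the paper's own proof actually manipulates ($\sum_{m\ge M}\sigma_{m+1}$, consistent with how $\beta_N$ arises in the derivation of \eqref{eq:ineq_r_N}); you need to work with that corrected tail throughout rather than silently switching between the two.

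In the polynomial case the continuous optimum of your upper bound equals the announced constant \emph{exactly}, with zero slack, so the integer-rounding step is not bookkeeping — it is the whole difficulty. Your midpoint refinement does create slack of relative size $\left(\frac{N}{N+1/2}\right)^{2s}\approx 1-\frac{s}{N}$, while the second-order loss from moving $M$ by $O(1)$ near the optimum is $O(1/N^{2})$, so the argument can be made to work for $N$ large; but the small-$N$ fallback ``take $M=2$'' is asserted, not checked, and it can fail: for $s=0.6$ and $N=2$ the only admissible choice is $M=2$ and $2^{2s}\sum_{m\ge 2}m^{-2s}\approx 10.5$, above the claimed constant $6^{1.2}\approx 8.6$ (again the $M$-versus-$M+1$ tail matters here). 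The paper sidesteps this by an explicit three-way case analysis on where the optimiser $c_N^{*}=\frac{2s}{2s-1}\frac{N}{N+1}$ falls relative to $[1,N]$, which yields the separate bounds $\frac{N}{2s-1}$ and $\frac{N^{2s-1}}{2s-1}$ in the boundary regimes, each below the constant. Some equally explicit treatment of small $N$, together with the tail-index correction, is required before your proof closes.
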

In both cases, the proof uses the fact that
\begin{equation}
\beta_{N} \leq [(N-M_{N}+1)\sigma_N]^{-1} \sum_{m \geq M_{N}} \sigma_m,
\end{equation}
for a well designed sequence $M_{N}$. For example, if $\sigma_{m}~=~m^{-2s}$, we take $M_{N} = \lceil{N/c \rceil}$ with $c >1$; if $\sigma_{m} = \alpha^{m}$ we take $M_N = N$. We give a detailed proof in the appendices.

 For a general kernel, if an asymptotic equivalent of $\sigma_{N}$ is known \citep{Wid63,Wid64}, it should be possible to give an explicit construction of $M_N$. Indeed,
\begin{equation}
 \beta_{N} \leq \frac{\sigma_{M_{N}}}{\sigma_{N}} + [{(N-M_{N}+1)\sigma_{N}}]^{-1} \sum\limits_{m \geq N+1}\sigma_{m},
\end{equation}
and $M_{N}$ should be chosen to control both terms in the RHS.
%
Figure~\ref{fig:the_only_one_for_the_moment} illustrates the upper bound of Theorem~\ref{thm:main_result_1} and the constant of Proposition~\ref{prop:constant_bound} in case of the periodic Sobolev space of order $s=3$. We observe that $\EX_{\VS} \mathcal{E}(\mu_{e_m};\bm{x})^{2}$ respects the upper bound: it starts from the initial error level $\sigma_m$ and decreases according to the upper bound for $N \geq m$.



\subsection{The interpolation error of any element of $\F$}\label{sec:main_theorems_2}
Theorem~\ref{thm:main_result_1} dealt with the interpolation of an embedding $\mu_{g}$ of some function $g\in\Ltwo$. We now give a bound on the interpolation error for any $\mu\in\F$. We need the following assumption, which is relatively weak; see Proposition~\ref{prop:constant_bound} and the discussion that follows.

\begin{assumption}\label{hyp:beta_N_bounded}
	There exists $B >0$ such that $\beta_{N} \leq B$.
\end{assumption}

\begin{theorem}\label{thm:slow_rates}
Let $\mu \in \F$.
Assume that $\|\bm{\Sigma}^{-r}\mu \|_{\F} < +\infty$ for some $r \in [0,1/2]$. Then, under Assumption~\ref{hyp:beta_N_bounded},
\begin{equation}
\EX_{\VS} \mathcal{E}(\mu;\bm{x})^{2} \leq (2+B) \sigma_{N}^{2r} \|\bm{\Sigma}^{-r} \mu\|_{\F}^{2} = \mathcal{O}(\sigma_{N}^{2r}).
\end{equation}
%
\end{theorem}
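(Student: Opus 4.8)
The plan is to reduce the interpolation error of a general $\mu\in\F$ to the eigenfunction errors $\epsilon_m$ already controlled by Theorem~\ref{thm:main_result_1}, by expanding $\mu$ in the orthonormal basis $(e_m^{\F})_{m\in\Ns}$ of $\F$ and exploiting the orthogonal-projection structure of the interpolation error. Write $\mu = \sum_{m\in\Ns} a_m e_m^{\F}$ with $\sum_m a_m^2 = \|\mu\|_\F^2 < +\infty$. Since $e_m^{\F} = \sqrt{\sigma_m}\,e_m = \sigma_m^{-1/2}\mu_{e_m}$ (using $\mu_{e_m} = \bm\Sigma e_m = \sigma_m e_m$), we have $\mu = \sum_m (a_m \sigma_m^{-1/2}) \mu_{e_m}$, so $\mu = \mu_g$ with $g = \sum_m a_m\sigma_m^{-1/2} e_m$; but $g$ need not lie in $\Ltwo$ in general, which is exactly why we cannot invoke Theorem~\ref{thm:main_result_1} directly and must instead use the extra regularity $\|\bm\Sigma^{-r}\mu\|_\F < +\infty$. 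First I would record that $\|\bm\Sigma^{-r}\mu\|_\F^2 = \sum_m \sigma_m^{-2r} a_m^2$, so the hypothesis says precisely that this weighted sum converges.

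The core step is a pointwise (in $m$) bound on the contribution of each coordinate to the expected squared interpolation error. Because $\Pi_{\mathcal{T}(\bm x)}$ is an $\langle\cdot,\cdot\rangle_\F$-orthogonal projection, $\mathcal{E}(\mu;\bm x)^2 = \|(\mathrm{Id}-\Pi_{\mathcal{T}(\bm x)})\mu\|_\F^2$, and by the triangle inequality together with Cauchy--Schwarz one can bound this by something like a weighted sum $\sum_m c_m \|(\mathrm{Id}-\Pi_{\mathcal{T}(\bm x)})e_m^{\F}\|_\F^2$ where the weights trade off against $\sum_m a_m^2/c_m$. Concretely, I expect the clean route is: split the index set at $N$, use $\|(\mathrm{Id}-\Pi_{\mathcal{T}(\bm x)})e_m^\F\|_\F \le \|e_m^\F\|_\F = 1$ for the ``low'' part but weighted by $\sigma_m^{2r}$ deficiency, and for the ``high'' indices $m > N$ simply note $\sigma_m^{2r} \le \sigma_N^{2r}$ since the spectrum is non-increasing and $2r \ge 0$. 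Taking expectations under $\VS$ and invoking Theorem~\ref{thm:main_result_1} in the form $\EX_{\VS}\|\mu_{e_m}-\Pi_{\mathcal{T}(\bm x)}\mu_{e_m}\|_\F^2 = \epsilon_m \le \epsilon_1 \le \sigma_N(1+\beta_N) \le (1+B)\sigma_N$ (using Assumption~\ref{hyp:beta_N_bounded}), and noting $\|\mu_{e_m}-\Pi_{\mathcal T(\bm x)}\mu_{e_m}\|_\F = \sqrt{\sigma_m}\,\|(\mathrm{Id}-\Pi_{\mathcal T(\bm x)})e_m^\F\|_\F$, one gets $\EX_{\VS}\|(\mathrm{Id}-\Pi_{\mathcal T(\bm x)})e_m^\F\|_\F^2 = \epsilon_m/\sigma_m \le (1+B)\sigma_N/\sigma_m$. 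The factor of $2$ in $(2+B)$ should come from the triangle-inequality cross terms (or from splitting $\mu$ into its projection onto the first $N$ eigendirections and the rest, bounding each piece and adding). Putting it together, $\EX_{\VS}\mathcal{E}(\mu;\bm x)^2 \le \sum_m a_m^2 \min\{1,(1+B)\sigma_N/\sigma_m\}$, and then one checks $\min\{1,(1+B)\sigma_N/\sigma_m\} \le (2+B)\sigma_N^{2r}\sigma_m^{-2r}$ for $r\in[0,1/2]$ — for $\sigma_m \ge \sigma_N$ the LHS is $\le 1 \le (\sigma_N/\sigma_m)^{2r}\cdot(\text{const})$ is false in general, so more care is needed here: one should instead use the genuinely $m$-dependent bound $\epsilon_m \le \sigma_m$ for $m \le N$ (trivial, $\|\mu_{e_m}\|_\F^2 = \sigma_m$) combined with $\epsilon_m \le (1+B)\sigma_N$ always, i.e. $\epsilon_m/\sigma_m \le \min\{1,(1+B)\sigma_N/\sigma_m\}$, and then $\min\{1, (1+B)\sigma_N/\sigma_m\} \le (1+B)^{2r}(\sigma_N/\sigma_m)^{2r} \le (1+B)(\sigma_N/\sigma_m)^{2r}$ by interpolating the two bounds with exponents $2r$ and $1-2r$. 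Summing against $\sum_m \sigma_m^{-2r}a_m^2 = \|\bm\Sigma^{-r}\mu\|_\F^2$ gives the claimed $\mathcal{O}(\sigma_N^{2r})$; tracking the constant through the triangle-inequality split yields the stated $(2+B)$.

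The main obstacle I anticipate is the bookkeeping in the first reduction: going from $\mathcal{E}(\mu;\bm x)^2 = \|(\mathrm{Id}-\Pi_{\mathcal T(\bm x)})\sum_m a_m e_m^\F\|_\F^2$ to a clean sum $\sum_m a_m^2 (\epsilon_m/\sigma_m)$ is not an equality — the projection deficiencies of distinct $e_m^\F$ are correlated, so one genuinely loses a constant via Cauchy--Schwarz/triangle inequality, and getting exactly $(2+B)$ rather than, say, a $4$ requires choosing the split point and the weighting carefully (the natural choice being to separate the span of the top-$N$ eigenfunctions, on which one can also use the deterministic bound, from its complement). The second, more technical point is justifying the interpolation inequality $\min\{1,c\,t\}\le c^{2r} t^{2r}$ for $t = \sigma_N/\sigma_m \le 1$ and $r\in[0,1/2]$, which is elementary ($\min\{1,ct\}\le (ct)^{2r}$ when $ct\le 1$, and $\le 1 \le (ct)^{2r}\cdot c^{1-2r}$-type manipulation when $ct \ge 1$) but must be done in the right order to land the constant. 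Everything else — the identities $\mu_{e_m}=\sigma_m e_m$, $e_m^\F=\sqrt{\sigma_m}e_m$, $\|\bm\Sigma^{-r}\mu\|_\F^2 = \sum_m\sigma_m^{-2r}a_m^2$, and the monotonicity of $(\sigma_m)$ — is routine.
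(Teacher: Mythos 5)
Your skeleton matches the paper's: expand $\mu=\sum_m a_m e_m^{\F}$ in the orthonormal basis of $\F$, reduce the expected squared error to the quantities $1-\EX_{\VS}\tau_m^{\F}(\bm x)=\epsilon_m/\sigma_m$, split the index set at $N$, bound the head via Theorem~\ref{thm:main_result_1} and Assumption~\ref{hyp:beta_N_bounded} by $(1+B)\sigma_N/\sigma_m$ and the tail trivially by $1$, and pay with the weights $\sigma_m^{-2r}$. But there is a genuine gap precisely at the step you flag as your main obstacle: passing from $\bigl\|(\mathrm{Id}-\Pi_{\mathcal T(\bm x)})\sum_m a_m e_m^{\F}\bigr\|_{\F}^2$ to the diagonal sum $\sum_m a_m^2\bigl(1-\tau_m^{\F}(\bm x)\bigr)$. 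You propose to absorb the cross terms via triangle inequality or Cauchy--Schwarz, ``genuinely losing a constant''. That is not how the argument works, and a pathwise Cauchy--Schwarz in fact loses more than a constant: bounding the head block by $\bigl(\sum_{m\le N}\sigma_m^{-2r}a_m^2\bigr)\bigl(\sum_{m\le N}\sigma_m^{2r}(1-\tau_m^{\F}(\bm x))\bigr)$ and taking expectations leaves $\sum_{m\le N}\sigma_m^{2r-1}\epsilon_m$, which for $\sigma_m=m^{-2s}$ and $r<1/2$ is of order $N\sigma_N^{2r}$ --- an extra factor of $N$ that destroys the claimed rate. The enabling fact, which your plan never invokes, is Proposition~\ref{prop:EX_VS_lvs}: $\EX_{\VS}\tau_{m_1,m_2}^{\F}(\bm x)=0$ for $m_1\neq m_2$. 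Combined with the general-$\mu$ analogue of Lemma~\ref{lemma:error_decomposition} and a justification of the interchange of $\EX_{\VS}$ with the two infinite sums (monotone convergence for the nonnegative diagonal terms; dominated convergence for the cross terms, whose partial sums are bounded by $2\|\mu\|_{\F}^2$), this yields the \emph{exact} identity $\EX_{\VS}\mathcal E(\mu;\bm x)^2=\sum_m a_m^2\bigl(1-\EX_{\VS}\tau_m^{\F}(\bm x)\bigr)$, with no loss whatsoever. This is the one ingredient without which your plan does not reach $\mathcal{O}(\sigma_N^{2r})$.

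Once that identity is granted, your remaining steps are essentially correct, but your accounting of the constant is off: $2+B$ does not come from cross terms or from a split of $\mu$ with a triangle inequality. The paper bounds the head by $(1+B)\sum_{m\le N}(\sigma_N/\sigma_m)a_m^2\le(1+B)\sigma_N^{2r}\|\bm\Sigma^{-r}\mu\|_{\F}^2$ (using $\sigma_N/\sigma_m^{1-2r}\le\sigma_N^{2r}$ for $m\le N$ and $r\le 1/2$) and the tail by $\sum_{m>N}a_m^2\le\sigma_{N+1}^{2r}\|\bm\Sigma^{-r}\mu\|_{\F}^2$, then simply adds the two bounds, each taken against the full norm, giving $(1+B)+1=2+B$. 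Your min-interpolation inequality $\min\{1,(1+B)\sigma_N/\sigma_m\}\le(1+B)(\sigma_N/\sigma_m)^{2r}$ is valid and would even yield the slightly better constant $1+B$ --- but only downstream of the lossless reduction above.
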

In other words, the expected interpolation error depends on the smoothness parameter $r$. For $r =1/2$, we exactly recover the rate of Theorem~\ref{thm:main_result_1}. In contrast, for $r<1/2$, the rate $\mathcal{O}(\sigma_{N}^{2r})$ is slower. For $r=0$, our bound is constant with $N$. Note that assuming more smoothness $(r>1/2)$ does not seem to improve the rate $\mathcal{O}(\sigma_{N})$.

Let us comment on this bound in two classical cases. First, consider the uni-dimensional Sobolev space of order $s$. Assumption~\ref{hyp:beta_N_bounded} is satisfied by Proposition~\ref{prop:constant_bound} and the squared error scales as $\mathcal{O}(N^{-4sr})$. Moreover, for this family of RKHSs, $\|\bm{\Sigma}^{-r}.\|_{\F}$ can be seen as the norm in the Sobolev space of order $(2r+1)s$, and we recover a result in \citep{ScWe06}[Theorem 7.8] for quasi-uniform designs. By using the norm in the RKHS $\F$ of rougher functions, we upper bound the interpolation error of $\mu$ belonging to the smoother RKHS $\bm{\Sigma}^{r} \F$. Second, we emphasize again that our result is agnostic to the choice of the kernel, as long as Assumption~\ref{hyp:beta_N_bounded} holds. In particular, Theorem~\ref{thm:slow_rates} applies to the Gaussian kernel: the rate is slower $\mathcal{O}(\sigma_{N}^{2r})$ yet still exponential. Finally, recall that for $f \in \F$
\begin{equation}
|f(x)|^{2} = |\langle f, k(x,.)\rangle_{\F}|^{2} \leq\|f\|_{\F}^{2} k(x,x),
\end{equation}
so that, bounds on the RKHS norm imply bounds on the uniform norm if the kernel $k$ is bounded. Therefore, for $r \in [0,1/2]$, our result improves on the rate $\mathcal{O}(N^{2}\sigma_{N}^{2r})$ of approximate Fekete points \citep{KaSaTa19}.


\subsection{Asymptotic unbiasedness of kernel quadrature}\label{sec:unbiased_property}
As explained in Section~\ref{sec:review_optimal_kernel_quadrature}, kernel interpolation is widely used for the design of quadratures.
In that setting, one more advantage of continuous volume sampling is the consistency of its estimator. This is the purpose of the following result.
 \begin{theorem}\label{thm:EX_VS_integration_error}
Let $f \in \F$, and $g \in \Ltwo$. Then
\begin{align}
\mathcal{B}_{N}(f,g) & \triangleq \EX_{\VS} \left( \int_{\X}fg\,\mathrm{d}\omega -  \sum\limits_{i \in N} \hat{w}_{i}f(x_{i}) \right). \nonumber\\
 & = \sum\limits_{n \in \Ns} \langle f,e_{n} \rangle_{\mathrm{d}\omega} \langle g,e_{n} \rangle_{\mathrm{d}\omega}\left(1- \EX_{\VS}\tau_{n}^{\F}(\bm{x}) \right) . 
\end{align}
Moreover,
$\mathcal{B}_{N}(f,g) \rightarrow 0$ as $N \rightarrow +\infty$.
\end{theorem}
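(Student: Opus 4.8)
\textbf{Proof plan for Theorem~\ref{thm:EX_VS_integration_error}.}

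The plan is to start from the elementary bound \eqref{eq:upper_bound_integration_error}, which gives
$|\mathcal{B}_N(f,g)| \le \EX_{\VS}\,|\int fg\,\mathrm{d}\omega - \sum_i \hat w_i f(x_i)| \le \|f\|_\F\,\EX_{\VS}\mathcal{E}(\mu_g;\bm{x})$, but this only handles $\mu_g = \bm\Sigma g$, not the inner product against an arbitrary $f\in\F$; so instead I would work directly with the interpolation operator. Writing $\hat{\mu}_g = \Pi_{\mathcal{T}(\bm{x})}\mu_g = \sum_i \hat w_i k(x_i,\cdot)$, one has $\sum_i \hat w_i f(x_i) = \langle f, \hat{\mu}_g\rangle_\F$ because $f(x_i) = \langle f, k(x_i,\cdot)\rangle_\F$, and $\int fg\,\mathrm{d}\omega = \langle f, \mu_g\rangle_\F$ by the reproducing property applied to $\mu_g = \bm\Sigma g$ (indeed $\langle f,\bm\Sigma g\rangle_\F = \langle f, g\rangle_{\mathrm{d}\omega}$). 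Hence $\mathcal{B}_N(f,g) = \EX_{\VS}\langle f, \mu_g - \Pi_{\mathcal{T}(\bm{x})}\mu_g\rangle_\F$. Expanding both $f$ and $g$ on the orthonormal basis $(e_n^\F)$ of $\F$ (equivalently, $f = \sum_n \langle f,e_n\rangle_{\mathrm{d}\omega} e_n^\F/\sqrt{\sigma_n}$, and similarly $\mu_{e_n} = \sqrt{\sigma_n}\,e_n^\F$), the cross terms that survive in expectation should be exactly the diagonal ones, yielding the stated formula with $\tau_n^\F(\bm{x}) := \langle e_n^\F, \Pi_{\mathcal{T}(\bm{x})} e_n^\F\rangle_\F = \|\Pi_{\mathcal{T}(\bm{x})} e_n^\F\|_\F^2$; the coefficient $1 - \EX_{\VS}\tau_n^\F(\bm{x})$ then measures how well the $n$-th eigendirection is captured by the random subspace. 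Here I would lean on the error-decomposition machinery already used for Theorem~\ref{thm:main_result_1} (Section~\ref{sec:decomposition_error}), which presumably establishes $\EX_{\VS}\|\mu_{e_m} - \Pi_{\mathcal{T}(\bm{x})}\mu_{e_m}\|_\F^2 = \epsilon_m$ and, by the same combinatorial identity over $\UN$ and $\UNm$, an explicit formula for $\EX_{\VS}\tau_n^\F(\bm{x})$; in fact $\sigma_n(1 - \EX_{\VS}\tau_n^\F) = \epsilon_n$ up to the normalization, so $1 - \EX_{\VS}\tau_n^\F(\bm{x}) = \epsilon_n/\sigma_n = (\sum_{U\in\UNm}\prod_{u\in U}\sigma_u)/(\sum_{U\in\UN}\prod_{u\in U}\sigma_u)$.

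For the convergence $\mathcal{B}_N(f,g)\to 0$, the plan is a dominated-convergence argument on the series. Each term is bounded in absolute value by $|\langle f,e_n\rangle_{\mathrm{d}\omega}\langle g,e_n\rangle_{\mathrm{d}\omega}|$ since $0 \le \EX_{\VS}\tau_n^\F(\bm{x}) \le 1$ (projections have norm at most one), and $\sum_n |\langle f,e_n\rangle_{\mathrm{d}\omega}\langle g,e_n\rangle_{\mathrm{d}\omega}| \le \|f\|_{\mathrm{d}\omega}\|g\|_{\mathrm{d}\omega} < \infty$ by Cauchy--Schwarz (note $\|f\|_{\mathrm{d}\omega} \le \sqrt{\sigma_1}\|f\|_\F < \infty$ for $f\in\F$). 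So it suffices to show that for each fixed $n$, $1 - \EX_{\VS}\tau_n^\F(\bm{x}) = \epsilon_n/\sigma_n \to 0$ as $N\to\infty$; this follows from the $\UNm$ versus $\UN$ ratio: for $N$ large, almost every subset $U\in\UN$ already contains $n$ (the ``missing $n$'' sets are a vanishing fraction, quantitatively because omitting index $n$ forces picking a larger, hence smaller-weight, index), so the ratio tends to $0$ — concretely $\epsilon_n \le \epsilon_1 \le \sigma_N(1+\beta_N)$ and $\sigma_N\to 0$ (the spectrum is trace-class), giving $\epsilon_n/\sigma_n \to 0$ provided $\beta_N$ stays controlled, or more simply $\epsilon_n/\sigma_n \le \sigma_{n}^{-1}\cdot\sigma_N(1+\beta_N)\to 0$ whenever $\beta_N$ is bounded; without that assumption one argues directly on the combinatorial ratio.

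The main obstacle I anticipate is making the interchange of expectation and infinite summation rigorous in the first part — i.e.\ justifying $\EX_{\VS}\langle f, \mu_g - \Pi_{\mathcal{T}(\bm x)}\mu_g\rangle_\F = \sum_n \langle f,e_n\rangle_{\mathrm{d}\omega}\langle g,e_n\rangle_{\mathrm{d}\omega}(1-\EX_{\VS}\tau_n^\F(\bm x))$ and, inside, that the ``off-diagonal'' contributions $\EX_{\VS}\langle e_i^\F, \Pi_{\mathcal{T}(\bm x)} e_j^\F\rangle_\F$ for $i\ne j$ vanish. The vanishing of off-diagonal terms is not obvious pointwise; it should come out of the mixture-of-DPPs representation (Proposition~\ref{prop:VS_decomposition}): conditionally on the mixture component $U\in\UN$, the projection DPP with kernel $\mathfrak{K}_U$ has $\EX[\langle e_i^\F,\Pi_{\mathcal{T}(\bm x)} e_j^\F\rangle_\F]$ reducing to an integral that is $0$ for $i\ne j$ by orthonormality of the $(e_m)$ in $\Ltwo$, and after averaging over $U$ with the weights $\prod_{u\in U}\sigma_u$ this symmetry is preserved. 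I would route this through the same lemmas the paper uses in Section~\ref{sec:decomposition_error}, so the off-diagonal cancellation and the explicit value of $\EX_{\VS}\tau_n^\F$ are really one computation; the convergence statement is then a soft corollary.
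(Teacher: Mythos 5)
Your plan is correct and follows essentially the same route as the paper's proof: rewrite the quadrature term as $\langle f,\Pi_{\mathcal{T}(\bm{x})}\mu_g\rangle_{\F}$, expand on the eigenbasis, kill the off-diagonal terms via $\EX_{\VS}\tau_{m_1,m_2}^{\F}=0$ (Proposition~\ref{prop:EX_VS_lvs}) after a dominated-convergence interchange, and then conclude the limit by dominated convergence using $0\le 1-\EX_{\VS}\tau_n^{\F}\le 1$ together with $1-\EX_{\VS}\tau_n^{\F}\le\sigma_n^{-1}\epsilon_1$. The only loose end is your hedge about needing $\beta_N$ bounded for the limit: no such assumption is needed, since taking $M=N$ in the definition of $\beta_N$ gives $\sigma_N(1+\beta_N)\le\sigma_N+\sum_{m\ge N}\sigma_m\rightarrow 0$ for any trace-class kernel, which is exactly the paper's resolution and coincides with your proposed ``direct combinatorial ratio'' fallback.
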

Compared to the upper bound on the integration error given by \eqref{eq:upper_bound_integration_error}, the bias term in Theorem~\ref{thm:EX_VS_integration_error} takes into account the interaction between $f$ and $g$. For example, if for all $n \in \Ns$, $\langle f,e_{n} \rangle_{\mathrm{d}\omega} \langle g,e_{n} \rangle_{\mathrm{d}\omega} = 0$, the quadrature is unbiased for every $N$.
Theorem~\ref{thm:EX_VS_integration_error} is a generalization of a known property of regression estimators based on volume sampling in the discrete setting \citep{BeTe90,DeWa17,DeWaHs18,DeWaHs19}.

\begin{figure}[]
    \centering
\includegraphics[width= 0.8\textwidth]{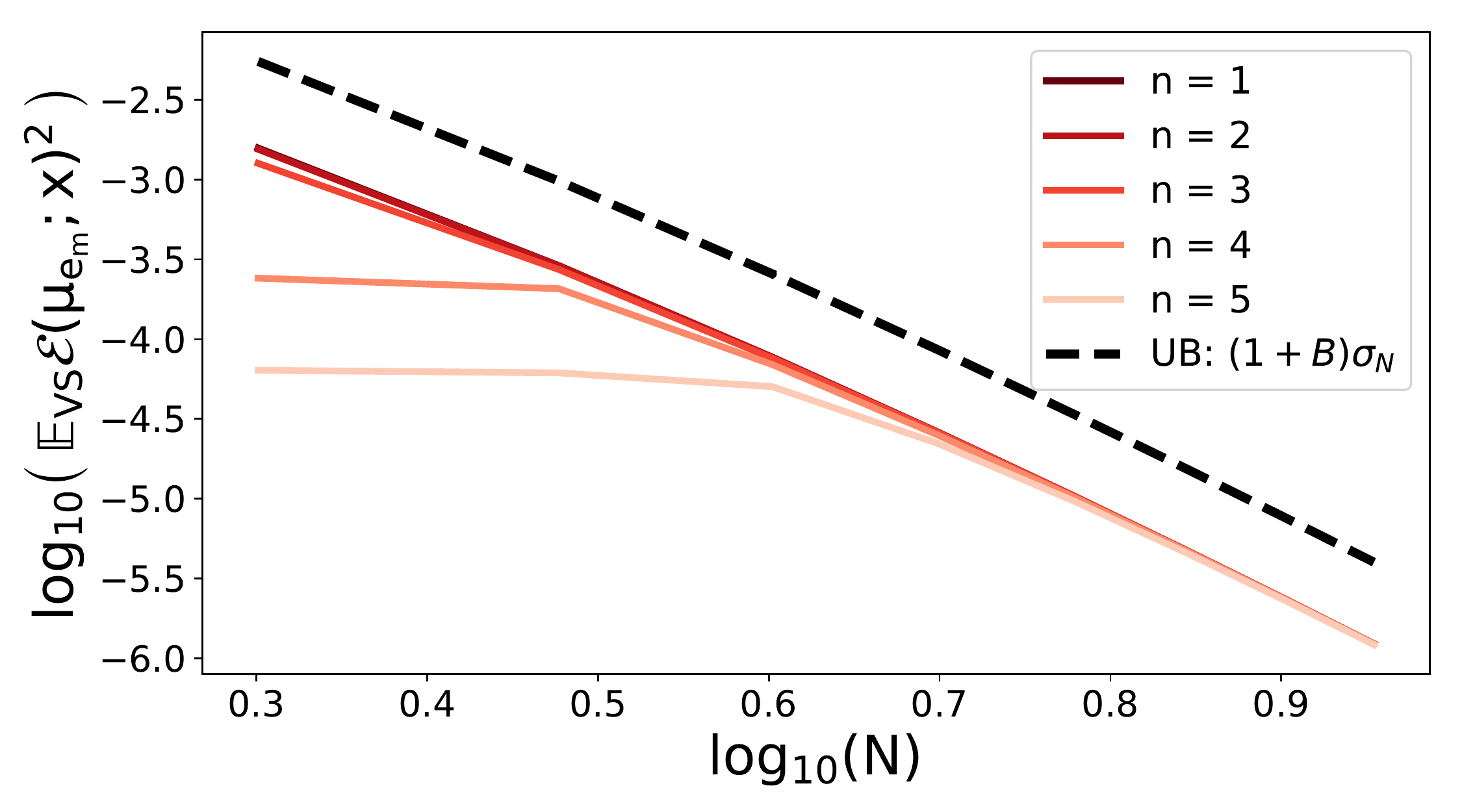}\\
\caption{\label{fig:the_only_one_for_the_moment} The value of $\EX_{\VS} \mathcal{E}(\mu_{e_{m}};\bm{x})^{2}$ for $m \in \{1,2,3,4,5\}$ for the periodic Sobolev space ($s =3,d=1$) compared to the theoretical upper bound (UB) of Theorem~\ref{thm:main_result_1}.}
\end{figure}

\section{Sketch of the proofs}\label{sec:steps_proof}
 The proof of Theorem~\ref{thm:main_result_1} decomposes into three steps. First, in Section~\ref{sec:decomposition_error}, we write $\mathcal{E}(\mu_{g};\bm{x})^{2}$ as a function of the square of the interpolation errors $\mathcal{E}(\mu_{e_{m}};\bm{x})^{2}$ of the embeddings
$\mu_{e_{m}}$. Then, in Section~\ref{sec:closed_formulas}, we give closed formulas for $\EX_{\VS} \mathcal{E}(\mu_{e_{m}};\bm{x})^{2}$ in terms of the eigenvalues of $\bm{\Sigma}$. Finally, the inequality \eqref{eq:ineq_r_N} is proved using an upper bound on the ratio of symmetric polynomials \citep{GuSi12}. The details are given in Appendix~\ref{app:proof_ineq_r_N}. Finally, The proofs of Theorem~\ref{thm:slow_rates} and Theorem~\ref{thm:EX_VS_integration_error} are straightforward consequences of Theorem~\ref{thm:main_result_1}. The details are given in Appendix~\ref{app:proof_slow_rates} and Appendix~\ref{app:proof_bias}.

\subsection{Decomposing the interpolation error}
\label{sec:decomposition_error}
Let $\bm{x} \in \mathcal{X}^{N}$ such that $\Det \bm{K}(\bm{x}) > 0$. For $m_{1}, m_{2} \in \mathbb{N}^{*}$, let the \emph{cross-leverage score} between $m_{1}$ and $m_{2}$ associated to $\bm{x}$ be
\begin{equation}\label{eq:cross_lvs_def}
\tau_{m_{1},m_{2}}^{\F}(\bm{x}) = e_{m_{1}}^{\F}(\bm{x})^{\Tran} \bm{K}(\bm{x})^{-1} e_{m_{2}}^{\F}(\bm{x}).
\end{equation}
When $m_1=m_2=m$, we speak of the $m$-th leverage score\footnote{Our definition is consistent with the leverage scores used in matrix subsampling  \citep{DrMaMu06}. Loosely speaking, $\tau_{m}^{\F}(\bm{x})$ is the leverage score of the $m$-th column of the semi-infinite matrix  $(e_{n}^{\F}(x_{i}))_{(i,n) \in [N] \times \Ns}$.} associated to $\bm{x}$, and simply write $\tau_{m}^{\F}(\bm{x})$.
By Lemma~\ref{lemma:lvs_identities},
 the $m$-th leverage score is related to the interpolation error of the $m$-th eigenfunction $e_{m}^{\F}$. Indeed,
\begin{equation}
\|e_{m}^{\F} - \Pi_{\mathcal{T}(\bm{x})} e_{m}^{\F}\|_{\F}^{2} = 1- \tau_{m}^{\F}(\bm{x}) \in [0,1].
\end{equation}
Similarly, for the cross-leverage score,
\begin{equation}
\langle \Pi_{\mathcal{T}(\bm{x})} e_{m_{1}}^{\F}, \Pi_{\mathcal{T}(\bm{x})} e_{m_{2}}^{\F} \rangle_{\F} = \tau_{m_{1},m_{2}}^{\F}(\bm{x}) \in [-1,1].
\end{equation}

For $g \in \Ltwo$, the interpolation error of the embedding $\mu_{g}$ can be expressed using the (cross-)leverage scores.
\begin{lemma}\label{lemma:error_decomposition}
If $\Det \bm{K}(\bm{x}) > 0$, then,
\begin{equation}
\label{e:error_decomposition_in_lemma}
\mathcal{E}(\mu_{g};\bm{x})^{2}  = \sum\limits_{m \in \mathbb{N}^{*}}  g_{m}^{2} \sigma_{n}\bigg(1- \tau_{m}^{\F}(\bm{x})\bigg) - \sum\limits_{m_{1}\neq m_{2} \in \mathbb{N}^{*}}  g_{m_{1}}g_{m_{2}} \sqrt{\sigma_{m_{1}}} \sqrt{\sigma_{m_{2}}} \tau_{m_{1},m_{2}}^{\F}(\bm{x}).
\end{equation}
\end{lemma}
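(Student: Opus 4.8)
The plan is to expand $\mu_g = \bm{\Sigma} g$ in the orthonormal basis $(e_m^{\F})_{m\in\Ns}$ of $\F$ and then exploit linearity of the projection $\Pi_{\mathcal{T}(\bm{x})}$. First I would write $g = \sum_m g_m e_m$ with $g_m = \langle g, e_m\rangle_{\mathrm{d}\omega}$, so that $\mu_g = \sum_m \sigma_m g_m e_m = \sum_m \sqrt{\sigma_m}\, g_m\, e_m^{\F}$, using $e_m^{\F} = \sqrt{\sigma_m} e_m$ and $\bm{\Sigma} e_m = \sigma_m e_m$. By orthonormality of $(e_m^{\F})$ in $\F$, this series converges in $\F$ since $\sum_m \sigma_m g_m^2 \le \sigma_1 \|g\|_{\mathrm{d}\omega}^2 < \infty$; hence $\mu_g \in \F$ and the expansion is legitimate.

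Next I would apply the (bounded, linear) operator $I_{\F} - \Pi_{\mathcal{T}(\bm{x})}$ term by term to obtain
\begin{equation*}
\mu_g - \Pi_{\mathcal{T}(\bm{x})}\mu_g = \sum_{m\in\Ns} \sqrt{\sigma_m}\, g_m\, \bigl(e_m^{\F} - \Pi_{\mathcal{T}(\bm{x})} e_m^{\F}\bigr),
\end{equation*}
and then compute the squared $\F$-norm by expanding the inner product of this sum with itself:
\begin{equation*}
\mathcal{E}(\mu_g;\bm{x})^2 = \sum_{m_1,m_2\in\Ns} g_{m_1} g_{m_2}\sqrt{\sigma_{m_1}}\sqrt{\sigma_{m_2}}\,\bigl\langle e_{m_1}^{\F} - \Pi_{\mathcal{T}(\bm{x})} e_{m_1}^{\F},\; e_{m_2}^{\F} - \Pi_{\mathcal{T}(\bm{x})} e_{m_2}^{\F}\bigr\rangle_{\F}.
\end{equation*}
For the diagonal terms $m_1 = m_2 = m$ I would substitute $\|e_m^{\F} - \Pi_{\mathcal{T}(\bm{x})} e_m^{\F}\|_{\F}^2 = 1 - \tau_m^{\F}(\bm{x})$ from the identity recalled just before the lemma (Lemma~\ref{lemma:lvs_identities}). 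For the off-diagonal terms, I would use that $e_{m_i}^{\F} - \Pi_{\mathcal{T}(\bm{x})} e_{m_i}^{\F}$ is $\F$-orthogonal to $\mathcal{T}(\bm{x})$, so that $\langle e_{m_1}^{\F} - \Pi_{\mathcal{T}(\bm{x})} e_{m_1}^{\F}, e_{m_2}^{\F} - \Pi_{\mathcal{T}(\bm{x})} e_{m_2}^{\F}\rangle_{\F} = \langle e_{m_1}^{\F} - \Pi_{\mathcal{T}(\bm{x})} e_{m_1}^{\F}, e_{m_2}^{\F}\rangle_{\F} = \langle e_{m_1}^{\F}, e_{m_2}^{\F}\rangle_{\F} - \langle \Pi_{\mathcal{T}(\bm{x})} e_{m_1}^{\F}, e_{m_2}^{\F}\rangle_{\F} = \delta_{m_1 m_2} - \langle \Pi_{\mathcal{T}(\bm{x})} e_{m_1}^{\F}, \Pi_{\mathcal{T}(\bm{x})} e_{m_2}^{\F}\rangle_{\F} = -\tau_{m_1,m_2}^{\F}(\bm{x})$ for $m_1 \ne m_2$, again invoking the cross-leverage identity recalled before the lemma. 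Collecting the diagonal and off-diagonal contributions yields exactly \eqref{e:error_decomposition_in_lemma} (noting the typo $\sigma_n$ should read $\sigma_m$).

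The main obstacle is justifying the term-by-term manipulations: the interchange of the infinite sum with the application of $\Pi_{\mathcal{T}(\bm{x})}$ and with the $\F$-inner product, and the convergence of the resulting double series. Since $\Pi_{\mathcal{T}(\bm{x})}$ is a bounded operator and the partial sums of $\sum_m \sqrt{\sigma_m} g_m e_m^{\F}$ converge to $\mu_g$ in $\F$, continuity gives the first interchange; for the double sum, I would note that $\mathcal{T}(\bm{x})$ is finite-dimensional, so $\Pi_{\mathcal{T}(\bm{x})}(\mu_g - \sum_{m\le M}\sqrt{\sigma_m}g_m e_m^{\F}) \to 0$ and the tail contributions are controlled by $\sum_{m>M}\sigma_m g_m^2 \to 0$ together with Cauchy--Schwarz, so the double series converges absolutely once we bound $|\tau_{m_1,m_2}^{\F}(\bm{x})|\le 1$. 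This makes all rearrangements legitimate and completes the proof.
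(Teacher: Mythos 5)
Your proof is correct and follows essentially the same route as the paper's: expand $\mu_g=\sum_m\sqrt{\sigma_m}\,g_m e_m^{\F}$, use that $\Pi_{\mathcal{T}(\bm{x})}$ is an orthogonal projection, and identify the diagonal and off-diagonal terms with the (cross-)leverage scores via Lemma~\ref{lemma:lvs_identities}. The only cosmetic difference is that the paper first applies the Pythagorean identity $\|\mu_g-\Pi_{\mathcal{T}(\bm{x})}\mu_g\|_{\F}^2=\|\mu_g\|_{\F}^2-\|\Pi_{\mathcal{T}(\bm{x})}\mu_g\|_{\F}^2$ and expands the second term, whereas you expand $\|(I-\Pi_{\mathcal{T}(\bm{x})})\mu_g\|_{\F}^2$ directly; these are algebraically equivalent, and you correctly note the typo $\sigma_n\to\sigma_m$.
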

In particular, with probability one, a design sampled from the continuous volume sampling distribution in Definition~\ref{def:VS} satisfies \eqref{e:error_decomposition_in_lemma}. Furthermore, we shall see that the expected value of the (cross-) leverage scores has a simple expression.

\subsection{Explicit formulas for expected leverage scores}
\label{sec:closed_formulas}
Proposition~\ref{prop:EX_VS_lvs} expresses expected leverage scores in terms of the spectrum of the integration operator.
\begin{proposition}\label{prop:EX_VS_lvs}
For $m~\in~\mathbb{N}^{*}$,
\begin{equation}\label{eq:lvs_formula}
\EX_{\VS} \tau_{m}^{\F}(\bm{x})  = \frac{1}{\sum\limits_{U \in \: \UN} \prod\limits_{u \in U}\sigma_{u}}  \sum\limits_{\substack{U \in \: \UN \\ m \in U}} \prod\limits_{u \in U}\sigma_{u}.
\end{equation}
Moreover, for $m_{1},m_{2} \in \mathbb{N}^{*}$ such that $m_{1} \neq m_{2}$, we have
\begin{equation}\label{eq:cross_lvs_zero}
\EX_{\VS} \tau_{m_{1},m_{2}}^{\F}(\bm{x}) = 0.
\end{equation}
\end{proposition}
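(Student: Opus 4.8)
The plan is to compute the expectations in \eqref{eq:lvs_formula} and \eqref{eq:cross_lvs_zero} by exploiting the mixture representation of continuous volume sampling from Proposition~\ref{prop:VS_decomposition}, since both leverage scores are defined through the quadratic form $e_{m_1}^{\F}(\bm{x})^{\Tran}\bm{K}(\bm{x})^{-1}e_{m_2}^{\F}(\bm{x})$, which has a clean geometric meaning: by the identities recalled just before the proposition, $\EX_{\VS}\tau_{m_1,m_2}^{\F}(\bm{x}) = \EX_{\VS}\langle \Pi_{\mathcal{T}(\bm{x})}e_{m_1}^{\F}, \Pi_{\mathcal{T}(\bm{x})}e_{m_2}^{\F}\rangle_{\F}$. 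So I want a formula for the expected projection $\EX_{\VS}\langle \Pi_{\mathcal{T}(\bm{x})}e_{m_1}^{\F}, \Pi_{\mathcal{T}(\bm{x})}e_{m_2}^{\F}\rangle_{\F}$ under each projection DPP in the mixture, then average with weights $\prod_{u\in U}\sigma_u \big/ \sum_{U'}\prod_{u'\in U'}\sigma_{u'}$.

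First I would record the per-mixture-component computation. For a fixed $U \in \UN$, consider the projection DPP with marginal kernel $\mathfrak{K}_U(x,y) = \sum_{u\in U} e_u(x)e_u(y)$; its density on $\X^N$ is $\frac{1}{N!}\Det(\mathfrak{K}_U(x_i,x_j))$. The key structural fact is that, for a design drawn from this projection DPP, the subspace $\mathcal{T}(\bm{x}) = \Span k(x_i,\cdot)_{i\in[N]}$ — viewed inside $\F$ via the orthonormal basis $(e_n^{\F})$ — almost surely coincides (after rescaling) with a subspace closely tied to $\Span(e_u^{\F})_{u\in U}$. More precisely, I expect the clean statement to be: under the projection DPP with kernel $\mathfrak{K}_U$, $\EX\, \tau_{m}^{\F}(\bm{x}) = \ind[m\in U]$ and $\EX\, \tau_{m_1,m_2}^{\F}(\bm{x}) = 0$ for $m_1\neq m_2$. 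This should follow from a Cauchy–Binet / change-of-basis argument expressing $\bm{K}(\bm{x}) = E^{\F}(\bm{x})^{\Tran}\Diag(\sigma_n)^{?}\dots$; concretely, writing $k(x_i,x_j) = \sum_n \sigma_n e_n(x_i)e_n(x_j)$ and using that the quadratic form $e_m^{\F}(\bm{x})^{\Tran}\bm{K}(\bm{x})^{-1}e_m^{\F}(\bm{x})$ is the squared norm of the projection of $e_m^{\F}$ onto $\mathcal{T}(\bm{x})$, together with the fact that the projection DPP with kernel $\mathfrak{K}_U$ concentrates $\mathcal{T}(\bm{x})$ on the span of $\{e_u^{\F} : u \in U\}$, i.e. $\Pi_{\mathcal{T}(\bm{x})}e_m^{\F} = e_m^{\F}$ a.s. if $m\in U$ and $=0$ a.s. if $m\notin U$. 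This a.s. identification is really the heart of the matter and is presumably established in an earlier lemma of the paper (the link between $\mathfrak{K}_{[N]}$-DPPs and $\Span(e_n^{\F})_{n\in[N]}$ used by \cite{BeBaCh19}); I would cite it or reprove it via the explicit kernel decomposition.

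Given that per-component identity, the rest is bookkeeping. By Proposition~\ref{prop:VS_decomposition}, $f_{\VS}$ is the mixture $\sum_{U\in\UN} p_U \cdot (\text{proj. DPP}_{\mathfrak{K}_U})$ with $p_U = \prod_{u\in U}\sigma_u \big/ \sum_{U'\in\UN}\prod_{u'\in U'}\sigma_{u'}$. Hence $\EX_{\VS}\tau_m^{\F}(\bm{x}) = \sum_{U\in\UN} p_U\, \ind[m\in U] = \sum_{U\ni m} p_U$, which is exactly \eqref{eq:lvs_formula}; and $\EX_{\VS}\tau_{m_1,m_2}^{\F}(\bm{x}) = \sum_{U\in\UN} p_U \cdot 0 = 0$, giving \eqref{eq:cross_lvs_zero}. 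One technical point to check: interchanging expectation with the infinite sum over $n$ implicit in $e_m^{\F}(\bm{x})$ and with the mixture sum — but the mixture sum over $\UN$ is fine since the weights are nonnegative and sum to one (Proposition~\ref{prop:VS_decomposition}), and the quantities $\tau_m^{\F}, \tau_{m_1,m_2}^{\F}$ are bounded in $[0,1]$ and $[-1,1]$ respectively by the identities recalled before the proposition, so dominated convergence applies.

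The main obstacle I anticipate is establishing rigorously the almost-sure identification $\Pi_{\mathcal{T}(\bm{x})}e_m^{\F} = \ind[m\in U]\, e_m^{\F}$ under the projection DPP with kernel $\mathfrak{K}_U$ — equivalently, that the $N$-dimensional random subspace $\mathcal{T}(\bm{x})\subset\F$ equals $\Span\{e_u^{\F}: u\in U\}$ with probability one. This requires showing that, although $\mathcal{T}(\bm{x})$ is built from the full kernel $k = \sum_n\sigma_n e_n\otimes e_n$ (all eigenfunctions!), the projection DPP associated to $\mathfrak{K}_U$ forces the evaluation vectors $(e_n(x_i))_i$ for $n\notin U$ to lie in the span of those for $n\in U$, in the appropriate sense — this is a determinantal-rank argument (the $N\times N$ matrix $(\mathfrak{K}_U(x_i,x_j))$ has rank $N$ a.s. and factors through the $U$-coordinates). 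If the paper has already proven this (it underlies the comparison to \cite{BeBaCh19}), I would simply invoke it; otherwise I would spell it out using Cauchy–Binet on $\Det(\mathfrak{K}_U(x_i,x_j)) = \sum_{|S|=N} \Det(e_{s}(x_i))_{i\in[N],s\in S}^2$ restricted to $S = U$, concluding that a.s. the evaluation map $\bm{x}\mapsto (e_u(x_i))$ is injective on $\Span\{e_u^{\F}:u\in U\}$ and that $\mathcal{T}(\bm{x})$ has the same image.
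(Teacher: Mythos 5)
There is a genuine gap, and it sits exactly where you anticipated: the ``key structural fact'' that under the projection DPP with marginal kernel $\mathfrak{K}_U$ one has $\Pi_{\mathcal{T}(\bm{x})}e_m^{\F} = \ind[m\in U]\, e_m^{\F}$ almost surely is false, and no lemma of the paper establishes it. The subspace $\mathcal{T}(\bm{x}) = \Span\, k(x_i,\cdot)_{i\in[N]}$ is built from the \emph{full} kernel $k=\sum_n \sigma_n e_n\otimes e_n$, not from $\mathfrak{K}_U$; each generator $k(x_i,\cdot)=\sum_n \sigma_n e_n(x_i)e_n(\cdot)$ has nonzero components along $e_n^{\F}$ for $n\notin U$ whenever $e_n(x_i)\neq 0$, which happens with probability one for generic kernels. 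The DPP only controls where the points fall, not which eigendirections the kernel translates span. Concretely, for $N=1$ and $U=\{1\}$ the component DPP draws $x_1\sim e_1(x)^2\,\mathrm{d}\omega(x)$ and
\begin{equation}
\EX\,\tau_1^{\F}(x_1)=\int_{\X}\frac{\sigma_1 e_1(x)^2}{\sum_n \sigma_n e_n(x)^2}\,e_1(x)^2\,\mathrm{d}\omega(x)<1,
\qquad
\EX\,\tau_2^{\F}(x_1)=\int_{\X}\frac{\sigma_2 e_2(x)^2}{\sum_n \sigma_n e_n(x)^2}\,e_1(x)^2\,\mathrm{d}\omega(x)>0
\end{equation}
in general, rather than $1$ and $0$. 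A global sanity check: if your identity held for $U=[N]$, then $\mathcal{E}(\mu_{e_m};\bm{x})^2=\sigma_m(1-\tau_m^{\F}(\bm{x}))$ would vanish a.s.\ for all $m\le N$ under the projection DPP of \cite{BeBaCh19}, contradicting the nonzero error bounds \eqref{eq:proj_DPP_bound_eigen} (and the nonzero $\epsilon_m$ of Theorem~\ref{thm:main_result_1}, since $\epsilon_m>0$ for every $m$). The final formula \eqref{eq:lvs_formula} does \emph{read} as if each mixture component contributed $\ind[m\in U]$, but that is an identity of weighted averages only; the cancellation is specific to the volume-sampling weights $\prod_{u\in U}\sigma_u$ and cannot be obtained component by component. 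The bookkeeping and dominated-convergence parts of your argument are fine, but they rest entirely on the false per-component claim.

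The paper's route is different and avoids the mixture altogether at this stage. It writes $\Det\bm{K}(\bm{x})\,\tau_m^{\F}(\bm{x})=\partial_t \Det\big(\bm{K}(\bm{x})+t\,e_m^{\F}(\bm{x})e_m^{\F}(\bm{x})^{\Tran}\big)\big|_{t=0^+}$ via the Jacobi identity, integrates over $\X^N$ (justifying the interchange of $\partial_t$ and $\int$ with a Markov-brothers bound on the polynomial $t\mapsto\Det\bm{K}_t(\bm{x})$), and thereby identifies $\EX_{\VS}\tau_m^{\F}$ with $\partial_t Z_N(k_t)|_{0^+}/Z_N(k)$, where $k_t$ is the rank-one perturbed kernel \eqref{eq:k_t}. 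Proposition~\ref{prop:perturbed_kernels_spectrum} gives the spectrum of $k_t$ (only $\sigma_m$ is rescaled to $(1+t)\sigma_m$), and differentiating the elementary symmetric polynomial yields \eqref{eq:lvs_formula}; the cross term \eqref{eq:cross_lvs_zero} follows from a polarization identity together with the fact that the ``$+$'' and ``$-$'' perturbed kernels are isospectral. If you want to keep a mixture-flavoured argument, you would have to compute the exact (non-indicator) per-component expectations and show the weighted sum telescopes, which in effect reproduces the global Cauchy--Binet/Jacobi computation.
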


In Appendix~\ref{app:proof_main_result_1}, we combine Lemma~\ref{lemma:error_decomposition} with Proposition~\ref{prop:EX_VS_lvs}. This concludes the proof of Theorem~\ref{thm:main_result_1} by Beppo Levi's monotone convergence theorem.


It remains to prove Proposition~\ref{prop:EX_VS_lvs}. Again, we proceed in two steps. First, our Proposition~\ref{thm:EX_lvs_phi} yields a characterization of $ \EX_{\VS}\tau_{m}^{\F}(\bm{x})$ and $\EX_{\VS}\tau_{m_{1},m_{2}}^{\F}(\bm{x})$ in terms of the spectrum of three perturbed versions of the integration operator $\bm{\Sigma}$. Second, we give explicit forms of these spectra in Proposition~\ref{prop:perturbed_kernels_spectrum} below. The idea is to express $\EX_{\VS} \tau_{m}(\bm{x})^{\F}$ as the normalization constant \eqref{eq:normalization_constant_VS} of a perturbation of the kernel $k$.
The same goes for $\EX_{\VS} \tau_{m_{1},m_{2}}^{\F}(\bm{x})$.

Let $t\in\mathbb{R}_{+}$ and $\bm{\Sigma}_{t}$, $\bm{\Sigma}_{t}^{+}$ and $\bm{\Sigma}_{t}^{-}$ be the integration operators\footnote{We drop from the notation the dependencies on $m,m_{1}$ and $m_{2}$ for simplicity.} on $\Ltwo$, respectively associated with the kernels
\begin{equation}\label{eq:k_t}
k_{t}(x,y) = k(x,y) + t e_{m}^{\F}(x) e_{m}^{\F}(y),
\end{equation}
\begin{equation}\label{eq:k_t_plus}
k_{t}^{+}(x,y) = k(x,y)
+ t \left( e_{m_{1}}^{\F}(x) + e_{m_{2}}^{\F}(x) \right) \left( e_{m_{1}}^{\F}(y) + e_{m_{2}}^{\F}(y) \right),
\end{equation}
\begin{equation}\label{eq:k_t_minus}
k_{t}^{-}(x,y) = k(x,y) + t \left( e_{m_{1}}^{\F}(x) - e_{m_{2}}^{\F}(x) \right) \left( e_{m_{1}}^{\F}(y) - e_{m_{2}}^{\F}(y) \right).
\end{equation}
By Assumption~\ref{hyp:integrable_diagonal}, and by the fact that $(e_{m})_{m \in \Ns}$ is an orthonormal basis of $\Ltwo$, all three kernels also have integrable diagonals (see Assumption~\ref{hyp:integrable_diagonal}).
In particular, they define RKHSs that can be embedded in $\Ltwo$. Moreover, recalling the definition \eqref{eq:normalization_constant_VS} of the normalization constant $Z_{N}$ of volume sampling, the following quantities are finite
\begin{align}
\phi_{m}(t) = & Z_{N}(k_{t}) , \quad\phi_{m_{1},m_{2}}^{+}(t) = Z_{N}(k_{t}^{+}),&\text{ and }\quad \phi_{m_{1},m_{2}}^{-}(t) = Z_{N}(k_{t}^{-}).
\end{align}
Remember that by Proposition~\ref{prop:VS_decomposition},
\begin{equation}\label{eq:Z_kt}
\phi_{m}(t) = N! \sum\limits_{U \in \: \UN} \prod\limits_{u \in \: U} \tilde{\sigma}_{u}(t),
\end{equation}
where $\displaystyle \{\tilde{\sigma}_{u}(t), \: u \in \Ns\}$ is the set of eigenvalues\footnote{For a given value of $t$, the eigenvalues $\tilde{\sigma}_{u}(t)$ are not necessarily decreasing in $u$. We give explicit formulas for these eigenvalues in Proposition~\ref{prop:perturbed_kernels_spectrum}, and the order satisfied for $t=0$ is not necessarily preserved for $t>0$. This does not change anything to the argument since these eigenvalues only appear in quantities such as $\phi_{m}(t)$ which are invariant under permutation of the eigenvalues.} of $\bm{\Sigma}_{t}$. Similar identities are valid for $\phi_{m_{1},m_{2}}^{+}(t)$ and $\phi_{m_{1},m_{2}}^{-}(t)$ with the eigenvalues of $\bm{\Sigma}_{t}^{+}$ and $\bm{\Sigma}_{t}^{-}$ respectively.

\begin{proposition}\label{thm:EX_lvs_phi}
The functions $\phi_m$, $\phi_{m_1,m_2}^{+}$ and $\phi_{m_1,m_2}^{-}$ are right differentiable in zero. Furthermore,
\begin{equation}\label{eq:EX_lvs_n_delta_phi_n}
\EX_{\VS} \tau_{m}^{\F}(\bm{x})  = \frac{1}{Z_{N}(k)}  \frac{\partial \phi_{m}}{ \partial t }\bigg|_{t = 0^{+}},
\end{equation}
and
\begin{equation}\label{eq:EX_VS_cross_lvs_formula}
\EX_{\VS} \tau_{m_{1},m_{2}}^{\F}(\bm{x}) = \frac{1}{4Z_{N}(k)} \bigg(\frac{\partial \phi_{m_{1},m_{2}}^{+}}{ \partial t } - \frac{\partial \phi_{m_{1},m_{2}}^{-}}{ \partial t }\bigg)\bigg|_{t = 0^{+}}.
\end{equation}
\end{proposition}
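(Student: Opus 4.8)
The plan is to exploit that each perturbed kernel $k_{t}$, $k^{+}_{t}$, $k^{-}_{t}$ in \eqref{eq:k_t}--\eqref{eq:k_t_minus} differs from $k$ by a \emph{rank-one} term, which makes the determinant in the integrand of $\phi_{m}$, $\phi^{+}_{m_{1},m_{2}}$, $\phi^{-}_{m_{1},m_{2}}$ an affine function of $t$; the two derivative formulas then follow by integrating this affine identity and reading off the slope.

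First I would fix a design $\bm{x}\in\X^{N}$, write $e_{m}^{\F}(\bm{x})=(e_{m}^{\F}(x_{1}),\dots,e_{m}^{\F}(x_{N}))^{\Tran}\in\mathbb{R}^{N}$, and set $v=e_{m}^{\F}(\bm{x})$ and $v^{\pm}=e_{m_{1}}^{\F}(\bm{x})\pm e_{m_{2}}^{\F}(\bm{x})$. Then $\bm{K}_{t}(\bm{x}):=(k_{t}(x_{i},x_{j}))_{i,j}=\bm{K}(\bm{x})+t\,vv^{\Tran}$, and similarly $\bm{K}^{\pm}_{t}(\bm{x})=\bm{K}(\bm{x})+t\,v^{\pm}(v^{\pm})^{\Tran}$. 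By the matrix determinant lemma in adjugate form (valid for any square matrix),
\begin{equation*}
\Det\bm{K}_{t}(\bm{x})=\Det\bm{K}(\bm{x})+t\,s_{m}(\bm{x}),\qquad s_{m}(\bm{x}):=e_{m}^{\F}(\bm{x})^{\Tran}\Adj\!\bigl(\bm{K}(\bm{x})\bigr)e_{m}^{\F}(\bm{x}),
\end{equation*}
which is affine in $t$ for each fixed $\bm{x}$; likewise $\Det\bm{K}^{\pm}_{t}(\bm{x})=\Det\bm{K}(\bm{x})+t\,s^{\pm}_{m_{1},m_{2}}(\bm{x})$ with $s^{\pm}_{m_{1},m_{2}}(\bm{x}):=(v^{\pm})^{\Tran}\Adj(\bm{K}(\bm{x}))v^{\pm}$. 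Since $\bm{K}(\bm{x})\succeq 0$ forces $\Adj(\bm{K}(\bm{x}))\succeq 0$, all these slopes are nonnegative.

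Next I would identify the slopes with (cross-)leverage scores. On $\{\Det\bm{K}(\bm{x})>0\}$ one has $\Adj(\bm{K}(\bm{x}))=\Det\bm{K}(\bm{x})\,\bm{K}(\bm{x})^{-1}$, so by \eqref{eq:cross_lvs_def} and bilinearity of $(u,u')\mapsto u^{\Tran}\bm{K}(\bm{x})^{-1}u'$,
\begin{equation*}
s_{m}(\bm{x})=\Det\bm{K}(\bm{x})\,\tau_{m}^{\F}(\bm{x}),\qquad s^{\pm}_{m_{1},m_{2}}(\bm{x})=\Det\bm{K}(\bm{x})\bigl(\tau_{m_{1}}^{\F}(\bm{x})\pm 2\,\tau_{m_{1},m_{2}}^{\F}(\bm{x})+\tau_{m_{2}}^{\F}(\bm{x})\bigr).
\end{equation*}
On $\{\Det\bm{K}(\bm{x})=0\}$ instead $s_{m}(\bm{x})=s^{\pm}_{m_{1},m_{2}}(\bm{x})=0$: if $\rank\bm{K}(\bm{x})\le N-2$ then $\Adj(\bm{K}(\bm{x}))=0$, while if $\rank\bm{K}(\bm{x})=N-1$ then $\Adj(\bm{K}(\bm{x}))$ is a nonnegative multiple of $uu^{\Tran}$ with $u$ spanning $\ker\bm{K}(\bm{x})$, and $u^{\Tran}e_{m}^{\F}(\bm{x})=\langle\sum_{i}u_{i}k(x_{i},\cdot),e_{m}^{\F}\rangle_{\F}=0$ because $\bm{K}(\bm{x})u=0$ implies $\sum_{i}u_{i}k(x_{i},\cdot)=0$ in $\F$. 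Hence $s_{m}(\bm{x})=\ind_{\{\Det\bm{K}(\bm{x})>0\}}\,\Det\bm{K}(\bm{x})\,\tau_{m}^{\F}(\bm{x})$, and the analogue holds for $s^{\pm}_{m_{1},m_{2}}$.

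Finally I would integrate against $\mathrm{d}\omega^{\otimes N}:=\bigotimes_{i\in[N]}\mathrm{d}\omega(x_{i})$. From $\phi_{m}(t)=Z_{N}(k_{t})=\int_{\X^{N}}\Det\bm{K}_{t}(\bm{x})\,\mathrm{d}\omega^{\otimes N}$ and the affine identity,
\begin{equation*}
\phi_{m}(t)=Z_{N}(k)+t\int_{\X^{N}}s_{m}(\bm{x})\,\mathrm{d}\omega^{\otimes N},
\end{equation*}
where both integrals are finite: the first is $Z_{N}(k)$, and the second equals $\phi_{m}(1)-Z_{N}(k)$ since $0\le s_{m}(\bm{x})=\Det\bm{K}_{1}(\bm{x})-\Det\bm{K}(\bm{x})$ and $\phi_{m}(1)<\infty$ was noted just before the statement. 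Thus $\phi_{m}$ is affine on $\mathbb{R}_{+}$, hence right differentiable at $0$, and
\begin{align*}
\frac{\partial\phi_{m}}{\partial t}\Big|_{t=0^{+}}&=\int_{\X^{N}}s_{m}(\bm{x})\,\mathrm{d}\omega^{\otimes N}=\int_{\{\Det\bm{K}>0\}}\!\Det\bm{K}(\bm{x})\,\tau_{m}^{\F}(\bm{x})\,\mathrm{d}\omega^{\otimes N}\\
&=Z_{N}(k)\,\EX_{\VS}\tau_{m}^{\F}(\bm{x}),
\end{align*}
which is \eqref{eq:EX_lvs_n_delta_phi_n}. The same computation gives $\partial_{t}\phi^{\pm}_{m_{1},m_{2}}\big|_{t=0^{+}}=Z_{N}(k)\,\EX_{\VS}\bigl(\tau_{m_{1}}^{\F}\pm 2\,\tau_{m_{1},m_{2}}^{\F}+\tau_{m_{2}}^{\F}\bigr)$, and subtracting then dividing by $4Z_{N}(k)$ yields \eqref{eq:EX_VS_cross_lvs_formula}. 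The calculation is short once the rank-one structure is spotted; the only point requiring some care is the vanishing of the slopes $s_{m},s^{\pm}_{m_{1},m_{2}}$ on $\{\Det\bm{K}=0\}$ (so that the slope integral really coincides with $Z_{N}(k)$ times the $\EX_{\VS}$ of a leverage score), together with the --- immediate --- integrability of those slopes.
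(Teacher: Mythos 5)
Your proof is correct, and it takes a genuinely different route from the paper's. The paper proves the identity by applying the Jacobi identity $\partial_{t}\Det(\bm{A}+t\bm{B})|_{t=0}=\Det(\bm{A})\Tr(\bm{A}^{-1}\bm{B})$ pointwise in $\bm{x}$, and then spends most of its effort justifying the interchange of $\partial_{t}$ and $\int_{\X^{N}}$: it notes that $t\mapsto\Det\bm{K}_{t}(\bm{x})$ is a polynomial of degree at most $N$, invokes the Markov brothers' inequality to dominate its derivative on $[0,1]$ by $2N^{2}\max_{\tau}|\Det\bm{K}_{\tau}(\bm{x})|$, and dominates the latter by the integrable $\Det\bm{K}_{1}(\bm{x})$ via the Loewner order. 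You instead observe that the perturbation is rank one, so $\Det\bm{K}_{t}(\bm{x})=\Det\bm{K}(\bm{x})+t\,e_{m}^{\F}(\bm{x})^{\Tran}\Adj(\bm{K}(\bm{x}))\,e_{m}^{\F}(\bm{x})$ is \emph{affine} in $t$ (the paper only uses that it is polynomial of degree $\leq N$); integrating this affine identity makes $\phi_{m}$ itself affine and renders the derivative--integral interchange trivial, while the nonnegativity $0\leq s_{m}(\bm{x})=\Det\bm{K}_{1}(\bm{x})-\Det\bm{K}(\bm{x})$ gives integrability of the slope for free. Your approach buys a shorter argument with no need for the Markov brothers' inequality, and it also treats the set $\{\Det\bm{K}(\bm{x})=0\}$ explicitly and correctly (the adjugate either vanishes or is a nonnegative multiple of $uu^{\Tran}$ with $u$ in the kernel, and $u^{\Tran}e_{m}^{\F}(\bm{x})=0$ since $\sum_{i}u_{i}k(x_{i},\cdot)=0$ in $\F$), a point the paper's use of the Jacobi identity --- stated only for $\Det\bm{A}\neq0$ --- passes over; this set is $\VS$-null anyway, so both arguments land on the same expectation. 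The paper's more robust domination machinery would be needed for higher-rank perturbations, but for the rank-one perturbations actually used here your affine argument is cleaner.
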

 The details of the proof are postponed to Appendix~\ref{app:proof_EX_lvs_phi}. We complete this proposition with a description of the spectrum of the operators $\bm{\Sigma}_{t}$, $\bm{\Sigma}_{t}^{+}$ and $\bm{\Sigma}_{t}^{-}$ using the spectrum of $\bm{\Sigma}$.
\begin{proposition}\label{prop:perturbed_kernels_spectrum}
The eigenvalues of $\bm{\Sigma}_{t}$ write
\begin{equation}
\tilde{\sigma}_{u}(t) =  \left\{
    \begin{array}{ll}
        \sigma_{u} & \mbox{if} \: u \neq m, \\
        (1 + t)\sigma_{u} & \mbox{if}\: u = m.
    \end{array}
\right.
\end{equation}
Moreover, the eigenvalues of $\bm{\Sigma}_{t}^{+}$ and $\bm{\Sigma}_{t}^{-}$ satisfy
\begin{equation}\label{eq:equality_positive_negative_spectrum}
\{\tilde{\sigma}_{u}^{+}(t), \: u \in \Ns \} = \{\tilde{\sigma}_{u}^{-}(t), \: u \in \Ns \}.
\end{equation}
\end{proposition}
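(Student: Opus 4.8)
My plan is to make the three perturbed operators explicit and diagonalize them by elementary linear algebra, exploiting that each perturbation is finite-rank and confined to a subspace spanned by eigenfunctions of $\bm{\Sigma}$. For $\bm{\Sigma}_{t}$ I would start from the Mercer-type decomposition $k(x,y) = \sum_{u \in \Ns} \sigma_u e_u(x) e_u(y)$, substitute $e_m^{\F} = \sqrt{\sigma_m}\, e_m$ into \eqref{eq:k_t}, and obtain
\[
k_{t}(x,y) = \sum_{u \neq m} \sigma_u e_u(x) e_u(y) + (1+t)\sigma_m e_m(x) e_m(y).
\]
Since $(e_u)_{u \in \Ns}$ is an orthonormal basis of $\Ltwo$ and the coefficients above are nonnegative and summable by Assumption~\ref{hyp:integrable_diagonal}, applying $\bm{\Sigma}_{t}$ to $e_v$ and integrating the series term by term gives $\bm{\Sigma}_{t} e_v = \tilde{\sigma}_v(t)\, e_v$ with $\tilde{\sigma}_v(t)$ as in the statement; completeness of $(e_v)_v$ then ensures these are all the eigenvalues.

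For $\bm{\Sigma}_{t}^{\pm}$ I would set $v^{\pm} = e_{m_1}^{\F} \pm e_{m_2}^{\F} = \sqrt{\sigma_{m_1}}\, e_{m_1} \pm \sqrt{\sigma_{m_2}}\, e_{m_2}$, which lies in $V := \Span\{e_{m_1}, e_{m_2}\}$. By \eqref{eq:k_t_plus} and \eqref{eq:k_t_minus}, $k_{t}^{\pm}$ differs from $k$ by the kernel $t\, v^{\pm}(x) v^{\pm}(y)$, whose associated operator is the rank-one operator $f \mapsto t \langle v^{\pm}, f \rangle_{\mathrm{d}\omega}\, v^{\pm}$; it annihilates $V^{\perp}$ and maps $V$ into $V$. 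Since $V$ is spanned by eigenfunctions of $\bm{\Sigma}$, both $V$ and $V^{\perp}$ are invariant under $\bm{\Sigma}$, hence under $\bm{\Sigma}_{t}^{\pm}$. On $V^{\perp}$ the operator agrees with $\bm{\Sigma}$, contributing the eigenvalues $\sigma_u$ for $u \in \Ns \setminus \{m_1, m_2\}$ with eigenfunctions $e_u$; on $V$, in the orthonormal basis $(e_{m_1}, e_{m_2})$, the matrix of $\bm{\Sigma}_{t}^{\pm}$ is
\[
M^{\pm} = \begin{pmatrix} \sigma_{m_1} & 0 \\ 0 & \sigma_{m_2} \end{pmatrix} + t \begin{pmatrix} \sigma_{m_1} & \pm \sqrt{\sigma_{m_1}\sigma_{m_2}} \\ \pm \sqrt{\sigma_{m_1}\sigma_{m_2}} & \sigma_{m_2} \end{pmatrix}.
\]

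To conclude \eqref{eq:equality_positive_negative_spectrum}, I would observe that $M^{-} = D M^{+} D$ with $D = \mathrm{diag}(1,-1) = D^{-1}$, so $M^{+}$ and $M^{-}$ are similar and share both eigenvalues (equivalently, they have the same trace $(\sigma_{m_1}+\sigma_{m_2})(1+t)$ and determinant $\sigma_{m_1}\sigma_{m_2}(1+2t)$, hence the same characteristic polynomial); together with the identical contributions from $V^{\perp}$, this yields $\{\tilde{\sigma}_u^{+}(t) : u \in \Ns\} = \{\tilde{\sigma}_u^{-}(t) : u \in \Ns\}$. Note that the eigenfunctions of $\bm{\Sigma}_{t}^{\pm}$ on $V$ need not be $e_{m_1}, e_{m_2}$, but only the spectrum is asserted. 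The only mildly delicate points — justifying term-by-term integration of the Mercer series, and checking that the finite-rank perturbation preserves $V$ and $V^{\perp}$ — are immediate here, so I do not expect a genuine obstacle: the heart of the proposition is the $2\times 2$ symmetry $M^{-} = D M^{+} D$.
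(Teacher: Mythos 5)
Your proposal is correct and follows the same overall route as the paper: isolate the rank-one perturbation, observe that it annihilates the orthogonal complement of $V=\Span(e_{m_1},e_{m_2})$ so that both $V$ and $V^{\perp}$ are invariant, and reduce the comparison of spectra to a $2\times 2$ eigenvalue problem. The only genuine divergence is the finishing move. The paper works in the non-orthonormal basis $(e_{m_1}^{\F},e_{m_2}^{\F})$, obtains a non-symmetric $2\times 2$ matrix, and computes both pairs of eigenvalues explicitly via the quadratic formula before checking that they coincide (incidentally, the displayed characteristic equation for $\bm{\Sigma}_{t}^{-}$ in the paper has a sign typo in its constant term, though the roots stated are correct). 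You instead work in the orthonormal basis $(e_{m_1},e_{m_2})$, which makes $M^{\pm}$ symmetric, and conclude from the similarity $M^{-}=DM^{+}D$ with $D=\mathrm{diag}(1,-1)$, or equivalently from equality of trace $(1+t)(\sigma_{m_1}+\sigma_{m_2})$ and determinant $\sigma_{m_1}\sigma_{m_2}(1+2t)$. Your finish is cleaner: it bypasses the discriminant computation entirely and makes the $\pm$ symmetry transparent, while losing nothing (the explicit eigenvalues are not needed elsewhere, since $\phi^{+}_{m_1,m_2}-\phi^{-}_{m_1,m_2}$ only requires the two spectra to agree).
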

The proof is based on the observation that the perturbations in \eqref{eq:k_t}, \eqref{eq:k_t_plus}, and \eqref{eq:k_t_minus} only affect a principal subspace of dimension $1$ or $2$; see Appendix~\ref{app:proof_perturbed_kernels_spectrum}.

Combining the characterization of $\EX_{\VS} \tau_{m}^{\F}(\bm{x})$ and $\EX_{\VS} \tau_{m_{1},m_{2}}^{\F}(\bm{x})$ given in Proposition~\ref{thm:EX_lvs_phi}, and Proposition~\ref{prop:perturbed_kernels_spectrum}, we prove Proposition~\ref{prop:EX_VS_lvs}; see details in Appendix~\ref{sec:proof_EX_VS_lvs}.

\section{Conclusion}\label{sec:discussion}

We deal with interpolation in RKHSs using random nodes and optimal weights. This problem is intimately related to kernel quadrature, though interpolation is more general.
We introduced continuous volume sampling (VS), a repulsive point process that is a mixture of DPPs, although not a DPP itself.
VS comes with a set of advantages. First, interpretable bounds on the interpolation error can be derived under minimalistic assumptions. Our bounds are close to optimal since they share the same decay rate as known lower bounds. Moreover, we provide explicit evaluations of the constants appearing in our bounds for some particular RKHS (Sobolev, Gaussian). Second,
VS provides a fully kernelized approach, which may permit sampling without knowing the Mercer decomposition of the kernel \citep{ReGh19} unlike previous work on random designs. Investigating efficient samplers is deferred to future work.


Volume sampling was originally introduced on a finite domain \citep{DRVW06}, and has been used for matrix subsampling for linear regression and low-rank approximations \citep{DeWa17,BeBaCh18}. Similarly to \citep{BeBaCh19}, our work is another direct connection between subsampling columns of a matrix and the continuous problem of interpolating an element of an RKHS.

Compared to \citep{KaSaTa19}, our analysis yields a sharper upper bound while circumventing the analysis of the Lebesgue constant.
Yet, it would be interesting to analyze this constant under continuous volume sampling to get an idea of the stability of the algorithm. Another potential extension of this work would be the analysis of interpolation under regularization as in \citep{Bac17}.

\subsection*{Acknowledgments}
AB acknowledges support from ANR grant \textsc{BoB} (ANR-16-CE23-0003), Région Hauts-de-France, and Centrale Lille. RB acknowledges support from ERC grant \textsc{Blackjack} (ERC-2019-STG-851866).

\newpage

\bibliography{bibliography}
\newpage
\appendix

\section{Technical results borrowed from other papers}
Thoughout our proof, we use a few technical results from the literature, which we gather here for ease of reference.
\subsection{The Jacobi identity}
The following proposition is a direct consequence of the rank one-update for determinants, see e.g. \citep[Theorem 3.11]{MaSpSr15}.
\begin{proposition}[Jacobi identity]\label{thm:jacobi_identity}
Let $\bm{A}, \bm{B} \in \mathbb{R}^{N \times N}$. If $\Det\bm{A}\neq 0$, then
\begin{equation}
\partial_{t} \Det (\bm{A}+t\bm{B})|_{t = 0} = \Det(\bm{A}) \Tr(\bm{A}^{-1}\bm{B}).
\end{equation}
In particular, we have
\begin{equation}
\partial_{t} \Det (\bm{A}+t\bm{B})|_{t = 0^{+}} = \Det(\bm{A}) \Tr(\bm{A}^{-1}\bm{B}).
\end{equation}
\end{proposition}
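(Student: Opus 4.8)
The plan is to reduce the statement to the matrix determinant lemma (the rank-one update), as the citation suggests, after first normalising away $\bm{A}$. Since $\Det\bm{A}\neq 0$, I would write
\begin{equation}
\Det(\bm{A}+t\bm{B}) = \Det(\bm{A})\,\Det(\bm{I}+t\bm{A}^{-1}\bm{B}),
\end{equation}
so that, by the product rule, it suffices to prove $\partial_{t}\Det(\bm{I}+t\bm{M})|_{t=0} = \Tr\bm{M}$ for an arbitrary $\bm{M}\in\mathbb{R}^{N\times N}$; applying this with $\bm{M}=\bm{A}^{-1}\bm{B}$ and multiplying by $\Det(\bm{A})$ then gives the claim.

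For the reduced identity I would expand $\Det(\bm{I}+t\bm{M})$ as a polynomial in $t$. The cleanest route is to note that $\Det(\bm{I}+t\bm{M}) = \sum_{k=0}^{N} t^{k} e_{k}(\lambda_{1},\dots,\lambda_{N})$, where $\lambda_{1},\dots,\lambda_{N}$ are the eigenvalues of $\bm{M}$ and $e_{k}$ the $k$-th elementary symmetric polynomial (this is just the characteristic polynomial of $\bm{M}$ rewritten); hence the coefficient of $t$ is $e_{1}(\lambda)=\sum_i \lambda_i = \Tr\bm{M}$, which is exactly the derivative at $0$ of a polynomial with no need for further justification. Alternatively, to stay literally within the rank-one framework: decompose $\bm{M}=\sum_{j=1}^{N}\bm{e}_{j}\bm{m}_{j}^{\Tran}$ with $\bm{m}_{j}^{\Tran}$ the $j$-th row of $\bm{M}$, add these rank-one terms to $\bm{I}$ one at a time, apply the matrix determinant lemma at each step to obtain a telescoping product $\Det(\bm{I}+t\bm{M}) = \prod_{j=1}^{N}\bigl(1+t\,\bm{m}_{j}^{\Tran}\bm{C}_{j}(t)^{-1}\bm{e}_{j}\bigr)$ for suitable invertible $\bm{C}_{j}(t)$ with $\bm{C}_{j}(0)=\bm{I}$, and then differentiate the product at $t=0$: every factor is $1$ there and every inverse collapses to $\bm{I}$, leaving $\sum_{j}\bm{m}_{j}^{\Tran}\bm{e}_{j}=\sum_{j}\bm{M}_{jj}=\Tr\bm{M}$.

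Finally, the "in particular" statement about the one-sided derivative at $t=0^{+}$ is immediate: $t\mapsto\Det(\bm{A}+t\bm{B})$ is a polynomial in $t$ of degree at most $N$, hence differentiable on all of $\mathbb{R}$, so its right derivative at $0$ coincides with the two-sided derivative already computed. I do not expect a genuine obstacle anywhere; the only point requiring a little care is the $t=0$ evaluation in the telescoped rank-one product (checking that all $\bm{C}_{j}(t)^{-1}$ reduce to $\bm{I}$, equivalently $\bm{A}^{-1}$ before normalisation), which is routine.
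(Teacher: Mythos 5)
Your proposal is correct, and it follows essentially the route the paper itself indicates: the paper offers no proof beyond citing the result as a direct consequence of the rank-one determinant update, which is exactly the reduction you carry out (normalising by $\Det\bm{A}$ and telescoping rank-one updates, or equivalently reading off the linear coefficient $e_1(\lambda)=\Tr\bm{M}$ of the characteristic polynomial). Your final observation that the map $t\mapsto\Det(\bm{A}+t\bm{B})$ is a polynomial, so the right derivative at $0$ coincides with the two-sided one, correctly disposes of the ``in particular'' clause.
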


\subsection{The Markov brothers' inequality}
The following proposition is known as the Markov brother's inequality, see e.g. \citep{Sha04}.
\begin{proposition}[Markov brothers]\label{thm:Markov_Brothers_inequality}
Let $P$ be a polynomial of degree smaller than $N$. Then
\begin{equation}
\max_{\tau \in [-1,1]} |P'(\tau)| \leq N^{2}\max_{\tau \in [-1,1]}|P(\tau)|.
\end{equation}
\end{proposition}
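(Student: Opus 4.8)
I would reduce to, and then prove, the classical sharp Markov inequality: if $P$ has degree at most $N$, then $\max_{[-1,1]}|P'|\le N^2\max_{[-1,1]}|P|$, with equality for the Chebyshev polynomial $T_N$. The proposition's hypothesis $\deg P<N$ is the sub-case $\deg P\le N-1\le N$ (which even yields the smaller constant $(N-1)^2$, but $N^2$ is all that is claimed). So, normalising $\max_{[-1,1]}|P|=1$, the goal becomes $|P'(y)|\le N^2$ for every $y\in[-1,1]$. Since the bound is tight exactly at the endpoints — $T_N'(\pm1)=\pm N^2$ — I would split $[-1,1]$ into an interior region, where the much stronger Bernstein inequality suffices, and two thin boundary strips near $\pm1$, where a global comparison with $T_N$ is needed.

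For the interior I would first record the pointwise Bernstein inequality $|P'(x)|\le N(1-x^2)^{-1/2}$ on $(-1,1)$: the substitution $x=\cos\theta$ turns $g(\theta)=P(\cos\theta)$ into a trigonometric polynomial of degree $\le N$ with $\|g\|_\infty\le1$, the trigonometric Bernstein inequality gives $|g'(\theta)|\le N$, and $g'(\theta)=-\sin\theta\,P'(\cos\theta)$ then rearranges to the claim (I would cite the trigonometric version, or derive it from M.\ Riesz's interpolation formula for trigonometric polynomials). Consequently $|P'(x)|\le N^2$ whenever $1-x^2\ge N^{-2}$, i.e.\ on $\{\,|x|\le\sqrt{1-N^{-2}}\,\}$, which settles the interior.

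The crux is the two boundary strips; by the symmetry $x\mapsto-x$ it suffices to look near $+1$. The endpoint $y=1$ itself is clean: assuming for contradiction $P'(1)>N^2$ (negate $P$ if needed), set $c=P'(1)/N^2>1$ and $R=c\,T_N-P$. At the $N+1$ extrema $\xi_k=\cos(k\pi/N)$, $c\,T_N(\xi_k)=c(-1)^k$ while $|P(\xi_k)|\le1<c$, so $R(\xi_k)$ strictly alternates in sign; hence $R$, a nonzero polynomial of degree $\le N$, has exactly $N$ simple zeros, one in each open interval $(\xi_{k+1},\xi_k)$, and in particular all $<1$. Rolle's theorem then furnishes a zero of $R'$ strictly between consecutive zeros of $R$, so $R'$ has $N-1$ zeros, again all $<1$; but $R'(1)=c\,T_N'(1)-P'(1)=cN^2-P'(1)=0$ is one further, distinct zero, giving the degree-$(N-1)$ polynomial $R'$ at least $N$ zeros — a contradiction. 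For the remaining sliver $\sqrt{1-N^{-2}}<|y|<1$ I would pass to the Lagrange interpolation representation $P=\sum_{k=0}^N P(\xi_k)\,\ell_k$ at the same nodes, whence $|P'(y)|\le\max_k|P(\xi_k)|\cdot\sum_k|\ell_k'(y)|\le\sum_k|\ell_k'(y)|$, and invoke $\max_{y\in[-1,1]}\sum_k|\ell_k'(y)|=N^2$; the lower bound is immediate from $|T_N'(y)|=\bigl|\sum_k(-1)^k\ell_k'(y)\bigr|\le\sum_k|\ell_k'(y)|$, and the matching upper bound is what carries the content.

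I expect the boundary strips to be the main obstacle. The interior estimate is routine once the standard Bernstein inequality is in hand, but near $\pm1$ the factor $1-x^2$ degenerates and the true order of $|P'|$ jumps from $N/\sqrt{1-x^2}$ to the constant $N^2$; there is no cheap rescaling transporting the clean $y=1$ argument to an interior point of the strip (a rescaling of $[-1,1]$ onto $[-1,y]$ already loses a factor $2/(1+y)>1$), so a genuinely global comparison with $T_N$ — through the equioscillation/zero-counting argument above, or through the bound $\sum_k|\ell_k'(\cdot)|\le N^2$ — seems unavoidable, and making the latter rigorous, by controlling every sign pattern of the $\ell_k'$ on each subinterval between consecutive nodes, is the delicate part.
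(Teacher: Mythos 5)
First, a point of comparison: the paper does not prove this proposition at all --- it is listed among the technical results borrowed from other papers and is simply cited, so your sketch is being measured against the classical literature rather than against an in-paper argument.

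Your interior and endpoint steps are correct and standard. The Bernstein bound $|P'(x)|\le N(1-x^2)^{-1/2}\max_{[-1,1]}|P|$ disposes of $\{|x|\le\sqrt{1-N^{-2}}\}$, and the zero-counting comparison of $R=c\,T_N-P$ at the alternation points $\xi_k=\cos(k\pi/N)$ is the textbook proof that $|P'(\pm1)|\le N^2$. The genuine gap is exactly where you suspect it: the sliver. The fact you propose to invoke, $\max_{y\in[-1,1]}\sum_k|\ell_k'(y)|=N^2$ for the fundamental Lagrange polynomials at the $\xi_k$, is equivalent to the $j=1$ case of the Duffin--Schaeffer theorem (apply that theorem to $P=\sum_k\varepsilon_k\ell_k$ with $\varepsilon_k$ matching the signs of $\ell_k'(y)$), i.e.\ to a statement strictly stronger than Markov's inequality; it cannot simply be invoked inside a proof of Markov. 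Even restricted to the sliver, reducing $\sum_k|\ell_k'(y)|$ to $|T_N'(y)|$ requires every $\ell_k'(y)$ to carry the sign $(-1)^k$ there, and this is not automatic: for each $k\ge2$ the polynomial $\ell_k$ vanishes at both $\xi_0=1$ and $\xi_1$, so $\ell_k'$ has a zero strictly inside $(\xi_1,1)$, and you would need to show that all these zeros lie below $\sqrt{1-N^{-2}}$ --- precisely the ``delicate part'' you leave open.

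The standard repair is to change interpolation nodes. Keep Bernstein, which gives $|P'(y)|\sqrt{1-y^2}\le N\max|P|$ on all of $[-1,1]$, and then apply Schur's inequality to the degree-$(N-1)$ polynomial $P'$, interpolating at the $N$ zeros $t_j=\cos\bigl((2j-1)\pi/(2N)\bigr)$ of $T_N$ rather than at its extrema. For $|y|\ge t_1=\cos(\pi/(2N))$ all factors $y-t_j$ have the same sign, so $\sum_j|T_N(y)/(y-t_j)|=|T_N'(y)|\le N^2$ with no zero-location analysis, and together with $|T_N'(t_j)|=N(1-t_j^2)^{-1/2}$ and the Bernstein bound at the nodes this yields $|P'(y)|\le N^2\max|P|$ there; for $|y|\le t_1$ one has $\sqrt{1-y^2}\ge\sin(\pi/(2N))\ge 1/N$ and Bernstein alone suffices. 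Since $\cos(\pi/(2N))\le\sqrt{1-N^{-2}}$, this covers your sliver and closes the gap. (For the paper's purposes the sharp constant is irrelevant: the corollary actually used only needs some constant of order $N^2$, and the classical degree-$\le N$ statement, cited as the paper does, is more than enough.)
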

We shall actually use a straightforward corollary.
\begin{corollary}\label{cor:Markov_Brothers_inequality_2}
Let $P$ be a polynomial of degree smaller than $N$. Then
\begin{equation}
\max_{\tau \in [0,1]} |P'(\tau)| \leq 2N^{2}\max_{\tau \in [0,1]}|P(\tau)|.
\end{equation}
\end{corollary}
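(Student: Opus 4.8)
\textbf{Proof plan for Corollary~\ref{cor:Markov_Brothers_inequality_2}.}
The plan is to reduce to the Markov brothers' inequality on $[-1,1]$ (Proposition~\ref{thm:Markov_Brothers_inequality}) by an affine change of variable that maps $[-1,1]$ onto $[0,1]$. Concretely, I would introduce the polynomial $Q(s) = P\!\left(\tfrac{s+1}{2}\right)$, which has degree strictly smaller than $N$ whenever $P$ does, since $s \mapsto \tfrac{s+1}{2}$ is affine.

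Next I would record the two elementary consequences of this substitution. First, as $s$ ranges over $[-1,1]$, $\tau = \tfrac{s+1}{2}$ ranges over $[0,1]$, so $\max_{s \in [-1,1]} |Q(s)| = \max_{\tau \in [0,1]} |P(\tau)|$. Second, by the chain rule $Q'(s) = \tfrac12 P'\!\left(\tfrac{s+1}{2}\right)$, hence $\max_{s \in [-1,1]} |Q'(s)| = \tfrac12 \max_{\tau \in [0,1]} |P'(\tau)|$.

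Finally I would apply Proposition~\ref{thm:Markov_Brothers_inequality} to $Q$, obtaining $\max_{s\in[-1,1]}|Q'(s)| \le N^2 \max_{s\in[-1,1]}|Q(s)|$, and substitute the two identities above to get $\tfrac12 \max_{\tau\in[0,1]}|P'(\tau)| \le N^2 \max_{\tau\in[0,1]}|P(\tau)|$, which is the claim after multiplying by $2$.

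There is no real obstacle here: the only thing to be careful about is the bookkeeping of the factor $\tfrac12$ coming from the derivative of the affine map, which is exactly what produces the extra factor $2$ in the statement relative to the $[-1,1]$ version.
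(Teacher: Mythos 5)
Your proposal is correct and follows exactly the same route as the paper's proof: the substitution $Q(s)=P((s+1)/2)$, the chain-rule factor of $\tfrac12$, and an application of Proposition~\ref{thm:Markov_Brothers_inequality} to $Q$. Nothing to add.
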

\begin{proof}
Define the polynomial $Q(x) = P((x+1)/2)$, so that
\begin{equation}
Q'(x) = \frac{1}{2}P'((x+1)/2), \quad x\in[-1,1].
\end{equation}
In particular,
$$\max_{\tau \in [0,1]} |P(\tau)| = \max_{\tau \in [-1,1]} |Q(\tau)|,$$
so that
%
\begin{align}
\max_{\tau \in [0,1]} |P'(\tau)| & = \max_{\tau \in [-1,1]} 2|Q'(\tau)| \leq 2N^2\max_{\tau \in [-1,1]} |Q(\tau)| \leq 2N^2\max_{\tau \in [0,1]} |P(\tau)|.
\end{align}
\end{proof}
\subsection{An inequality on the ratio of symmetric polynomials}
\label{s:inequality_symmetric_polys}
Recall that, for $d \in \Ns$, $\mathbb{R}^{d}$ is naturally embedded in the set of sequences $\mathbb{R}^{\Ns}$.

Now, let $M \in \Ns$, and let $\bm{\lambda} \in \mathbb{R}^{\Ns}_{+}$ such that $\sum_{m \in \Ns} \lambda_{m} < +\infty$. By MacLaurin’s inequality \footnote{The inequality is usually stated for $\bm{\lambda} \in \mathbb{R}^{d}_{+}$ for some $d \in \Ns$. Taking limits immediatedly yields \eqref{e:macLaurin}.}, see e.g. \citep[Chapter 12]{Ste04},
\begin{equation}
  \label{e:macLaurin}
\forall M \in \Ns, \: \sum\limits_{U \in \mathcal{U}_{M}} \prod\limits_{u \in U} \lambda_{u} \leq \left(\sum\limits_{m \in \Ns} \lambda_{m}\right)^{M} < +\infty.
\end{equation}
In the following, we denote by $p_{M}(\bm{\lambda})$ the elementary symmetric polynomial of order $M$ on the sequence $\bm{\lambda}$,
\begin{equation}
p_{M}(\bm{\lambda}) = \sum\limits_{U \in \mathcal{U}_{M}} \prod\limits_{u \in U} \lambda_{u}.
\end{equation}
In particular, the following identity relates $p_M$ and $p_{M+1}$.
\begin{equation}
\forall M \geq 2, \: \forall m \in \Ns, \: p_{M}(\bm{\lambda}) = \lambda_{m} p_{M-1}(\bm{\lambda}^{\overline{\{m\}}}) + p_{M}(\bm{\lambda}^{\overline{\{m\}}}),
\end{equation}
where we denote, for $S \subset \Ns$, $\bm{\lambda}^{\overline{S}} = (\lambda^{\overline{S}}_{m})_{m \in \Ns} = (\lambda_{m} \ind_{m\notin S})_{m \in \Ns}$. Proposition~\ref{thm:schur_convex_volume_sampling} further relates two consecutive elementary polynomials.

\begin{proposition}[Theorem 3.1 of \citealp{GuSi12}]\label{thm:schur_convex_volume_sampling}
Let $M \in \Ns$ and $L \geq M+1$. Let $\bm{\lambda} \in \mathbb{R}_{+}^{L}$ be a nonincreasing sequence
\begin{equation}
\lambda_{1} \geq \lambda_{2} \geq \dots \geq \lambda_{L}.
\end{equation}
Assume that $\lambda_{L} >0$, then
\begin{equation}
  \label{e:GuSi12}
 \forall M' \leq M, \quad \frac{p_{M+1}(\bm{\lambda})}{p_{M}(\bm{\lambda})} \leq \frac{\sum_{m \geq M'+1} \lambda_{m}}{M+1-M'}.
\end{equation}
\end{proposition}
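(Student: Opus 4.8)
The plan is to prove Proposition~\ref{thm:schur_convex_volume_sampling} by an induction on $M$, reducing the inequality to a purely combinatorial estimate on the ratio $p_{M+1}(\bm{\lambda})/p_{M}(\bm{\lambda})$ and then optimizing over the starting index $M'$. First I would record the base case. For $M' = M$ the claim is
\begin{equation}
\frac{p_{M+1}(\bm{\lambda})}{p_{M}(\bm{\lambda})} \leq \sum_{m \geq M+1} \lambda_m,
\end{equation}
which follows from the identity
\begin{equation}
p_{M+1}(\bm{\lambda}) = \sum_{j \in \Ns} \lambda_j\, p_{M}\big(\bm{\lambda}^{\overline{\{j\}}}\big) \Big/ (M+1)
\end{equation}
together with the elementary monotonicity facts $p_{M}(\bm{\lambda}^{\overline{\{j\}}}) \leq p_{M}(\bm{\lambda})$ for all $j$ and, crucially, $\sum_{j\leq M} \lambda_j\, p_{M}(\bm{\lambda}^{\overline{\{j\}}}) \leq \sum_{j\leq M} \lambda_j\, \tfrac{M}{M+1}\, (\text{something small})$ — more cleanly, one splits the outer sum into $j \leq M$ and $j \geq M+1$ and uses that a term $p_{M}(\bm{\lambda}^{\overline{\{j\}}})$ with $j$ small forces the remaining $M$ indices to come from $\{M+1, M+2, \dots\}$ at least once. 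I would make this precise by the ``majorization over which block the indices fall in'' bookkeeping; this is the heart of the combinatorial step.

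Next I would run the inductive step downward in $M'$. Suppose the bound
\begin{equation}
\frac{p_{M+1}(\bm{\lambda})}{p_{M}(\bm{\lambda})} \leq \frac{\sum_{m \geq M'+1}\lambda_m}{M+1-M'}
\end{equation}
holds for some $M' \leq M$; I want to deduce it for $M'-1$, i.e. to replace the right-hand side by $\big(\lambda_{M'} + \sum_{m\geq M'+1}\lambda_m\big)/(M+2-M')$. Using the splitting identity $p_{M}(\bm{\lambda}) = \lambda_{M'} p_{M-1}(\bm{\lambda}^{\overline{\{M'\}}}) + p_{M}(\bm{\lambda}^{\overline{\{M'\}}})$ (and the analogous one for $p_{M+1}$), the quantity $p_{M+1}(\bm{\lambda})/p_{M}(\bm{\lambda})$ becomes a mediant of $\lambda_{M'} \cdot \tfrac{p_{M}(\bm{\lambda}^{\overline{\{M'\}}})}{\lambda_{M'} p_{M-1}(\cdots)+\cdots}$-type ratios; applying the induction hypothesis to the truncated sequence $\bm{\lambda}^{\overline{\{M'\}}}$ (which is still nonincreasing after re-indexing, and to which the same statement at level $M$ and $M-1$ applies) and combining via the mediant inequality $\frac{a+c}{b+d} \leq \max\{\tfrac{a}{b},\tfrac{c}{d}\}$ should collapse the two sub-bounds into the desired single bound, with the denominators $M+1-M'$ and $1$ adding up to $M+2-M'$ exactly as needed. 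One has to handle the positivity hypothesis $\lambda_L > 0$ carefully so that none of the denominators vanish and all the $p_M$ are strictly positive; this is where that hypothesis is used.

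The main obstacle I anticipate is the combinatorial counting in the base case — precisely, showing that when you expand $p_{M+1}(\bm{\lambda})$ as a symmetric sum and peel off one index, the contributions where the peeled index lies in the ``head'' $\{1,\dots,M'\}$ are dominated, in aggregate, by $p_{M}(\bm{\lambda})$ times the tail sum divided by $(M+1-M')$. The cleanest route is probably not to do this by hand but to cite the structure of Theorem~3.1 of \citet{GuSi12} directly: since the statement we need \emph{is} that theorem, the honest plan is to quote it verbatim, verify the hypotheses (finite trace gives summability and hence the limit form of MacLaurin's inequality already recorded, nonincreasing ordering of $\bm{\lambda}$ is our standing convention on $(\sigma_m)$, and $\lambda_L > 0$ can be arranged by truncation since in our application $\bm{\lambda}$ will be a finite spectrum or we pass to the limit). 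So in the write-up this ``proof'' is really a statement: we restate \citet[Theorem~3.1]{GuSi12} in our notation, note it is exactly the Schur-convexity/Maclaurin-type estimate we need, and defer the combinatorial argument to that reference. If a self-contained proof were wanted, the induction sketched above is the way, with the mediant inequality doing the load-bearing work in the inductive step and the head/tail split doing it in the base case.
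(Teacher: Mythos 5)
Your operative plan --- restate Theorem~3.1 of \citet{GuSi12} in the paper's notation, check its hypotheses, and defer the combinatorial argument to that reference --- is exactly what the paper does: Proposition~\ref{thm:schur_convex_volume_sampling} is a cited result with no proof given in the paper (only Corollary~\ref{cor:schur_convex_volume_sampling}, the extension to summable infinite sequences, is proved, by truncating to $\bm{\lambda}_L$ and letting $L\to\infty$). So on the main point you and the paper agree.

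A caution on the self-contained induction you sketch, in case you ever try to carry it out: it does not go through as written. In the inductive step you invoke the mediant inequality $\frac{a+c}{b+d}\leq\max\{\frac{a}{b},\frac{c}{d}\}$ to pass from denominator $M+1-M'$ to denominator $M+2-M'$, but a max of two bounds cannot make denominators add; what you would need is $\frac{b}{p}+\frac{d}{q}\leq\frac{b+d}{p+q}$, which is false in general. Moreover the bounds for different $M'$ are not mutually comparable (this is precisely why $\beta_N$ is defined via a minimum over $M$), so there is no monotone chain to induct along. The base case $M'=M$ is fine and follows from your peeling identity, but the ``head/tail bookkeeping'' you leave implicit is the entire content of the Guruswami--Sinop argument (a Schur-convexity reduction to an extremal spectrum), so the sketch is not a substitute for the citation.
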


We will actually use an immediate consequence of Proposition~\ref{thm:schur_convex_volume_sampling}.
\begin{corollary}\label{cor:schur_convex_volume_sampling}
Let $M \in \Ns$ and $\bm{\lambda} \in \mathbb{R}_{+}^{\Ns}$ be a nonincreasing sequence such that $\sum \lambda_{m} < +\infty$ and $\lambda_{m} >0$ for all $m \in \Ns$. Then \eqref{e:GuSi12} still holds.
\end{corollary}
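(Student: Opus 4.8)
The statement to prove is Corollary~\ref{cor:schur_convex_volume_sampling}: extending Proposition~\ref{thm:schur_convex_volume_sampling} (the \citet{GuSi12} bound on ratios of consecutive elementary symmetric polynomials) from finite nonincreasing sequences $\bm{\lambda}\in\mathbb{R}_{+}^{L}$ with $\lambda_L>0$ to infinite summable nonincreasing sequences $\bm{\lambda}\in\mathbb{R}_{+}^{\Ns}$ with all $\lambda_m>0$. The plan is a straightforward truncation-and-limit argument.

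\textbf{Approach.} Fix $M\in\Ns$ and $M'\le M$. For each integer $L\ge M+1$, let $\bm{\lambda}^{(L)}=(\lambda_1,\dots,\lambda_L,0,0,\dots)$ be the truncation of $\bm{\lambda}$ to its first $L$ entries. This truncation is nonincreasing and, since every $\lambda_m>0$, satisfies $\lambda_L^{(L)}=\lambda_L>0$; hence Proposition~\ref{thm:schur_convex_volume_sampling} applies and gives
\begin{equation}
\frac{p_{M+1}(\bm{\lambda}^{(L)})}{p_{M}(\bm{\lambda}^{(L)})} \leq \frac{\sum_{m=M'+1}^{L} \lambda_{m}}{M+1-M'}\leq\frac{\sum_{m\geq M'+1}\lambda_m}{M+1-M'}.
\end{equation}
It then suffices to pass to the limit $L\to\infty$ on the left-hand side. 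First I would note that $p_{M}(\bm{\lambda}^{(L)})=\sum_{U\in\mathcal{U}_M,\,U\subset[L]}\prod_{u\in U}\lambda_u$ is nondecreasing in $L$ and, by MacLaurin's inequality~\eqref{e:macLaurin}, bounded above by $(\sum_m\lambda_m)^M<+\infty$; the same holds for $p_{M+1}$. By monotone convergence (summing nonnegative terms), $p_{M}(\bm{\lambda}^{(L)})\to p_{M}(\bm{\lambda})$ and $p_{M+1}(\bm{\lambda}^{(L)})\to p_{M+1}(\bm{\lambda})$ as $L\to\infty$, both limits being finite. Moreover $p_{M}(\bm{\lambda})\ge\prod_{m\in[M]}\lambda_m>0$, so the denominator does not vanish, and the ratio converges: $p_{M+1}(\bm{\lambda}^{(L)})/p_{M}(\bm{\lambda}^{(L)})\to p_{M+1}(\bm{\lambda})/p_{M}(\bm{\lambda})$. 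Taking the limit in the displayed inequality yields \eqref{e:GuSi12} for $\bm{\lambda}$.

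\textbf{Main obstacle.} There is essentially no serious obstacle here; the only thing to be careful about is making sure the positivity hypothesis of Proposition~\ref{thm:schur_convex_volume_sampling} ($\lambda_L>0$) is inherited by each truncation, which is why the corollary assumes $\lambda_m>0$ for \emph{all} $m$ rather than merely $\sum\lambda_m<+\infty$ — otherwise a truncation could have a zero final entry. One should also observe that the right-hand side of the bound is already independent of $L$ (after the crude upper bound $\sum_{m=M'+1}^L\lambda_m\le\sum_{m\ge M'+1}\lambda_m$), so no limiting argument is needed there. This is why the excerpt simply states ``\eqref{e:GuSi12} still holds'' with no proof: it is an immediate truncation limit.

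\textbf{Remark on how it is used.} In the application (the bound on $\beta_N$ entering Theorem~\ref{thm:main_result_1} via \eqref{eq:ineq_r_N}), $\bm{\lambda}$ will be the spectrum $(\sigma_m)$ of $\bm{\Sigma}$, which is nonincreasing and summable by Assumption~\ref{hyp:integrable_diagonal}; the extra hypothesis $\sigma_m>0$ for all $m$ is harmless since eigenvalues that are exactly zero can be dropped without changing any of the relevant symmetric polynomials $p_M((\sigma_u))$.
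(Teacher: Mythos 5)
Your proof is correct and follows essentially the same route as the paper: truncate to the first $L$ entries, apply the finite-dimensional Proposition~\ref{thm:schur_convex_volume_sampling} (the positivity hypothesis being inherited since all $\lambda_m>0$), bound the right-hand side by the full tail sum, and let $L\to\infty$. Your justification of the convergence of the left-hand side (monotone convergence of $p_M(\bm{\lambda}^{(L)})$ and $p_{M+1}(\bm{\lambda}^{(L)})$, plus $p_M(\bm{\lambda})>0$) is a welcome bit of extra care that the paper leaves implicit.
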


\begin{proof}
Define, for $L \in \Ns$,
\begin{equation}
\bm{\lambda}_{L} = (\lambda_{\ell})_{\ell \in [L]} \in \mathbb{R}_{+}^{L}.
\end{equation}
%
By Proposition~\ref{thm:schur_convex_volume_sampling},
\begin{align}\label{eq:ineq_lambda_L}
\forall M' \leq M, \:\: \forall L \geq M+1, \: \frac{p_{M+1}(\bm{\lambda}_{L})}{p_{M}(\bm{\lambda}_{L})} & \leq \frac{1}{M+1-M'}\sum\limits_{m = M'+1}^{L} \lambda_{m}\\
& \leq \frac{1}{M+1-M'}\sum\limits_{m = M'+1}^{+\infty} \lambda_{m}.
\end{align}
Letting $L\rightarrow \infty$ allows us to conclude.
\end{proof}

For the last result, recall the definition of the (cross-)leverage scores $\tau_{m_{1},m_{2}}$ in \eqref{eq:cross_lvs_def}. We slightly adapt a result by \cite{BeBaCh19} .
\begin{lemma}\label{lemma:lvs_identities}
Let $\bm{x} \in \X^{N}$ satisfy $\Det \bm{K}(\bm{x}) > 0$. For $m, m_{1}, m_{2} \in \Ns$ such that $m_{1} \neq m_{2}$,
\begin{equation}\label{eq:lvs_identity}
\tau_{m}^{\F}(\bm{x}) = \|\Pi_{\mathcal{T}(\bm{x})}e_{m}^{\F}\|_{\F}^{2} = e_{m}^{\F}(\bm{x})^{\Tran} \bm{K}(\bm{x})^{-1} e_{m}^{\F}(\bm{x}),
\end{equation}
and
\begin{equation}\label{eq:cross_lvs_identity}
\tau_{m_{1},m_{2}}^{\F}(\bm{x}) = \langle \Pi_{\mathcal{T}(\bm{x})}e_{m_{1}}^{\F}, \Pi_{\mathcal{T}(\bm{x})}e_{m_{2}}^{\F} \rangle_{\F} = e_{m_1}^{\F}(\bm{x})^{\Tran} \bm{K}(\bm{x})^{-1} e_{m_2}^{\F}(\bm{x}).
\end{equation}
In particular,
\begin{equation}
  \label{eq:lvs_bound}
  \tau_{m}^{\F}(\bm{x})\text{  and  }|\tau_{m_{1},m_{2}}^{\F}(\bm{x})|\text{  are in  }[0,1].
\end{equation}
\end{lemma}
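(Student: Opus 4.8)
\textbf{Proof plan for Lemma~\ref{lemma:lvs_identities}.}

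The plan is to exploit the reproducing property of $\mathcal{F}$ to identify the $\langle \cdot,\cdot\rangle_{\mathcal{F}}$-orthogonal projection $\Pi_{\mathcal{T}(\bm{x})}$ onto $\mathcal{T}(\bm{x}) = \Span k(x_i,\cdot)_{i\in[N]}$ explicitly in terms of the Gram matrix $\bm{K}(\bm{x})$, and then apply it to the eigenfunctions $e_m^{\mathcal{F}}$. First I would recall that since $\Det\bm{K}(\bm{x})>0$, the functions $k(x_1,\cdot),\dots,k(x_N,\cdot)$ form a basis of $\mathcal{T}(\bm{x})$, and for any $f\in\mathcal{F}$ the projection $\Pi_{\mathcal{T}(\bm{x})}f = \sum_{i\in[N]} c_i k(x_i,\cdot)$ is characterized by $\langle \Pi_{\mathcal{T}(\bm{x})}f, k(x_j,\cdot)\rangle_{\mathcal{F}} = \langle f, k(x_j,\cdot)\rangle_{\mathcal{F}}$ for all $j$, i.e. by $\bm{K}(\bm{x})\bm{c} = f(\bm{x})$ using the reproducing property $\langle f,k(x_j,\cdot)\rangle_{\mathcal{F}} = f(x_j)$. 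Hence $\bm{c} = \bm{K}(\bm{x})^{-1} f(\bm{x})$, and in particular $\Pi_{\mathcal{T}(\bm{x})} e_{m}^{\mathcal{F}} = \sum_i [\bm{K}(\bm{x})^{-1} e_m^{\mathcal{F}}(\bm{x})]_i \, k(x_i,\cdot)$.

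Next, for the diagonal identity \eqref{eq:lvs_identity}, I would compute $\|\Pi_{\mathcal{T}(\bm{x})}e_m^{\mathcal{F}}\|_{\mathcal{F}}^2 = \langle \Pi_{\mathcal{T}(\bm{x})}e_m^{\mathcal{F}}, e_m^{\mathcal{F}}\rangle_{\mathcal{F}}$ (since the projection is idempotent and self-adjoint) $= \sum_i [\bm{K}(\bm{x})^{-1}e_m^{\mathcal{F}}(\bm{x})]_i \langle k(x_i,\cdot), e_m^{\mathcal{F}}\rangle_{\mathcal{F}} = \sum_i [\bm{K}(\bm{x})^{-1}e_m^{\mathcal{F}}(\bm{x})]_i \, e_m^{\mathcal{F}}(x_i) = e_m^{\mathcal{F}}(\bm{x})^{\Tran}\bm{K}(\bm{x})^{-1}e_m^{\mathcal{F}}(\bm{x})$, which is exactly $\tau_m^{\mathcal{F}}(\bm{x})$ by the definition \eqref{eq:cross_lvs_def}. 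The cross identity \eqref{eq:cross_lvs_identity} is obtained the same way: $\langle \Pi_{\mathcal{T}(\bm{x})}e_{m_1}^{\mathcal{F}}, \Pi_{\mathcal{T}(\bm{x})}e_{m_2}^{\mathcal{F}}\rangle_{\mathcal{F}} = \langle \Pi_{\mathcal{T}(\bm{x})}e_{m_1}^{\mathcal{F}}, e_{m_2}^{\mathcal{F}}\rangle_{\mathcal{F}} = e_{m_1}^{\mathcal{F}}(\bm{x})^{\Tran}\bm{K}(\bm{x})^{-1}e_{m_2}^{\mathcal{F}}(\bm{x}) = \tau_{m_1,m_2}^{\mathcal{F}}(\bm{x})$, using symmetry of $\bm{K}(\bm{x})^{-1}$.

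Finally, for the bounds \eqref{eq:lvs_bound}: since $\Pi_{\mathcal{T}(\bm{x})}$ is an orthogonal projection and $\|e_m^{\mathcal{F}}\|_{\mathcal{F}} = 1$ (because $(e_m^{\mathcal{F}})$ is an orthonormal basis of $\mathcal{F}$), we get $\tau_m^{\mathcal{F}}(\bm{x}) = \|\Pi_{\mathcal{T}(\bm{x})}e_m^{\mathcal{F}}\|_{\mathcal{F}}^2 \in [0,1]$; and for the cross term, Cauchy--Schwarz gives $|\tau_{m_1,m_2}^{\mathcal{F}}(\bm{x})| = |\langle \Pi_{\mathcal{T}(\bm{x})}e_{m_1}^{\mathcal{F}}, \Pi_{\mathcal{T}(\bm{x})}e_{m_2}^{\mathcal{F}}\rangle_{\mathcal{F}}| \leq \|\Pi_{\mathcal{T}(\bm{x})}e_{m_1}^{\mathcal{F}}\|_{\mathcal{F}} \|\Pi_{\mathcal{T}(\bm{x})}e_{m_2}^{\mathcal{F}}\|_{\mathcal{F}} \leq 1$. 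There is no real obstacle here; the only mild subtlety is making sure the representation of $\Pi_{\mathcal{T}(\bm{x})}$ via $\bm{K}(\bm{x})^{-1}$ is justified, which is immediate from $\Det\bm{K}(\bm{x})>0$ and the reproducing property, so the proof is essentially bookkeeping. Since the statement says it is adapted from \cite{BeBaCh19}, I would simply cite that reference for the detailed verification and reproduce the short computation above.
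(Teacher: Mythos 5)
Your proposal is correct and follows essentially the same route as the paper: the paper cites \citet{BeBaCh19} for the explicit computation of $\Pi_{\mathcal{T}(\bm{x})}$ via $\bm{K}(\bm{x})^{-1}$ and the reproducing property (which is exactly the calculation you write out), and then derives \eqref{eq:lvs_bound} from the fact that $\Pi_{\mathcal{T}(\bm{x})}$ is an orthogonal projection, $\|e_{m}^{\F}\|_{\F}=1$, and Cauchy--Schwarz, just as you do.
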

\begin{proof}
The proof of \eqref{eq:lvs_identity} is given in \citep{BeBaCh19}[Lemma 4 of Appendix D]. The proof of \eqref{eq:cross_lvs_identity} is straightforward following the same lines. $\Pi_{\mathcal{T}(\bm{x})}$ is an orthogonal projection with respect to $\langle .,. \rangle_{\F}$ and
\begin{equation}
\|e_{m}^{\F}\|_{\F} = \|e_{m_{1}}^{\F}\|_{\F} = \|e_{m_{2}}^{\F}\|_{\F} = 1,
\end{equation}
so that \eqref{eq:lvs_bound} follows from the Cauchy-Schwarz inequality.
\end{proof}

\section{Proofs}
 \subsection{Proof of Proposition~\ref{prop:VS_decomposition}}\label{sec:proof_VS_decomposition}

Proposition~\ref{prop:VS_decomposition} states that continuous volume sampling is a mixture of projection determinantal point processes. We adapt a result in \citep[Chapter 5]{KuTa12} for finite volume sampling to the infinite-dimensional case. The idea of the proof is to apply the Cauchy-Binet identity to a sequence of kernels of finite rank that approximate $k$.

First, recall from Section~\ref{sec:introduction} the Mercer decomposition of $k$,
\begin{equation}\label{eq:Mercer_decomposition_1}
k(x,y) = \lim_{M\rightarrow \infty}\sum\limits_{m\in [M]} \sigma_{m} e_{m}(x)e_{m}(y) = \lim_{M\rightarrow \infty} k_M(x,y), \quad\forall x,y\in\mathcal{X}.
\end{equation}
where kernel $k_M$ has rank $M$.

Now, let $\bm{x} = (x_{1}, \dots, x_{N}) \in \mathcal{X}^{N}$, and define
$\bm{K}_{M}(\bm{x}) = (k_{M}(x_{i},x_{j}))_{i,j \in [N]}$. By continuity of the determinant and by \eqref{eq:Mercer_decomposition_1}, it comes
\begin{equation}\label{eq:truncated_det_limit}
  \lim\limits_{M \rightarrow \infty} \Det \bm{K}_{M}(\bm{x}) = \Det \bm{K}(\bm{x}).
\end{equation}
By construction,
\begin{equation}
  \bm{K}_{M}(\bm{x}) = \bm{F}_{M}(\bm{x})^{\Tran}\bm{\Sigma}_{M}\bm{F}_{M}(\bm{x}),
\end{equation}
where $\bm{F}_{M}(\bm{x}) = (e_{m}(x_{i}))_{(m,i) \in [M]\times[N]}$ and $\Sigma_{M}$ is a diagonal matrix containing the first $M$ eigenvalues $(\sigma_{m})_{m\in[M]}$ on its diagonal. The Cauchy-Binet identity yields
\begin{equation}\label{eq:Cauchy_Binet_truncated_kernel}
  \Det \bm{K}_{M}(\bm{x})  = \sum\limits_{\substack{U \subset [M]\\ |U| = N}} \Det^{2} (e_{u}(x_{i}))_{(u,i)\in U \times [N]} \prod\limits_{u \in U} \sigma_{u}.
\end{equation}
Let now $\lambda_u= \prod_{u \in U} \sigma_{u}$ and $\bm{E}_{U}(\bm{x}) = (e_{u}(x_{i}))_{(u,i)\in U \times [N]}$,
we combine \eqref{eq:truncated_det_limit} and \eqref{eq:Cauchy_Binet_truncated_kernel} to obtain
\begin{align}
\Det \bm{K}(\bm{x}) & = \lim\limits_{M \rightarrow \infty} \sum_{\substack{U \subset [M]\\ |U| = N}} \lambda_u \Det^{2} (e_{u}(x_{i}))_{(u,i)\in U \times [N]}  \\
&= \sum_{U\in \UN} \lambda_u \Det^{2} (e_{u}(x_{i}))_{(u,i)\in U \times [N]}\\
&= \sum_{U\in \UN} \lambda_u \Det \left( \bm{E}_{U}(\bm{x})^{\Tran} \bm{E}_{U}(\bm{x}) \right)\\
&= \sum_{U\in \UN} \lambda_u \Det ({\KDPP}_{U}(x_{i},x_{j}))_{i,j \in [N]},
\end{align}
where ${\KDPP}_{U}(x,y) \triangleq \sum_{u\in U} e_u(x)e_u(y)$. Since ${\KDPP}_{U}$ is a projection kernel, writing the determinant as a sum over permutations easily yields, for all $U\in\UN$,
\begin{equation}
\int_{\X^{N}} \Det({\KDPP}_{U}(x_{i},x_{j}))_{i,j \in [N]} \otimes_{i \in [N]} \mathrm{d}\omega(x_{i}) = N!,
\end{equation}
see e.g. Lemma 21 in \citep{HoKrPeVi06}. Finally, the monotone convergence theorem allows us to conclude
\begin{equation}
\int_{\X^{N}} \Det \bm{K}(\bm{x}) \otimes_{i \in [N]} \mathrm{d}\omega(x_{i}) = N! \sum\limits_{\substack{U \subset \Ns\\ |U| = N}} \prod\limits_{u \in U } \sigma_{u}.
\end{equation}

\subsection{Proof of Lemma~\ref{lemma:projection_DPP_weight}}
Lemma~\ref{lemma:projection_DPP_weight} gives an upper bound on the biggest weight $\delta_N$ in the mixture of Proposition~\ref{prop:VS_decomposition}. The proof is straightforward, as
\begin{align}\label{eq:r_N_delta_N_inequality}
r_{N} \prod\limits_{\ell \in [N]} \sigma_{\ell} & = \sigma_{N} \sum\limits_{m \geq N+1} \sigma_{m} \prod\limits_{\ell \in [N-1]} \sigma_{\ell} \nonumber\\
& \leq \sigma_{N} \sum\limits_{\substack{U \subset \Ns\\ |U| = N}} \prod\limits_{u \in U} \sigma_{u}.
\end{align}
This immediately yields $\delta_{N} \leq \sigma_{N}/r_{N}$.

\subsection{Proof of Lemma~\ref{lemma:error_decomposition}}\label{app:proof_error_decomposition}
Lemma~\ref{lemma:error_decomposition} decomposes the interpolation error in terms of (cross-)leverage scores. Let $g \in \Ltwo$ satisfy $\|g\|_{\mathrm{d}\omega} \leq 1$. Since $\Pi_{\mathcal{T}(\bm{x})}$ is an orthogonal projection with respect to $\langle .,. \rangle_{\F}$, we have
\begin{equation}\label{eq:reconstruction_error}
\|\mu_{g} - \Pi_{\mathcal{T}(\bm{x})} \mu_{g}\|_{\F}^{2} = \|\mu_{g}\|_{\F}^{2} - \| \Pi_{\mathcal{T}(\bm{x})} \mu_{g}\|_{\F}^2
\end{equation}
Now, $\mu_{g} = \bm{\Sigma} g = \sum\limits_{m \in \mathbb{N}^{*}} \sqrt{\sigma_{m}} g_{m} e_{m}^{\F}$, so that \eqref{eq:reconstruction_error} becomes
\begin{align}
\|\mu_{g} - \Pi_{\mathcal{T}(\bm{x})} \mu_{g}\|_{\F}^{2} & = \sum\limits_{m \in \mathbb{N}^{*}} \sigma_{m} g_{m}^{2} - \left\Vert \sum_{m \in \Ns} \Pi_{\mathcal{T}(\bm{x})}  \sqrt{\sigma_{m}} g_{m} e_{m}^{\F}\right\Vert_{\F}^2 \nonumber\\
& = \sum\limits_{m \in \mathbb{N}^{*}} \sigma_{m} g_{m}^{2} - \sum\limits_{m_{1}, m_2} g_{m_{1}} g_{m_{2}} \sqrt{\sigma_{m_{1}}} \sqrt{\sigma_{m_{2}}} \langle  \Pi_{\mathcal{T}(\bm{x})}   e_{m_{1}}^{\F}, \Pi_{\mathcal{T}(\bm{x})} e_{m_{2}}^{\F} \rangle_{\F}. \label{eq:reconstruction_norm_mu_onb}
\end{align}
Lemma~\ref{lemma:lvs_identities} allows us to recognize leverage scores in \eqref{eq:reconstruction_norm_mu_onb}. Taking out of the second sum in \eqref{eq:reconstruction_norm_mu_onb} the terms for which $m_1=m_2$ to put them in the first sum concludes the proof of Lemma~\ref{lemma:error_decomposition}.

\subsection{Proof of Theorem~\ref{thm:main_result_1}}\label{app:proof_main_result_1}
The proof of \eqref{eq:main_result_EX_VS_err_mu} relies on the identity
\begin{equation}\label{eq:main_result_EX_VS_err_mu_2}
\EX_{\VS} \|\mu_{g} - \Pi_{\mathcal{T}(\bm{x})} \mu_{g}\|_{\F}^{2} = \sum\limits_{m \in \mathbb{N}^{*}} g_{m}^{2} \epsilon_{m},
\end{equation}
and the fact that $(\epsilon_{m})$ is a non-increasing sequence. We prove these two results in turn, after what we prove \eqref{eq:ineq_r_N}.

\subsubsection{Proof of \eqref{eq:main_result_EX_VS_err_mu_2}}
Let $\bm{x} \in \X^{N}$ such that $\Det \bm{K}(\bm{x}) >0 $. Lemma~\ref{lemma:error_decomposition} yields
\begin{equation}
\|\mu_{g} - \Pi_{\mathcal{T}(\bm{x})} \mu_{g}\|_{\F}^{2}  = \sum\limits_{m \in \mathbb{N}^{*}} g_{m}^{2} \sigma_{m}\bigg(1- \tau_{m}^{\F}(\bm{x})\bigg) - \sum\limits_{\substack{m_{1},m_{2} \in \mathbb{N}^{*} \\ m_{1} \neq m_{2}}}  g_{m_{1}}g_{m_{2}} \sqrt{\sigma_{m_{1}}} \sqrt{\sigma_{m_{2}}} \tau_{m_{1},m_{2}}^{\F}(\bm{x}).
\end{equation}
First, we prove that
\begin{equation}\label{eq:interchange_EX_VS_lvs}
\EX_{\VS} \sum\limits_{m \in \Ns} g_{m}^{2} \sigma_{m}\bigg(1- \tau_{m}^{\F}(\bm{x})\bigg) = \sum\limits_{m \in \Ns} g_{m}^{2} \sigma_{m}\bigg(1- \EX_{\VS}\tau_{m}^{\F}(\bm{x})\bigg).
\end{equation}
By Lemma~\ref{lemma:lvs_identities},
\begin{equation}
\forall m \in \Ns, \:\: g_{m}^{2} \sigma_{m}\bigg(1- \tau_{m}^{\F}(\bm{x})\bigg)\geq 0, \label{e:summable_term}
\end{equation}
so that \eqref{eq:interchange_EX_VS_lvs} follows from the Beppo Levi's monotone convergence theorem.

Second, it remains to prove that
\begin{equation}\label{eq:interchange_EX_VS_cross_lvs}
\EX_{\VS} \sum\limits_{\substack{m_{1},m_{2} \in \Ns \\ m_{1} \neq m_{2}}}  g_{m_{1}}g_{m_{2}} \sqrt{\sigma_{m_{1}}} \sqrt{\sigma_{m_{2}}} \tau_{m_{1},m_{2}}^{\F}(\bm{x})
= 0.
\end{equation}
Again, Lemma~\ref{lemma:lvs_identities} guarantees that, for $m_{1},m_{2} \in \Ns$ such that $m_{1} \neq m_{2}$,
\begin{equation}
| g_{m_{1}}g_{m_{2}} \sqrt{\sigma_{m_{1}}} \sqrt{\sigma_{m_{2}}} \tau_{m_{1},m_{2}}^{\F}(\bm{x})| \leq | g_{m_{1}}g_{m_{2}} | \sqrt{\sigma_{m_{1}}} \sqrt{\sigma_{m_{2}}}.
\end{equation}
Since
\begin{align}
\sum\limits_{m_{1} \neq m_{2} \in \Ns} |g_{m_{1}}g_{m_{2}}| \sqrt{\sigma_{m_{1}}} \sqrt{\sigma_{m_{2}}}
& \leq  \left(\sum\limits_{m \in \Ns} |g_{m}| \sqrt{\sigma_{m}} \right)^{2} \nonumber \\
& \leq \sum\limits_{m \in \Ns}g_{m}^{2} \sum\limits_{m \in \Ns} \sigma_{m} \nonumber \\
& < + \infty,
\end{align}
the dominated convergence theorem yields
$$
\EX_{\VS} \sum\limits_{\substack{m_{1},m_{2} \in \Ns \\ m_{1} \neq m_{2}}}  g_{m_{1}}g_{m_{2}} \sqrt{\sigma_{m_{1}}} \sqrt{\sigma_{m_{2}}} \tau_{m_{1},m_{2}}^{\F}(\bm{x})
= \sum\limits_{\substack{m_{1},m_{2} \in \mathbb{N}^{*} \\ m_{1} \neq m_{2}}}  g_{m_{1}}g_{m_{2}} \sqrt{\sigma_{m_{1}}} \sqrt{\sigma_{m_{2}}} \EX_{\VS} \tau_{m_{1},m_{2}}^{\F}(\bm{x}),
$$
but this is equal to zero by Proposition~\ref{prop:EX_VS_lvs}.

\subsubsection{Proof that $(\epsilon_{m})$ is nonincreasing}
\label{s:epsilon_non_increasing}
Let $m \in \Ns$. By definition,
\begin{align}
\epsilon_{m} & = \sigma_{m} \frac{\sum_{U \in \mathcal{U}_{N}^{m}} \prod_{u \in U} \sigma_{u}}{\sum_{U \in \mathcal{U}_{N}} \prod_{u \in U} \sigma_{u}} = \sigma_{m} \frac{p_{N}(\bm{\sigma}^{\overline{\{m\}}})}{p_{N}(\bm{\sigma})},
\end{align}
where we use a notation introduced in Section~\ref{s:inequality_symmetric_polys}. This leads to
\begin{align}
  \epsilon_m & = \sigma_{m} \frac{ \sigma_{m+1}p_{N-1}(\bm{\sigma}^{\overline{\{m,m+1 \}}}) + p_{N}(\bm{\sigma}^{\overline{\{m,m+1 \}}})}{p_{N}(\bm{\sigma})},
\end{align}
and, similarly,
\begin{equation}
\epsilon_{m+1} = \sigma_{m+1} \frac{ \sigma_{m}p_{N-1}(\bm{\sigma}^{\overline{\{m,m+1 \}}}) + p_{N}(\bm{\sigma}^{\overline{\{m,m+1 \}}})}{p_{N}(\bm{\sigma})}.
\end{equation}
Taking the ratio, it comes
\begin{align}
\frac{\epsilon_{m}}{\epsilon_{m+1}} & = \frac{\sigma_{m} \bigg( \sigma_{m+1}p_{N-1}\left(\bm{\sigma}^{\overline{\{m,m+1 \}}} \right) + p_{N} \left(\bm{\sigma}^{\overline{\{m,m+1 \}}} \right)\bigg)}{\sigma_{m+1} \bigg( \sigma_{m}p_{N-1}\left(\bm{\sigma}^{\overline{\{m,m+1 \}}} \right) + p_{N} \left(\bm{\sigma}^{\overline{\{m,m+1 \}}}\right)\bigg)}\\
& = \frac{1  + \frac{1}{\sigma_{m+1}}\frac{p_{N} \left(\bm{\sigma}^{\overline{\{m,m+1 \}}} \right) }{ p_{N-1} \left(\bm{\sigma}^{\overline{\{m,m+1 \}}} \right)} }{1  + \frac{1}{\sigma_{m} }\frac{p_{N} \left(\bm{\sigma}^{\overline{\{m,m+1 \}}} \right) }{p_{N-1} \left(\bm{\sigma}^{\overline{\{m,m+1 \}}}\right)}} \geq 1,
\end{align}
because $1/\sigma_{m+1} \geq 1/\sigma_{m}$.

\subsubsection{Proof of \eqref{eq:ineq_r_N}}\label{app:proof_ineq_r_N}

We have
$\epsilon_{1} = \epsilon_{N}\epsilon_{1}/\epsilon_{N} \leq \sigma_{N} \epsilon_{1}/\epsilon_{N}$ since a simple counting argument yields $ \epsilon_{N} \leq \sigma_{N}$.
Along the lines of Section~\ref{s:epsilon_non_increasing},
\begin{align}
\frac{\epsilon_{1}}{\epsilon_{N}} & = \frac{1  + \frac{1}{\sigma_{N}}\frac{p_{N} \left(\bm{\sigma}^{\overline{\{1,N \}}} \right) }{ p_{N-1} \left(\bm{\sigma}^{\overline{\{1,N \}}} \right)} }{1  + \frac{1}{\sigma_{1} }\frac{p_{N} \left(\bm{\sigma}^{\overline{\{1,N \}}} \right) }{p_{N-1} \left(\bm{\sigma}^{\overline{\{1,N \}}} \right)}}
\leq 1  + \frac{1}{\sigma_{N}}\frac{p_{N} \left(\bm{\sigma}^{\overline{\{1,N \}}} \right) }{ p_{N-1} \left(\bm{\sigma}^{\overline{\{1,N \}}} \right)}.
\end{align}
Now, $\bm{\sigma}^{\overline{\{1,N \}}}$ is a sequence of positive real numbers and the $\bm{\Sigma}$ is trace-class. Then, by Corollary~\ref{cor:schur_convex_volume_sampling}, for $M \in [N-1]$,
\begin{equation}
\frac{p_{N} \left(\bm{\sigma}^{\overline{\{1,N \}}}  \right) }{ p_{N-1} \left(\bm{\sigma}^{\overline{\{1,N \}}}  \right)} \leq \frac{1 }{N-M} \sum_{ m \geq M} \sigma_{m+2} = \frac{1 }{N+1-(M+1)} \sum_{ m+1 \geq M+1} \sigma_{m+2}.
\end{equation}
Taking $M'= M+1$ concludes the proof of \eqref{eq:ineq_r_N}.

\subsection{Proof of Proposition~\ref{prop:constant_bound}}
\subsubsection{The case of a polynomially-decreasing spectrum}
Assume that $\sigma_{m} = m^{-2s}$ with $s>1/2$.
Let $N \in \Ns$ and $M_{N} = \lceil{N/c \rceil} \in \{2,\dots,N\}$, with $c \in [1,N[$.
We have
\begin{align}
\min_{M \in [2:N]}\frac{\sum_{m \geq M} \sigma_{m+1}}{(N-M+1)\sigma_N} & \leq \frac{\sum_{m \geq M_N} \sigma_{m+1}}{(N-M_N+1)\sigma_N} \\
& \leq \frac{\sum_{m \geq \lceil{N/c\rceil}} \sigma_{m+1}}{(N-\lceil{N/c\rceil}+1)\sigma_N} \\
& \leq \frac{\sum_{m \geq \lceil{N/c\rceil}} \sigma_{m+1}}{(N-N/c+1)\sigma_N}\\
& \leq \frac{\sum_{m \geq \lceil{N/c\rceil}} (m+1)^{-2s}}{(N-N/c+1)N^{-2s}}.\\
\end{align}
Now,
\begin{equation}
\forall m \in \Ns, \:\: (m+1)^{-2s} \leq \int_{m}^{m+1}t^{-2s} \mathrm{d}t = \frac{1}{2s-1}(m^{1-2s}-(m+1)^{1-2s}),
\end{equation}
so that
\begin{equation}
\sum\limits_{m \geq \lceil{N/c\rceil}} (m+1)^{-2s} \leq \frac{1}{2s-1}\lceil{N/c\rceil}^{1-2s}.
\end{equation}
Recall that $2s>1$, so that
\begin{equation}
\frac{1}{2s-1}\lceil{N/c\rceil}^{1-2s} \leq \frac{1}{2s-1}(N/c)^{1-2s},
\end{equation}
and
\begin{align}
\min_{M \in [2:N]}\frac{\sum_{m \geq M} \sigma_{m+1}}{(N-M+1)\sigma_N}
& \leq \frac{1}{2s-1}\frac{(N/c)^{1-2s}}{(N-N/c+1)N^{-2s}}\\
& \leq \frac{c^{2s}}{2s-1}\frac{N}{(cN-N+c)} \label{eq:the_bound_to_be_bounded}.
\end{align}

Note that $c$ is a free parameter that belongs to $[1,N]$ \footnote{The inequality in \eqref{eq:the_bound_to_be_bounded} is valid for $c = N$ by continuity.} that we can optimize in the upper bound: $\frac{c^{2s}}{2s-1}\frac{N}{(cN-N+c)}$. For this purpose, denote
\begin{equation}
\phi_{N}(c) = \frac{c^{2s}}{2s-1}\frac{N}{(cN-N+c)}.
\end{equation}
For every $N \in \Ns$, $\phi_{N}$ is differentiable in $]0,+\infty[$ and
\begin{equation}
\phi_{N}^{'}(c) = \frac{N}{2s-1}\frac{c^{2s-1}}{(cN-N+c)^{2}} \left( (2s-1)(N+1) c - 2s N \right),
\end{equation}
so that $\phi_{N}^{'}$ vanishes in $c_{N}^{*} = \frac{2s}{2s-1} \frac{N}{N+1}$; it is negative in $]0,c_{N}^{*}[$ and positive in $]c_{N}^{*}, +\infty[$. We distinguish three cases:

If $c_{N}^{*}< 1$, $N < 2s-1$ and $\phi_{N}^{'}$ is positive on $[1,N]$ so that $\phi_{N}$ increases in $[1,N]$ and we take $c = 1$ in \eqref{eq:the_bound_to_be_bounded}:
\begin{equation}
\phi_{N}(1) = \frac{N}{2s-1} < 1.
\end{equation}

If $c_{N}^{*} \in [1,N]$, $c_{N}^{*}$ is the unique minimizer of $\phi_{N}$ in $[1,N]$ and we take $c = c_{N}^{*}$ in \eqref{eq:the_bound_to_be_bounded} so that:
\begin{align}
\phi_{N}(c_{N}^{*}) &= \left(\frac{2s}{2s-1}\right)^{2s} \left(\frac{N}{N+1}\right)^{2s}\\
& \leq \left(\frac{2s}{2s-1}\right)^{2s}\\
& \leq \left(1+\frac{1}{2s-1}\right)\left(1+\frac{1}{2s-1}\right)^{2s-1}.
\label{e:useful_for_limits_as_well}
\end{align}

Finally, if $c_{N}^{*} > N$, $N < \frac{1}{2s-1}$, $\phi_{N}$ is decreasing in $[1,N]$ and we take $c = N$ in \eqref{eq:the_bound_to_be_bounded} so that:
\begin{align}
\phi_{N}(N) = \frac{N^{2s-1}}{2s-1} &\leq \frac{1}{2s-1} \left(\frac{1}{2s-1}\right)^{2s-1} \\
&\leq \left(1+\frac{1}{2s-1}\right) \left(1+\frac{1}{2s-1}\right)^{2s-1}. \label{e:useful_for_limits}
\end{align}




In the three cases, $\beta_{N}$ is upper bounded by $\left(1+\frac{1}{2s-1}\right) \left(1+\frac{1}{2s-1}\right)^{2s-1}$. The artificial two-factor form of \eqref{e:useful_for_limits_as_well} and \eqref{e:useful_for_limits} is there to make limits clearer. In particular, the RHS goes to $e$ as $s\rightarrow \infty$.

\subsubsection{The case of an exponentially decreasing spectrum}
Assume that $\sigma_{m} = \alpha^{m}$ with $\alpha \in [0,1[$.
Let $N \in \Ns$, and $M_N = N \in \{ 2,\dots, N\}$.
We have
\begin{align}
\min_{M \in [2:N]}\frac{\sum_{m \geq M} \sigma_{m+1}}{(N-M+1)\sigma_N}
& \leq \frac{\sum_{m \geq M_N} \sigma_{m+1}}{(N-M_N+1)\sigma_N}\\
& \leq \frac{\sum_{m \geq N} \sigma_{m+1}}{\sigma_N}\\
& \leq \frac{\sum_{m \geq N} \alpha^{m+1}}{\alpha^{N}}\\
& \leq \alpha^{N+1}\frac{\sum_{m \geq 0} \alpha^{m}}{\alpha^{N}}\\
& \leq \frac{\alpha}{1-\alpha}.
\end{align}

\subsection{Proof of Proposition~\ref{prop:perturbed_kernels_spectrum}}\label{app:proof_perturbed_kernels_spectrum}

We start with deriving the spectrum\footnote{
All the integration operators in this article, and specifically in this section, are self-adjoint and compact. The spectrum of such operators is the union of $\{0\}$ (the essential spectrum) and the set of eigenvalues \citep{Bre10}[Theorem 6.8] . Yet, the proof of the Mercer decomposition, only involves the set of eigenvalues \citep{StSc12}. For this reason, we use the term ``spectrum" to refer to the set of eigenvalues.} of the trace-class, self-adjoint operator
\begin{equation}
\bm{\Sigma}_{t} = \bm{\Sigma} + t e_{m}^{\F} \otimes e_{m}^{\F},
\end{equation}
where $e_{m}^{\F} \otimes e_{m}^{\F}$ is defined by
\begin{equation}
\forall g \in \Ltwo, \:\: e_{m}^{\F} \otimes e_{m}^{\F} \,g(\cdot) = e_{m}^{\F}(\cdot) \int_{\X}g(y)e_{m}^{\F}(y) \mathrm{d}\omega(y).
\end{equation}
The two operators $\bm{\Sigma}$ and $e_{m}^{\F} \otimes e_{m}^{\F}$ are co-diagonalizable in the basis $(e_{m})_{m \in \Ns}$, thus their linear combination $\bm{\Sigma}_{t}$ diagonalizes in this basis too. In other words, for $u \in \Ns$, $e_{u}$ is an eigenfunction of $\bm{\Sigma}_{t}$ and
\begin{equation}
\bm{\Sigma}_{t} e_{u} = \bm{\Sigma} e_{u} + t e_{m}^{\F} \otimes e_{m}^{\F} (e_{u}) = (\sigma_{u} + t \delta_{u,m} \sigma_{u}) e_{u}.
\end{equation}
Therefore, the set $\{\sigma_{u}(1+t\delta_{u,m}), u \in \Ns\}$ is included in the spectrum of $\bm{\Sigma}_t$. Since $(e_{m})_{m \in \Ns}$ is an orthonormal basis of $\Ltwo$ and correspond to the eigenfunctions of $\bm{\Sigma}_{t}$ associated to the elements of $\{\sigma_{u}(1+t\delta_{u,m}), u \in \Ns\}$, then the spectrum of $\bm{\Sigma}_t$ is exactly the set $\{\sigma_{u}(1+t\delta_{u,m}), u \in \Ns\}$.\footnote{$\bm{\Sigma}_{t}$ is self-adjoint, and has no zero eigenvalue by assumption. Thus, any new eigenfunction that is not in our basis needs to be orthogonal to all basis elements, and is thus zero.}
We now turn to deriving the spectrum of the trace-class, self-adjoint operator $\bm{\Sigma}_{t}^{+}$; the case of $\bm{\Sigma}_{t}^{-}$ follows the same lines and will be omitted for brevity.
We will prove that there exists an orthonormal basis $(f_{m})_{m \in \Ns}$ of $\Ltwo$ such that every $f_m$ is an eigenfunction of $\bm{\Sigma}_{t}^{+}$. If $t=0$, $\bm{\Sigma}_{t}^{+} = \bm{\Sigma}$ and $(e_{m})_{m \in \Ns}$ is already an orthonormal basis of $\Ltwo$. We assume in the following that $t>0$.

Consider the operator $\Delta_{t}^{+}$ defined on $\Ltwo$ by
\begin{equation}
\Delta_{t}^{+}g(\cdot) = t \bigg(e_{m_{1}}^{\F}(\cdot)+e_{m_{2}}^{\F}(\cdot) \bigg) \int_{\X} g(y) \bigg(e_{m_{1}}^{\F}(y)+e_{m_{2}}^{\F}(y) \bigg) \mathrm{d}\omega(y).
\end{equation}
We can write $\bm{\Sigma}_{t}^{+} = \bm{\Sigma} + \Delta_{t}^{+}$, but this time, if $t>0$, $\bm{\Sigma}$ and $\Delta_{t}^{+}$ do not commute. In particular, they are not co-diagonalizable, and a more detailed analysis is necessary. First, by construction of $\Delta_{t}^{+}$,
$$
\Delta_{t}^{+} e_m = 0, \quad m\notin\{m_1,m_2\},
$$
so that for any $m\notin\{m_1,m_2\}$, $\bm{\Sigma}_{t}^{+}$ and $\bm{\Sigma}$ have $e_m$ for eigenfunction, with the same eigenvalue $\sigma_m$. Observe that
\begin{equation}
\Ltwo = \Span (e_{m_1},e_{m_2}) \oplus \Span (e_{m})_{m \notin \{m_1,m_2\}}.
\end{equation}
Therefore, the rest of the proof consists in completing $(e_m)_{m\notin\{m_1,m_2\}}$ into an orthonormal basis of $\Ltwo$, by finding two orthonormal eigenfunctions of $\bm{\Sigma}_{t}^{+}$ in $\Span (e_{m_{1}},e_{m_{2}})$.
Since we assumed in Section~\ref{sec:introduction} that the eigenvalues of $\bm{\Sigma}$ are nonzero, we note that $\Span (e_{m_{1}},e_{m_{2}}) = \Span (e_{m_{1}}^{\F},e_{m_{2}}^{\F})$. Expressing the new eigenfunctions in terms of $e_{m_{1}}^{\F}$ and $e_{m_{2}}^{\F}$ will turn out to be more convenient.

First, note that
\begin{align}
\bm{\Sigma}_{t}^{+}e_{m_{1}}^{\mathcal{F}}(\cdot) & = \bm{\Sigma} e_{m_{1}}^{\mathcal{F}}(\cdot) + t \int_{\mathcal{X}} \left(e_{m_{1}}^{\mathcal{F}}(\cdot) + e_{m_{2}}^{\mathcal{F}}(\cdot) \right)\left( e_{m_{1}}^{\mathcal{F}}(y) + e_{m_{2}}^{\mathcal{F}}(y) \right) e_{m_{1}}^{\mathcal{F}}(y) \mathrm{d}\omega(y) \\
& = \sigma_{m_{1}} e_{m_{1}}^{\mathcal{F}}(\cdot) + t \sigma_{m_{1}} \left(e_{m_{1}}^{\mathcal{F}}(\cdot) + e_{m_{2}}^{\mathcal{F}}(\cdot) \right)\\
& = (1+t) \sigma_{m_{1}}e_{m_{1}}^{\mathcal{F}} + t \sigma_{m_{1}} e_{m_{2}}^{\mathcal{F}}.\label{eq:sigma_t_plus_action}
\end{align}
Similarly,
\begin{align}
\bm{\Sigma}_{t}^{+}e_{m_{2}}^{\mathcal{F}}(.) & = t \sigma_{m_{2}} e_{m_{1}}^{\mathcal{F}} + (1+t) \sigma_{m_{2}}e_{m_{2}}^{\mathcal{F}}.\label{eq:sigma_t_plus_action_2}
\end{align}

Now, let $v = \lambda_{1} e_{m_{1}}^{\mathcal{F}} + \lambda_{2} e_{m_{2}}^{\mathcal{F}}$, so that, by \eqref{eq:sigma_t_plus_action} and \eqref{eq:sigma_t_plus_action_2},
\begin{align}
\bm{\Sigma}_{t}^{+} v & = \lambda_{1} \bigg((1+t) \sigma_{m_{1}} e_{m_{1}}^{\mathcal{F}} + t \sigma_{m_{1}}  e_{m_{2}}^{\mathcal{F}} \bigg) + \lambda_{2} \bigg((1+t) \sigma_{m_{2}} e_{m_{2}}^{\mathcal{F}}  + t \sigma_{m_{2}}  e_{m_{1}}^{\mathcal{F}} \bigg) \nonumber \\
\label{eq:sigma_t_plus_action_v}
& = \bigg( \lambda_{1} (1+t) \sigma_{m_{1}} + \lambda_{2} t \sigma_{m_{2}} \bigg) e_{m_{1}}^{\mathcal{F}} + \bigg( \lambda_{2} (1+t) \sigma_{m_{2}} + \lambda_{1}t\sigma_{m_{1}} \bigg) e_{m_{2}} ^{\mathcal{F}},
\end{align}

Solving for eigenvalues, we look for $\mu \in \mathbb{R}$ such that $\bm{\Sigma}_{t}^{+} v = \mu v$, or equivalently
$$
\bigg\{\begin{array}{ccc}
  (1+t) \sigma_{m_{1}} \lambda_{1} + t \sigma_{m_{2}} \lambda_{2} &=& \mu \lambda_{1}, \\
t \sigma_{m_{1}} \lambda_{1} +  (1+t) \sigma_{m_{2}} \lambda_{2} &=& \mu \lambda_{2}.
\end{array}
$$
This is just saying that $\mu$ should be an eigenvalue of the matrix
\begin{align}\label{eq:positive_matrix}
\left(\begin{array}{cc}
(1+t) \sigma_{m_{1}} & t \sigma_{m_{2}}\\
t \sigma_{m_{1}} & (1+t) \sigma_{m_{2}}\\
\end{array}\right),
\end{align}
which yields two solutions,
\begin{equation}
{\mu}_{1}^{+} = (1+t)\frac{\sigma_{m_{1}}+ \sigma_{m_{2}}}{2} + \frac{1}{2} \sqrt{(1+t)^{2}(\sigma_{m_{1}}-\sigma_{m_{2}})^{2}+4 \sigma_{m_{1}}\sigma_{m_{2}}t^{2}}\,,
\end{equation}
and
\begin{equation}
{\mu}_{2}^{+} = (1+t)\frac{\sigma_{m_{1}}+ \sigma_{m_{2}}}{2} - \frac{1}{2} \sqrt{(1+t)^{2}(\sigma_{m_{1}}-\sigma_{m_{2}})^{2}+4 \sigma_{m_{1}}\sigma_{m_{2}}t^{2}}\,.
\end{equation}
These solutions are distinct since $t>0$, and the corresponding normalized eigenfunctions $v_{1}^{+}$ and $v_{2}^{+}$ are orthogonal with respect to $\langle.,.\rangle_{\mathrm{d}\omega}$ since $\bm{\Sigma}_t^+$ is self-adjoint. Finally, we define the set of eigenfunctions of $\bm{\Sigma}_{t}^{+}$ by
the system $(e_{m})_{m \notin \{m_{1},m_{2}\}} \cup (v_{1}^{+},v_{2}^{+})$ that is an orthonormal basis of $\Ltwo$. Therefore, the spectrum of the compact operator $\bm{\Sigma}_{t}^{+}$ is exactly the set
\begin{equation}
\{ \sigma_{m}, \: m \notin  \{m_{1},m_{2}\}\} \cup \{\mu_{1}^{+}, \mu_{2}^{+} \}.
\end{equation}

Along the same lines, one can show that the eigenvalues of $\bm{\Sigma}_{t}^{-}$ restricted to $\text{Span}(e_{m_{1}}^{\F}, e_{m_{2}}^{\F})$ satisfy
\begin{equation}
 \lambda^{2} - (1+t)(\sigma_{m_{1}}+\sigma_{m_{2}}) \lambda - \sigma_{m_{1}}\sigma_{m_{2}}t^{2} = 0 .
\end{equation}
For $t>0$, this equation again admits two distinct solutions
\begin{equation}
\hat{\mu}_{1}^{-} = (1+t)\frac{\sigma_{m_{1}}+ \sigma_{m_{2}}}{2} + \frac{1}{2} \sqrt{(1+t)^{2}(\sigma_{m_{1}}-\sigma_{m_{2}})^{2}+4 \sigma_{m_{1}}\sigma_{m_{2}}t^{2}},
\end{equation}
and
\begin{equation}
\hat{\mu}_{2}^{-} = (1+t)\frac{\sigma_{m_{1}}+ \sigma_{m_{2}}}{2} - \frac{1}{2} \sqrt{(1+t)^{2}(\sigma_{m_{1}}-\sigma_{m_{2}})^{2}+4 \sigma_{m_{1}}\sigma_{m_{2}}t^{2}}.
\end{equation}
so that the spectrum of $\bm{\Sigma}_{t}^{-}$ is exactly the set
\begin{equation}
\{ \sigma_{m}, \: m \notin  \{m_{1},m_{2}\}\} \cup \{\mu_{1}^{-}, \mu_{2}^{-} \} = \{ \sigma_{m}, \: m \notin  \{m_{1},m_{2}\}\} \cup \{\mu_{1}^{+}, \mu_{2}^{+} \}.
\end{equation}
In other words, the two operators $\bm{\Sigma}_{t}^{+}$ and $\bm{\Sigma}_{t}^{-}$ share the same eigenvalues.


\subsection{Proof of Proposition~\ref{thm:EX_lvs_phi}}\label{app:proof_EX_lvs_phi}
\subsubsection{The expected value of the $m$-th leverage score \label{sec:proof_EX_VS_lvs_n_diff_phi}}
Let $m \in \Ns$. On the one hand, recall that $\tau_{m}^{\F}(\bm{x}) = e_{m}^{\mathcal{F}}(\bm{x})^{\Tran}\bm{K}(\bm{x})^{-1}e_{m}^{\mathcal{F}}(\bm{x})$, so that, by Definition~\ref{def:VS},
\begin{equation}
  \label{e:just_def_of_VS}
\EX_{\VS} \tau_{m}^{\F}(\bm{x}) = \left( N!\sum\limits_{U\in\UN}\prod\limits_{u \in U}\sigma_{u} \right)^{-1} \int_{\mathcal{X}^{N}} e_{m}^{\mathcal{F}}(\bm{x})^{\Tran}\bm{K}(\bm{x})^{-1}e_{m}^{\mathcal{F}}(\bm{x}) \Det \bm{K}(\bm{x}) \otimes_{i \in [N]} \mathrm{d}\omega(x_{i}).
\end{equation}
We have
\begin{align}
\Det \bm{K}(\bm{x})\,e_{m}^{\mathcal{F}}(\bm{x})^{\Tran}\bm{K}(\bm{x})^{-1}e_{m}^{\mathcal{F}}(\bm{x}) & = \Det \bm{K}(\bm{x})\Tr \left( e_{m}^{\mathcal{F}}(\bm{x})^{\Tran}\bm{K}(\bm{x})^{-1}e_{m}^{\mathcal{F}}(\bm{x}) \right) \nonumber \\
& = \Det \bm{K}(\bm{x})\Tr \left(\bm{K}(\bm{x})^{-1}e_{m}^{\mathcal{F}}(\bm{x})e_{m}^{\mathcal{F}}(\bm{x})^{\Tran}\right)\nonumber\\
&= \partial_{t} \Det (\bm{K}(\bm{x})+t e_{m}^{\mathcal{F}}(\bm{x})e_{m}^{\mathcal{F}}(\bm{x})^{\Tran})|_{t = 0^{+}}\,,\label{e:derivative_inside_the_integral}
\end{align}
where the last line follows from the Jacobi identity of Theorem~\ref{thm:jacobi_identity}.

On the other hand, for $t>0$ and with the notation of Section~\ref{sec:closed_formulas}, let $ \bm{K}_{t}(\bm{x}) := (k_{t}(x_{i},x_{j}))_{i,j \in [N]} = \bm{K}(\bm{x})+t e_{m}^{\mathcal{F}}(\bm{x})e_{m}^{\mathcal{F}}(\bm{x})^{\Tran}$.
Since
\begin{align}
\int_{\X} k_{t}(x,x) \Mu
& = \int_{\X} k(x,x) \Mu + t \int_{\X} e_{m}^{\F}(x)^{2} \Mu = \sum\limits_{n \in \Ns} \sigma_{n} + t \sigma_{m} < \infty,
\end{align}
Hadamard's inequality yields the integrability of
$
\psi(.,t): \bm{x} \mapsto \Det \bm{K}_{t}(\bm{x}).
$
Finally, observe that
\begin{equation}
\phi_{m}(t) := Z_N(k_t) =  \int_{\X^{N}} \psi(\bm{x},t) \otimes_{i \in [N]} \mathrm{d}\omega(x_{i}).
\end{equation}

 If we prove that $\phi_{m}$ is right differentiable in zero, and that we can justify the interchange of the derivation and the integration operations, we will have equated the right derivative of $\phi_m$ in zero and \eqref{e:just_def_of_VS} using \eqref{e:derivative_inside_the_integral}; this will achieve proving the first equation in Proposition~\ref{thm:EX_lvs_phi}. To this purpose, we need to prove that $t \mapsto \psi(\bm{x},t)$ is right differentiable at zero, it is locally dominated by an integrable function and its derivative is locally dominated by an integrable function. Now, observe that $t \mapsto \psi(\bm{x},t)$ is a polynomial of degree smaller than $N$, so that it is differentiable, and Corollary~\ref{cor:Markov_Brothers_inequality_2} yields
\begin{equation}
\max_{\tau \in [0,1]} \left|\partial_t \psi(\bm{x},\tau)\right| \leq 2N^{2} \max_{\tau \in [0,1]} \left|\psi(\bm{x},\tau)\right|.
\end{equation}
 In other words, to dominate $\tau \mapsto |\partial_t \psi(\bm{x},\tau)|$ uniformly on $[0,1]$, it is sufficient to dominate $\tau \mapsto| \psi(\bm{x},\tau)|$ uniformly there. Now, let $\tau \in [0,1]$, we have
\begin{align}
\bm{K}_{1}(\bm{x}) - \bm{K}_{\tau}(\bm{x}) &  = \bm{K}(\bm{x}) + e_{m}^{\mathcal{F}}(\bm{x})e_{m}^{\mathcal{F}}(\bm{x})^{\Tran} - \bm{K}(\bm{x}) - \tau e_{m}^{\mathcal{F}}(\bm{x})e_{m}^{\mathcal{F}}(\bm{x})^{\Tran}\\
& = (1-\tau)e_{m}^{\mathcal{F}}(\bm{x})e_{m}^{\mathcal{F}}(\bm{x})^{\Tran} \in \mathcal{S}_{N}^{+}.
\end{align}
Thus
\begin{equation}
0\preceq \bm{K}_{\tau}(\bm{x}) \preceq \bm{K}_{1}(\bm{x})
\end{equation}
in the Loewner order, so that for any $\tau\in [0,1]$,
\begin{equation}
|\psi(\bm{x},\tau)| = \psi(\bm{x},\tau) =\Det \bm{K}_{\tau}(\bm{x}) \leq \Det \bm{K}_{1}(\bm{x}) = \psi(\bm{x},1) .
\end{equation}
We conclude by observing that $\bm{x} \mapsto \psi(\bm{x},1)$ is integrable on $\X^{N}$ by Proposition~\ref{prop:VS_decomposition}, and the fact that
\begin{equation}
\int_{\X} k_{1}(x,x) \mathrm{d}\omega(x) < +\infty.
\end{equation}



\subsubsection{The expected value of cross-leverage scores}
Let $m_{1},m_{2} \in \Ns$ such that $m_{1} \neq m_{2}$. We have
\begin{align}
\tau_{m_{1},m_{2}}^{\F}(\bm{x}) & = e_{m_{1}}^{\mathcal{F}}(\bm{x})^{\Tran}\bm{K}(\bm{x})^{-1}e_{m_{2}}^{\mathcal{F}}(\bm{x}) \nonumber \\
& = \frac{1}{4} \left(e_{m_{1}}^{\mathcal{F}}(\bm{x}) + e_{m_{2}}^{\mathcal{F}}(\bm{x})\right)^{\Tran}\bm{K}(\bm{x})^{-1}\left(e_{m_{1}}^{\mathcal{F}}(\bm{x}) + e_{m_{2}}^{\mathcal{F}}(\bm{x})\right)^{\Tran}  \nonumber \\
& - \frac{1}{4} \left(e_{m_{1}}^{\mathcal{F}}(\bm{x}) - e_{m_{2}}^{\mathcal{F}}(\bm{x})\right)^{\Tran}\bm{K}(\bm{x})^{-1}\left(e_{m_{1}}^{\mathcal{F}}(\bm{x}) - e_{m_{2}}^{\mathcal{F}}(\bm{x})\right)^{\Tran}.
\end{align}
Thus
\begin{equation}
\EX_{\VS} \tau_{m_{1},m_{2}}^{\F}(\bm{x}) = \frac{1}{4 Z_{N}(k)}\int_{\mathcal{X}^{N}} \left( \Psi^{+}(\bm{x}) - \Psi^{-}(\bm{x}) \right) \otimes_{i \in [N]}\mathrm{d}\omega(x_{i}),
\end{equation}
where
\begin{equation}
\Psi^{+}(\bm{x}) = \left(e_{m_{1}}^{\mathcal{F}}(\bm{x})+e_{m_{2}}^{\mathcal{F}}(\bm{x})\right)^{\Tran}\bm{K}(\bm{x})^{-1}\left(e_{m_{1}}^{\mathcal{F}}(\bm{x})+e_{m_{2}}^{\mathcal{F}}(\bm{x})\right) \Det \bm{K}(\bm{x}),
\end{equation}
and
\begin{equation}
\Psi^{-}(\bm{x}) = \left(e_{m_{1}}^{\mathcal{F}}(\bm{x})-e_{m_{2}}^{\mathcal{F}}(\bm{x})\right)^{\Tran}\bm{K}(\bm{x})^{-1}\left(e_{m_{1}}^{\mathcal{F}}(\bm{x})-e_{m_{2}}^{\mathcal{F}}(\bm{x})\right) \Det \bm{K}(\bm{x}).
\end{equation}
We proceed as in Section~\ref{sec:proof_EX_VS_lvs_n_diff_phi} and we use Proposition~\ref{thm:jacobi_identity} to prove that
\begin{align}
\Psi^{+}(\bm{x}) & = \partial_{t} \Det \left(\bm{K}(\bm{x})+t \left(e_{m_{1}}^{\mathcal{F}}(\bm{x}) + e_{m_{2}}^{\mathcal{F}}(\bm{x}) \right)\left(e_{m_{1}}^{\mathcal{F}}(\bm{x}) + e_{m_{2}}^{\mathcal{F}}(\bm{x}) \right)^{\Tran}\right)|_{t = 0^{+}} \nonumber \\
& = \partial_{t} \Det \left(\bm{K}_{t}^{+}(\bm{x})\right)|_{t = 0^{+}}.
\end{align}
and
\begin{align}
\Psi^{-}(\bm{x}) & = \partial_{t} \Det \left(\bm{K}(\bm{x})+t \left(e_{m_{1}}^{\mathcal{F}}(\bm{x}) - e_{m_{2}}^{\mathcal{F}}(\bm{x}) \right)\left(e_{m_{1}}^{\mathcal{F}}(\bm{x}) - e_{m_{2}}^{\mathcal{F}}(\bm{x}) \right)^{\Tran} \right)|_{t = 0^{+}} \nonumber \\
& = \partial_{t} \Det \left(\bm{K}_{t}^{-}(\bm{x})\right)|_{t = 0^{+}}.
\end{align}
In order to prove that $\phi_{m_{1},m_{2}}^{+}$ and $\phi_{m_{1},m_{2}}^{-}$ are right differentiable in zero along with the second equation in Proposition~\ref{thm:EX_lvs_phi}, one can follow the same steps as in the end of Section~\ref{sec:proof_EX_VS_lvs_n_diff_phi}. In particular, the interchange of the derivation and the integration operations follows from the same arguments, upon noting that both $\int_{\X} k_{t}^{+}(x,x) \mathrm{d}\omega(x)$ and $\int_{\X} k_{t}^{-}(x,x) \mathrm{d}\omega(x)$ are finite.

\subsection{Proof of Proposition~\ref{prop:EX_VS_lvs}}\label{sec:proof_EX_VS_lvs}
The proof is a straightforward computation now that we have Proposition~\ref{thm:EX_lvs_phi} and Proposition~\ref{prop:perturbed_kernels_spectrum}.

\subsubsection{The expected value of the $m$-th leverage score}
Let $m \in \Ns$. We have by Proposition~\ref{thm:EX_lvs_phi} and Proposition~\ref{prop:VS_decomposition},
\begin{equation}
\EX_{\VS} \tau_{m}^{\F}(\bm{x})  = \frac{1}{N!\sum\limits_{\substack{U \subset \mathbb{N}^{*}\\ |U| = N}} \prod\limits_{u \in U}\sigma_{u}}  \frac{\partial \phi_{m}}{ \partial t }|_{t = 0^{+}},
\label{e:expression_of_expected_LS}\end{equation}
where
\begin{equation}
\phi_{m}(t) = \int_{\mathcal{X}^{N}} \Det \bigg(\bm{K}(\bm{x})+t e_{m}^{\mathcal{F}}(\bm{x})e_{m}^{\mathcal{F}}(\bm{x})^{\Tran} \bigg) \otimes_{i =1}^{N} \mathrm{d}\omega(x_{i}).
\end{equation}
Now by Proposition~\ref{prop:perturbed_kernels_spectrum} and Proposition~\ref{prop:VS_decomposition},
\begin{equation}
\phi_{m}(t) = N! \sum\limits_{U\in\UN} \prod\limits_{u \in U} \tilde{\sigma}_{u}(t),
\end{equation}
where for $u \in \mathbb{N}^{*}$, $\tilde{\sigma}_{u}(t) = \sigma_{u} + t\delta_{m,u}\sigma_{u}$.
Therefore,
\begin{align}
\phi_{m}(t) & = N!\sum\limits_{\substack{U \in\UN \\ m \in U}} \prod\limits_{u \in U} \tilde{\sigma}_{u}(t) + N!\sum\limits_{\substack{U \in\UN \\ m \notin U}} \prod\limits_{u \in U} \tilde{\sigma}_{u}(t) \\
& = N!\,\sigma_{m}(t+1)\sum\limits_{\substack{U\in \mathcal{U}_{N-1}\\ m \notin U}} \prod\limits_{u \in U} \sigma_{u} + N!\sum\limits_{\substack{U \in\UN \\ m \notin U}} \prod\limits_{u \in U} \sigma_{u}.
\end{align}
Thus,
\begin{equation}
\frac{\partial \phi_{m}}{\partial t}|_{t = 0} = N!\,\sigma_{m}\sum\limits_{\substack{U\in \mathcal{U}_{N-1}\\ m \notin U}} \prod\limits_{u \in U} \sigma_{u},
\end{equation}
so that \eqref{e:expression_of_expected_LS} becomes
\begin{align}
\EX_{\VS} \tau_{m}^{\F}(\bm{x}) & = \left(\cancel{N!}\sum\limits_{U\in\UN}\prod\limits_{u \in U}\sigma_{u} \right)^{-1} \cancel{N!}\,\sigma_{m}\sum\limits_{\substack{U\in \mathcal{U}_{N-1}\\ m \notin U}} \prod\limits_{u \in U} \sigma_{u},
\end{align}
which concludes the proof.

\subsubsection{The expected value of cross-leverage scores}
Let $m_{1},m_{2} \in \Ns$ such that $m_{1} \neq m_{2}$. We have by Proposition~\ref{thm:EX_lvs_phi} and Proposition~\ref{prop:VS_decomposition},
\begin{equation}
\EX_{\VS} \tau_{m_{1},m_{2}}^{\F}(\bm{x})  = \frac{1}{4N!\sum\limits_{U\in\UN} \prod\limits_{u \in U}\sigma_{u}} \left( \frac{\partial \phi_{m_{1},m_{2}}^{+}}{ \partial t } - \frac{\partial \phi_{m_{1},m_{2}}^{-}}{ \partial t }\right)\bigg|_{t = 0^{+}} ,
\label{e:expression_of_expected_cross_LS}
\end{equation}
where
\begin{equation}
\phi_{m_{1},m_{2}}^{+}(t) =  \int_{\mathcal{X}^{N}} \Det \bigg(\bm{K}(\bm{x})+t \left(e_{m_{1}}^{\mathcal{F}}(\bm{x}) + e_{m_{2}}^{\mathcal{F}}(\bm{x}) \right) \left(e_{m_{1}}^{\mathcal{F}}(\bm{x}) + e_{m_{2}}^{\mathcal{F}}(\bm{x}) \right)^{\Tran} \bigg) \otimes_{i =1}^{N} \mathrm{d}\omega(x_{i}),
\end{equation}
and
\begin{equation}
\phi_{m_{1},m_{2}}^{-}(t) =  \int_{\mathcal{X}^{N}} \Det \bigg(\bm{K}(\bm{x})+t \left(e_{m_{1}}^{\mathcal{F}}(\bm{x}) - e_{m_{2}}^{\mathcal{F}}(\bm{x}) \right) \left(e_{m_{1}}^{\mathcal{F}}(\bm{x}) - e_{m_{2}}^{\mathcal{F}}(\bm{x}) \right)^{\Tran} \bigg) \otimes_{i =1}^{N} \mathrm{d}\omega(x_{i}).
\end{equation}
Now by Proposition~\ref{prop:perturbed_kernels_spectrum}, for $t\geq 0$,
\begin{equation}
\phi_{m_{1},m_{2}}^{+}(t) = N!\sum\limits_{U\in\UN} \prod\limits_{u \in U} \tilde{\sigma}_{u}^{+}(t) = N!\sum\limits_{U\in\UN} \prod\limits_{u \in U} \tilde{\sigma}_{u}^{-}(t) = \phi_{m_{1},m_{2}}^{-}(t).
\end{equation}
Plugging this back into \eqref{e:expression_of_expected_cross_LS} yields $\EX_{\VS} \tau_{m_{1},m_{2}}^{\F}(\bm{x}) = 0$.

\subsection{Proof of Theorem~\ref{thm:slow_rates}}\label{app:proof_slow_rates}
\subsubsection{A decomposition result for the error}
We start with a lemma.
\begin{lemma}\label{lemma:delayed_bounds}
Let $\mu \in \F$ such that $\|\mu\|_{\F} \leq 1$. Under Assumption~\ref{hyp:beta_N_bounded},
\begin{equation}\label{eq:EX_VS_slow_rates}
\EX_{\VS} \mathcal{E}(\mu;x)^{2} \leq (1+B) \sum\limits_{m \in [N]} \frac{\sigma_{N}}{\sigma_{m}} \langle \mu,e_{m}^{\F} \rangle_{\F}^{2} + \sum\limits_{m \geq N+1} \langle \mu,e_{m}^{\F} \rangle_{\F}^{2}.
\end{equation}
\end{lemma}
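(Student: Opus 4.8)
The plan is to split $\mu$ along the first $N$ eigendirections, reduce the low-frequency part to the embedding case already handled by Theorem~\ref{thm:main_result_1}, and kill a cross term using the vanishing of expected cross-leverage scores. Write $\mu = \sum_{m \in \Ns} a_m e_m^{\F}$ with $a_m = \langle \mu, e_m^{\F} \rangle_{\F}$, and set $\mu^{-} = \sum_{m \in [N]} a_m e_m^{\F}$ and $\mu^{+} = \mu - \mu^{-} = \sum_{m \geq N+1} a_m e_m^{\F}$, so that $\mu^{-}$ is the $\langle \cdot, \cdot \rangle_{\F}$-orthogonal projection of $\mu$ onto $\Span(e_m^{\F})_{m \in [N]}$. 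The key remark is that $\mu^{-}$, being a \emph{finite} combination of the $e_m^{\F}$, is an embedding: $\mu^{-} = \mu_{g^{-}}$ with $g^{-} = \sum_{m \in [N]} \sigma_m^{-1/2} a_m\, e_m \in \Ltwo$. Since $\Pi_{\mathcal{T}(\bm{x})} \mu^{-} \in \mathcal{T}(\bm{x})$ and $\Pi_{\mathcal{T}(\bm{x})} \mu$ is the $\langle \cdot, \cdot \rangle_{\F}$-closest element of $\mathcal{T}(\bm{x})$ to $\mu$, I would first note that, whenever $\Det \bm{K}(\bm{x}) > 0$ (hence VS-almost surely),
\begin{equation*}
\mathcal{E}(\mu; \bm{x})^{2} \leq \| \mu - \Pi_{\mathcal{T}(\bm{x})} \mu^{-} \|_{\F}^{2} = \big\| (\mu^{-} - \Pi_{\mathcal{T}(\bm{x})} \mu^{-}) + \mu^{+} \big\|_{\F}^{2}.
\end{equation*}
Expanding the square (each summand being dominated by a fixed multiple of $\|\mu^{-}\|_{\F}^{2} + \|\mu^{+}\|_{\F}^{2}$, hence integrable) and taking $\EX_{\VS}$ leaves three terms.

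For the first term, I would invoke Theorem~\ref{thm:main_result_1} applied to $g^{-}$: since both sides of \eqref{eq:main_result_EX_VS_err_mu} are homogeneous of degree two in $g$, rescaling $g^{-}$ to unit $\Ltwo$-norm shows $\EX_{\VS} \| \mu^{-} - \Pi_{\mathcal{T}(\bm{x})} \mu^{-} \|_{\F}^{2} = \sum_{m \in [N]} \sigma_m^{-1} a_m^{2} \epsilon_m$. Because $(\epsilon_m)$ is non-increasing and, by \eqref{eq:ineq_r_N} together with Assumption~\ref{hyp:beta_N_bounded}, $\epsilon_m \leq \epsilon_1 \leq \sigma_N(1+\beta_N) \leq (1+B)\sigma_N$, this is at most $(1+B) \sum_{m \in [N]} \frac{\sigma_N}{\sigma_m} \langle \mu, e_m^{\F}\rangle_{\F}^{2}$, the first term of \eqref{eq:EX_VS_slow_rates}. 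The third term $\EX_{\VS} \|\mu^{+}\|_{\F}^{2}$ is deterministic and equals $\sum_{m \geq N+1} \langle \mu, e_m^{\F} \rangle_{\F}^{2}$, the second term of \eqref{eq:EX_VS_slow_rates}.

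It remains to show the middle (cross) term has zero expectation. Since $\mu^{-}$ and $\mu^{+}$ are orthogonal in $\F$, we have $\langle \mu^{-} - \Pi_{\mathcal{T}(\bm{x})} \mu^{-}, \mu^{+} \rangle_{\F} = - \langle \Pi_{\mathcal{T}(\bm{x})} \mu^{-}, \mu^{+} \rangle_{\F} = - \sum_{m_1 \in [N]} \sum_{m_2 \geq N+1} a_{m_1} a_{m_2} \langle \Pi_{\mathcal{T}(\bm{x})} e_{m_1}^{\F}, e_{m_2}^{\F} \rangle_{\F}$, and by Lemma~\ref{lemma:lvs_identities} the inner product equals $\tau_{m_1, m_2}^{\F}(\bm{x})$. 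Here the outer sum is finite, and for each fixed $m_1$ the partial sums of the inner series are bounded in absolute value by $\| \Pi_{\mathcal{T}(\bm{x})} e_{m_1}^{\F} \|_{\F}\, \| \mu^{+} \|_{\F} \leq \|\mu^{+}\|_{\F}$, a deterministic constant; so the dominated convergence theorem permits interchanging $\EX_{\VS}$ with both sums, and the result is $0$ since $\EX_{\VS} \tau_{m_1, m_2}^{\F}(\bm{x}) = 0$ for $m_1 \neq m_2$ by Proposition~\ref{prop:EX_VS_lvs}. Combining the three contributions yields \eqref{eq:EX_VS_slow_rates}.

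I expect the only genuinely delicate step to be this last one — the rigorous interchange of the expectation with the infinite series defining the cross term — and the uniform bound $\| \Pi_{\mathcal{T}(\bm{x})} e_{m}^{\F} \|_{\F} \leq 1$ from Lemma~\ref{lemma:lvs_identities} is exactly what makes it go through. Everything else, namely the reduction to the embedding case and the two inequalities $\epsilon_m \leq \epsilon_1$ and $\epsilon_1 \leq (1+B)\sigma_N$, is routine given the results already established.
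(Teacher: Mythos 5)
Your proof is correct, and it reaches the stated bound by a genuinely different route from the paper. The paper first extends the error decomposition of Lemma~\ref{lemma:error_decomposition} to a general $\mu\in\F$, obtaining the \emph{exact identity} $\EX_{\VS}\mathcal{E}(\mu;\bm{x})^{2}=\sum_{m}\langle\mu,e_{m}^{\F}\rangle_{\F}^{2}\bigl(1-\EX_{\VS}\tau_{m}^{\F}(\bm{x})\bigr)$; the price is a dominated-convergence argument over the full double series $\sum_{m_{1}\neq m_{2}}$ of cross-leverage terms (dominated there by $2\|\mu\|_{\F}^{2}$). It then splits the sum at $N$, using $1-\EX_{\VS}\tau_{m}^{\F}\leq 1$ for $m\geq N+1$ and, exactly as you do, the monotonicity of $(\epsilon_{m})$ together with \eqref{eq:ineq_r_N} for $m\leq N$. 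You instead introduce an inequality at the very first step, comparing the optimal interpolant $\Pi_{\mathcal{T}(\bm{x})}\mu$ with the suboptimal $\Pi_{\mathcal{T}(\bm{x})}\mu^{-}$; this decouples head and tail pointwise, lets you reuse Theorem~\ref{thm:main_result_1} verbatim on the head (which is an embedding), and reduces the cross term to a double sum with a \emph{finite} outer index, so the interchange of expectation and summation is lighter. Since $\sigma_{m}^{-1}\epsilon_{m}=1-\EX_{\VS}\tau_{m}^{\F}(\bm{x})$, your intermediate bound coincides with the paper's after it applies $1-\EX_{\VS}\tau_{m}^{\F}\leq 1$ on the tail, so nothing is lost in the final estimate; what you give up is only the exact expression for $\EX_{\VS}\mathcal{E}(\mu;\bm{x})^{2}$, which the lemma does not require. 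All the individual steps you flag (homogeneity of \eqref{eq:main_result_EX_VS_err_mu} in $g$, the identification $\langle\Pi_{\mathcal{T}(\bm{x})}e_{m_{1}}^{\F},e_{m_{2}}^{\F}\rangle_{\F}=\tau_{m_{1},m_{2}}^{\F}(\bm{x})$ via self-adjointness and idempotence of the projection, and the uniform domination by $\|\mu^{+}\|_{\F}$) check out.
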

\begin{proof}
Using the same arguments as in the proof of Lemma~\ref{lemma:error_decomposition} in Section~\ref{app:proof_error_decomposition}, it comes that,
for $\bm{x} \in \mathcal{X}^{N}$ such that $\Det \bm{K}(\bm{x}) > 0$,
\begin{equation}\label{eq:general_error_decomposition}
\|\mu - \Pi_{\mathcal{T}(\bm{x})} \mu\|_{\F}^{2}  = \sum\limits_{m \in \mathbb{N}^{*}} \langle \mu, e_{m}^{\F} \rangle_{\F}^{2}\bigg(1- \tau_{m}^{\F}(\bm{x})\bigg) - \sum\limits_{\substack{m_{1},m_{2} \in \mathbb{N}^{*} \\ m_{1} \neq m_{2}}}  \langle \mu, e_{m_{1}}^{\F} \rangle_{\F} \langle \mu, e_{m_{2}}^{\F} \rangle_{\F} \tau_{m_{1},m_{2}}^{\F}(\bm{x}).
\end{equation}
We want to take expectations in both sides of \eqref{eq:general_error_decomposition}. For the first term in the RHS, we prove, using the same arguments as for the proof of Theorem~\ref{thm:main_result_1} in Section~\ref{app:proof_main_result_1}, that
\begin{equation}\label{eq:EX_VS_lvs_interpolation}
\EX_{\VS} \sum\limits_{m \in \mathbb{N}^{*}} \langle \mu, e_{m}^{\F} \rangle_{\F}^{2}\bigg(1- \tau_{m}^{\F}(\bm{x})\bigg) = \sum\limits_{m \in \mathbb{N}^{*}} \langle \mu, e_{m}^{\F} \rangle_{\F}^{2}\bigg(1- \EX_{\VS}\tau_{m}^{\F}(\bm{x})\bigg).
\end{equation}
For the second term in the RHS of \eqref{eq:general_error_decomposition}, we need to justify that
\begin{align}
\EX_{\VS} \sum\limits_{\substack{m_{1},m_{2} \in \mathbb{N}^{*} \\ m_{1} \neq m_{2}}}  &\langle \mu, e_{m_{1}}^{\F} \rangle_{\F} \langle \mu, e_{m_{2}}^{\F} \rangle_{\F} \, \tau_{m_{1},m_{2}}^{\F}(\bm{x}) \nonumber\\
& = \sum\limits_{\substack{m_{1},m_{2} \in \mathbb{N}^{*} \\ m_{1} \neq m_{2}}}  \langle \mu, e_{m_{1}}^{\F} \rangle_{\F} \langle \mu, e_{m_{2}}^{\F} \rangle_{\F} \,\EX_{\VS}\tau_{m_{1},m_{2}}^{\F}(\bm{x}) = 0.
\label{eq:EX_VS_cross_lvs_interpolation}
\end{align}
This can be done using dominated convergence. Indeed, let $M \in \Ns$. We have
\begin{align}
\EX_{\VS} \sum\limits_{\substack{m_{1},m_{2} \in [M] \\ m_{1} \neq m_{2}}}  &\langle \mu, e_{m_{1}}^{\F} \rangle_{\F} \langle \mu, e_{m_{2}}^{\F} \rangle_{\F} \,\tau_{m_{1},m_{2}}^{\F}(\bm{x}) \nonumber\\
& =  \sum\limits_{\substack{m_{1},m_{2} \in [M] \\ m_{1} \neq m_{2}}}  \langle \mu, e_{m_{1}}^{\F} \rangle_{\F} \langle \mu, e_{m_{2}}^{\F} \rangle_{\F} \,\EX_{\VS} \tau_{m_{1},m_{2}}^{\F}(\bm{x}) = 0.
\label{eq:EX_VS_cross_lvs_interpolation_finite_sum}
\end{align}
Moreover,
\begin{align}
\Bigg\vert\sum\limits_{\substack{m_{1},m_{2} \in [M] \\ m_{1} \neq m_{2}}}  &\langle \mu, e_{m_{1}}^{\F} \rangle_{\F} \langle \mu, e_{m_{2}}^{\F} \rangle_{\F} \,\tau_{m_{1},m_{2}}^{\F}(\bm{x})\Bigg\vert\nonumber \\
& = \left|\sum\limits_{\substack{m_{1},m_{2} \in [M]}}  \langle \mu, e_{m_{1}}^{\F} \rangle_{\F} \langle \mu, e_{m_{2}}^{\F} \rangle_{\F} \,\tau_{m_{1},m_{2}}^{\F}(\bm{x}) - \sum\limits_{m \in [M]} \langle \mu, e_{m}^{\F} \rangle_{\F}^{2} \,\tau_{m}^{\F}(\bm{x})\right|  \nonumber\\
 & \leq \left|\sum\limits_{m_{1},m_{2} \in [M]}  \langle \mu, e_{m_{1}}^{\F} \rangle_{\F} \langle \mu, e_{m_{2}}^{\F} \rangle_{\F} \,\tau_{m_{1},m_{2}}^{\F}(\bm{x})\right|
 +  \left|\sum\limits_{m \in [M]} \langle \mu, e_{m}^{\F} \rangle_{\F}^{2} \,\tau_{m}^{\F}(\bm{x})\right| \nonumber\\
& = \left\| \Pi_{\mathcal{T}(\bm{x})} \sum\limits_{m \in [M]} \langle \mu, e_{m}^{\F} \rangle_{\F} e_{m}^{\F}\right\|_{\F}^{2}  +  \left|\sum\limits_{m \in [M]} \langle \mu, e_{m}^{\F} \rangle_{\F}^{2} \,\tau_{m}^{\F}(\bm{x})\right| \nonumber\\
& \leq \left\|\sum\limits_{m \in [M]} \langle \mu, e_{m}^{\F} \rangle_{\F} e_{m}^{\F}\right\|_{\F}^{2}  +  \sum\limits_{m \in [M]} \langle \mu, e_{m}^{\F} \rangle_{\F}^{2} \nonumber \\
& = 2 \|\mu\|_{\F}^{2} < +\infty.
\label{eq:EX_VS_cross_lvs_interpolation_domination}
\end{align}
Combining \eqref{eq:EX_VS_cross_lvs_interpolation_finite_sum} and \eqref{eq:EX_VS_cross_lvs_interpolation_domination}, we deduce \eqref{eq:EX_VS_cross_lvs_interpolation} by the dominated convergence theorem.

Finally, we combine \eqref{eq:EX_VS_lvs_interpolation} and \eqref{eq:EX_VS_cross_lvs_interpolation} to get
\begin{align}
\EX_{\VS}\|\mu - \Pi_{\mathcal{T}(\bm{x})} \mu\|_{\F}^{2} & = \sum\limits_{m \in \mathbb{N}^{*}} \langle \mu, e_{m}^{\F} \rangle_{\F}^{2}\bigg(1- \EX_{\VS}\tau_{m}^{\F}(\bm{x})\bigg) \nonumber\\
& = \sum\limits_{n \in [N]} \langle \mu, e_{n}^{\F} \rangle_{\F}^{2}\bigg(1- \EX_{\VS}\tau_{n}^{\F}(\bm{x})\bigg) + \sum\limits_{m \geq N+1} \langle \mu, e_{m}^{\F} \rangle_{\F}^{2}\bigg(1- \EX_{\VS}\tau_{m}^{\F}(\bm{x})\bigg).
\end{align}
On the one hand,
\begin{equation}
\forall m \geq N+1, \:\: 1- \EX_{\VS}\tau_{m}^{\F}(\bm{x}) \leq 1,
\end{equation}
and on the other hand, remember that by Theorem~\ref{thm:main_result_1}, the sequence $\epsilon_{m}$ is non-increasing, so that
\begin{align}
\forall n \in [N], \: \sigma_{n} (1-\EX_{\VS} \tau_{n}^{\F}(\bm{x})) & = \EX_{\VS} \|\mu_{e_{n}} - \Pi_{\mathcal{T}(\bm{x})} \mu_{e_{n}}\|_{\F}^{2} \\
& = \epsilon_{n} \\
& \leq \epsilon_{1},
\end{align}
and by \eqref{eq:ineq_r_N} in the same theorem one gets
\begin{equation}
\sigma_{n} (1-\EX_{\VS} \tau_{n}^{\F}(\bm{x})) \leq (1+\beta_{N}) \sigma_{N},
\end{equation}
so that
\begin{equation}
 (1-\EX_{\VS} \tau_{n}^{\F}(\bm{x})) \leq (1+\beta_{N})\frac{ \sigma_{N}}{\sigma_{n}}.
\end{equation}
Assumption~\ref{hyp:beta_N_bounded} yields \
\begin{equation}
\forall n \in [N],\:\: 1- \EX_{\VS}\tau_{n}^{\F}(\bm{x}) \leq  (1+B) \frac{\sigma_{N}}{\sigma_{n}}.
\end{equation}
This concludes the proof of the lemma.
\end{proof}
\subsubsection{The expected value of the interpolation error}
If there exists $r \in [0,1/2]$ such that $\displaystyle \| \Sigma^{-r} \mu \|_{\F} < +\infty$, we have
\begin{align}
\sum\limits_{m \geq N+1} \langle \mu,e_{m}^{\F} \rangle_{\F}^{2}
& = \sum\limits_{m \geq N+1} \sigma_{m}^{2r} \frac{ \langle \mu,e_{m}^{\F} \rangle_{\F}^{2}}{\sigma_{m}^{2r}} \\
& \leq \sigma_{N+1}^{2r} \sum\limits_{m \geq N+1} \frac{ \langle \mu,e_{m}^{\F} \rangle_{\F}^{2}}{\sigma_{m}^{2r}} \\
& \leq \sigma_{N+1}^{2r} \|\Sigma^{-r} \mu\|_{\F}^{2},
\end{align}
and
\begin{align}
(1+B) \sum\limits_{m \in [N]} \frac{\sigma_{N}}{\sigma_{m}} \langle \mu,e_{m}^{\F} \rangle_{\F}^{2}
& = (1+B) \sum\limits_{m \in [N]} \frac{\sigma_{N}}{\sigma_{m}^{1-2r+2r}} \langle \mu,e_{m}^{\F} \rangle_{\F}^{2} \\
& = (1+B) \sum\limits_{m \in [N]} \frac{\sigma_{N}}{\sigma_{m}^{1-2r}} \frac{\langle \mu,e_{m}^{\F} \rangle_{\F}^{2}}{\sigma_{m}^{2r}} \\
& \leq (1+B) \sigma_{N}^{2r} \sum\limits_{m \in [N]} \frac{\langle \mu,e_{m}^{\F} \rangle_{\F}^{2}}{\sigma_{m}^{2r}} \\
& = (1+B) \sigma_{N}^{2r} \|\Sigma^{-r} \mu\|_{\F}^{2}.
\end{align}
By Lemma~\ref{lemma:delayed_bounds}, $\displaystyle \EX_{\VS}\| \mu - \Pi_{\mathcal{T}(\bm{x})} \mu \|_{\F}^{2}$ converges at the slow rate $\mathcal{O}(\sigma_{N}^{2r})$.

On the other hand, if there exists $r > 1/2$ such that $\displaystyle \| \Sigma^{-r} \mu \|_{\F} < +\infty$, we have
\begin{align}
(1+B) \sum\limits_{m \in [N]} \frac{\sigma_{N}}{\sigma_{m}} \langle \mu,e_{m}^{\F} \rangle_{\F}^{2}
& = (1+B) \sum\limits_{m \in [N]} \frac{\sigma_{N}}{\sigma_{m}^{1-2r+2r}} \langle \mu,e_{m}^{\F} \rangle_{\F}^{2} \\
& \leq (1+B) \sigma_{N} \sigma_{1}^{2r-1} \sum\limits_{m \in [N]} \frac{\langle \mu,e_{m}^{\F} \rangle_{\F}^{2}}{\sigma_{m}^{2r}} \\
&  \leq (1+B) \sigma_{N} \sigma_{1}^{2r-1} \|\Sigma^{-r} \mu\|_{\F}^{2},
\end{align}
and
\begin{align}
\sum\limits_{m \geq N+1} \langle \mu,e_{m}^{\F} \rangle_{\F}^{2}
& = \sum\limits_{m \geq N+1} \sigma_{m}^{2r} \frac{ \langle \mu,e_{m}^{\F} \rangle_{\F}^{2}}{\sigma_{m}^{2r}} \\
& \leq \sigma_{N+1}^{2r} \sum\limits_{m \geq N+1} \frac{ \langle \mu,e_{m}^{\F} \rangle_{\F}^{2}}{\sigma_{m}^{2r}} \\
& \leq \sigma_{N+1}^{2r} \|\Sigma^{-r} \mu\|_{\F}^{2}.
\end{align}
This time, the bound in Lemma~\ref{lemma:delayed_bounds} is dominated by its first term, so that $\displaystyle \EX_{\VS}\| \mu - \Pi_{\mathcal{T}(\bm{x})} \mu \|_{\F}^{2}$ converges at the faster rate $\mathcal{O}(\sigma_{N})$.

\subsection{Proof of Theorem~\ref{thm:EX_VS_integration_error}}\label{app:proof_bias}
\subsubsection{Proof of the bias identity}
First, recall that, as $f$ and $g$ belong to $\Ltwo$, we have
\begin{equation}
\int_{\X}f(x)g(x) \Mu = \sum\limits_{m \in \Ns} \langle f,e_{m}\rangle_{\mathrm{d}\omega} \langle g,e_{m}\rangle_{\mathrm{d}\omega},
\end{equation}
thus, in order to prove the result, it is enough to prove that
\begin{equation}\label{eq:EX_VS_optimal_quadrature_formula}
\EX_{\VS} \sum\limits_{i \in [N]} \widehat{w}_{i}f(x_{i}) = \sum\limits_{m \in \Ns} \langle f,e_{m} \rangle_{\mathrm{d}\omega} \langle g,e_{m} \rangle_{\mathrm{d}\omega} \EX_{\VS}\tau_{m}^{\F}(\bm{x}).
\end{equation}
Let $\bm{x} \in \X^{N}$ such that $\Det \bm{K}(\bm{x})>0$. The optimal kernel quadrature weights satisfy
\begin{equation}
\widehat{\bm{w}} = \bm{K}(\bm{x})^{-1}\mu_{g}(\bm{x}),
\end{equation}
so that
\begin{align}
\sum\limits_{i \in N} \widehat{w}_{i}f(x_{i}) & = \widehat{\bm{w}}^{\Tran} f(\bm{x}) \\
& = \mu_{g}(\bm{x})^{\Tran} \bm{K}(\bm{x})^{-1} f(\bm{x})\\
& = \sum_{m_{1},m_{2} \in \Ns} \sigma_{m_{1}} \langle g, e_{m_{1}} \rangle_{\mathrm{d}\omega} \langle f, e_{m_{2}}^{\F} \rangle_{\F} \; e_{m_{1}}(\bm{x})^{\Tran} \bm{K}(\bm{x})^{-1}e_{m_{2}}^{\F}(\bm{x}) \\
& = \sum\limits_{m_{1},m_{2} \in \Ns} \sqrt{\sigma_{m_{1}}} \langle g, e_{m_{1}} \rangle_{\mathrm{d}\omega}  \langle f, e_{m_{2}}^{\F} \rangle_{\F}\;  e_{m_{1}}^{\F}(\bm{x})^{\Tran} \bm{K}(\bm{x})^{-1}e_{m_{2}}^{\F}(\bm{x}).
\label{e:last_tool}
\end{align}
We want to use the dominated convergence theorem to take expectations in \eqref{e:last_tool}.
Let $M \in \Ns$.
By Lemma~\ref{lemma:lvs_identities} and by the fact that $\Pi_{\mathcal{T}(\bm{x})}$ is an $\langle.,.\rangle_{\F}$-orthogonal projection, it comes
\begin{align}
\Bigg|\sum_{m_{1}, m_2 \in [M]} &\sqrt{\sigma_{m_{1}}} \langle g, e_{m_{1}} \rangle_{\mathrm{d}\omega} \langle f, e_{m_{2}}^{\F} \rangle_{\F}  \: e_{m_{1}}^{\F}(\bm{x})^{\Tran} \bm{K}(\bm{x})^{-1}e_{m_{2}}^{\F}(\bm{x}) \Bigg| \\
= & \left|\sum_{m_{1}, m_2 \in [M]} \sqrt{\sigma_{m_{1}}} \langle g, e_{m_{1}} \rangle_{\mathrm{d}\omega}  \langle f, e_{m_{2}}^{\F} \rangle_{\F}  \: \langle \Pi_{\mathcal{T}(\bm{x})}e_{m_{1}}^{\F}, \Pi_{\mathcal{T}(\bm{x})}e_{m_{2}}^{\F} \rangle_{\F} \right|\\
= & \left| \left\langle \Pi_{\mathcal{T}(\bm{x})} \sum\limits_{m_{1} \in [M] }  \sqrt{\sigma_{m_{1}}} \langle g, e_{m_{1}} \rangle_{\mathrm{d}\omega}   e_{m_{1}},\:  \Pi_{\mathcal{T}(\bm{x})} \sum\limits_{m_{2} \in [M]} \langle f, e_{m_{2}}^{\F} \rangle_{\F} e_{m_{2}}^{\F} \right\rangle_{\F} \right|\\
\leq & \left| \left\langle \sum\limits_{m_{1} \in [M] }  \sqrt{\sigma_{m_{1}}} \langle g, e_{m_{1}} \rangle_{\mathrm{d}\omega}   e_{m_{1}},  \sum\limits_{m_{2} \in [M]} \langle f, e_{m_{2}}^{\F} \rangle_{\F} e_{m_{2}}^{\F} \right\rangle_{\F} \right|\\
\leq & \left\| \sum\limits_{m_{1} \in [M] }  \sqrt{\sigma_{m_{1}}} \langle g, e_{m_{1}} \rangle_{\mathrm{d}\omega}   e_{m_{1}} \right\|_{\F}\: \left \| \sum\limits_{m_{2} \in [M]} \langle f, e_{m_{2}}^{\F} \rangle_{\F} e_{m_{2}}^{\F} \right\|_{\F}.
\end{align}
Now,
\begin{align}
\left\| \sum\limits_{m_{1} \in [M] }  \sqrt{\sigma_{m_{1}}} \langle g, e_{m_{1}} \rangle_{\mathrm{d}\omega}   e_{m_{1}} \right\|_{\F}  & \left \| \sum\limits_{m_{2} \in [M]} \langle f, e_{m_{2}}^{\F} \rangle_{\F} e_{m_{2}}^{\F} \right\|_{\F}\\
&  = \left\| \sum\limits_{m_{1} \in [M] }  \langle g, e_{m_{1}} \rangle_{\mathrm{d}\omega}   e_{m_{1}}^{\F} \right\|_{\F} \left \| \sum\limits_{m_{2} \in [M]} \langle f, e_{m_{2}}^{\F} \rangle_{\F} e_{m_{2}}^{\F} \right\|_{\F}\\
&  = \left\| \sum\limits_{m_{1} \in [M] }  \langle g, e_{m_{1}} \rangle_{\mathrm{d}\omega}   e_{m_{1}} \right\|_{\mathrm{d}\omega} \left \| \sum\limits_{m_{2} \in [M]} \langle f, e_{m_{2}}^{\F} \rangle_{\F} e_{m_{2}}^{\F} \right\|_{\F}\\
&  \leq \left\| \sum\limits_{m_{1} \in \Ns }  \langle g, e_{m_{1}} \rangle_{\mathrm{d}\omega}   e_{m_{1}} \right\|_{\mathrm{d}\omega} \left \| \sum\limits_{m_{2} \in \Ns} \langle f, e_{m_{2}}^{\F} \rangle_{\F} e_{m_{2}}^{\F} \right\|_{\F}\\
& < +\infty,
\end{align}
since $\sum_{m \in \Ns} \sqrt{\sigma_{m}}\langle g,e_{m} \rangle_{\mathrm{d}\omega} e_{m} \in \F$.
Dominated convergenve thus yields
\begin{align}
\EX_{\VS} \sum\limits_{m_{1},m_{2} \in \Ns} \sqrt{\sigma_{m_{1}}} & \langle g, e_{m_{1}} \rangle_{\mathrm{d}\omega}  \langle f, e_{m_{2}}^{\F} \rangle_{\F} \:e_{m_{1}}^{\F}(\bm{x})^{\Tran} \bm{K}(\bm{x})^{-1}e_{m_{2}}^{\F}(\bm{x}) \\
& = \sum\limits_{m_{1},m_{2} \in \Ns} \sqrt{\sigma_{m_{1}}} \langle g, e_{m_{1}} \rangle_{\mathrm{d}\omega}  \langle f, e_{m_{2}}^{\F} \rangle_{\F} \:\EX_{\VS} e_{m_{1}}^{\F}(\bm{x})^{\Tran} \bm{K}(\bm{x})^{-1}e_{m_{2}}^{\F}(\bm{x}).
\end{align}
Using Proposition~\ref{prop:EX_VS_lvs}, we continue our derivation as
\begin{align}
\EX_{\VS} \sum\limits_{m_{1},m_{2} \in \Ns} \sqrt{\sigma_{m_{1}}} & \langle g, e_{m_{1}} \rangle_{\mathrm{d}\omega}  \langle f, e_{m_{2}}^{\F} \rangle_{\F} \: e_{m_{1}}^{\F}(\bm{x})^{\Tran} \bm{K}(\bm{x})^{-1}e_{m_{2}}^{\F}(\bm{x}) \\
& = \sum\limits_{m \in \Ns} \sqrt{\sigma_{m}} \langle g, e_{m} \rangle_{\mathrm{d}\omega}  \langle f, e_{m}^{\F} \rangle_{\F} \: \EX_{\VS} e_{m}^{\F}(\bm{x})^{\Tran} \bm{K}(\bm{x})^{-1}e_{m}^{\F}(\bm{x})\\
& = \sum\limits_{m \in \Ns}  \langle g, e_{m} \rangle_{\mathrm{d}\omega}  \sqrt{\sigma_{m}}\langle f, e_{m}^{\F} \rangle_{\F} \: \EX_{\VS} \tau_{m}^{\F}(\bm{x}).
\end{align}
Finally, \eqref{eq:EX_VS_optimal_quadrature_formula} is obtained upon noting that
\begin{equation}
\forall m \in \Ns, \: \langle f,e_{m} \rangle_{\mathrm{d}\omega} = \sqrt{\sigma_{m}} \langle f,e_{m}^{\F} \rangle_{\F}.
\end{equation}
\subsubsection{Proof of the asymptotic unbiasedness of the quadrature }

The expected value of the bias writes
\begin{equation}
\EX_{\VS} \left( \int_{\X} f(x)g(x) \mathrm{d}\omega(x) - \sum\limits_{i \in [N]} \widehat{w}_{i}f(x_{i}) \right)= \sum\limits_{m \in \Ns} \langle f,e_{m} \rangle_{\mathrm{d}\omega}\langle g,e_{m} \rangle_{\mathrm{d}\omega} \left( 1- \EX_{\VS}\tau_{m}^{\F}(\bm{x}) \right).
\label{e:very_last_tool}
\end{equation}
Now, by Theorem~\ref{thm:main_result_1}, for $m \in \Ns$,
\begin{equation}
\EX_{\VS} \|\mu_{e_{m}} - \Pi_{\mathcal{T}(\bm{x})} \mu_{e_{m}}\|_{\F}^{2} \leq \epsilon_{1} \leq \sigma_{N}(1+\beta_{N}) \leq \sigma_{N} + \sum\limits_{n \geq N} \sigma_{n}.
\end{equation}
Thus
\begin{equation}
0 \leq 1-\EX_{\VS} \tau_{m}^{\F}(\bm{x}) = {\sigma_m}^{-1} \EX_{\VS} \|\mu_{e_{m}} - \Pi_{\mathcal{T}(\bm{x})} \mu_{e_{m}}\|_{\F}^{2} \leq {\sigma_{m}}^{-1}{\sigma_{N} + \sum\limits_{n \geq N} \sigma_{n}},
\end{equation}
so that
\begin{align}
\lim_{N \rightarrow \infty} \langle f,e_{m} \rangle_{\mathrm{d}\omega}\langle g,e_{m} \rangle_{\mathrm{d}\omega} \left(1- \EX_{\VS}\tau_{m}^{\F}(\bm{x}) \right) & = \langle f,e_{m} \rangle_{\mathrm{d}\omega}\langle g,e_{m} \rangle_{\mathrm{d}\omega} (1-\lim_{N \rightarrow \infty} \EX_{\VS}\tau_{m}^{\F}(\bm{x})) = 0.
\end{align}
To conclude, it is thus enough to apply the dominated convergence theorem to \eqref{e:very_last_tool}. By Lemma~\ref{lemma:lvs_identities}, $\tau_{m}^{\F}(\bm{x}) \in [0,1]$, so that $1-\EX_{\VS}\tau_{m}^{\F}(\bm{x}) \in [0,1]$.
In particular, for all $N \in \Ns$,
\begin{align}
|\langle f,e_{m} \rangle_{\mathrm{d}\omega}\langle g,e_{m} \rangle_{\mathrm{d}\omega} \left(1- \EX_{\VS}\tau_{m}^{\F}(\bm{x}) \right)| & \leq |\langle f,e_{m} \rangle_{\mathrm{d}\omega}\langle g,e_{m} \rangle_{\mathrm{d}\omega}|\\
& \leq \frac{1}{2}\left(\left\langle f,e_{m} \right\rangle_{\mathrm{d}\omega}^{2} + \left\langle g,e_{m} \right\rangle_{\mathrm{d}\omega}^{2}\right),
\end{align}
which is the generic term of a convergent series as $f,g\in\Ltwo$. This concludes the proof.
%
\section{More concrete examples of RKHSs}
In this section, we illustrate the bound of Theorem~\ref{thm:main_result_1} and the constants of Proposition~\ref{prop:constant_bound} on more examples.
\subsection{The uni-dimensional periodic Sobolev spaces}

Consider the uni-dimensional periodic Sobolev space of smoothness parameter $s \in \{1,2,3,4,5\}$.
The eigenvalues have a polynomial decay; see \citep{Wah90}. We take for $m \in \Ns$, $\sigma_{m}=m^{-2s}$ \footnote{We drop the potential multiplicities of the eigenvalues by simplicity.}.
For different values of $m$, Figure~\ref{fig:periodic_sobolev} illustrates the expected value of the $m$-th leverage score $\EX_{\VS} \tau_{m}^{\F}(\bm{x})$ (left panels) and the expected interpolation error $\EX_{\VS} \mathcal{E}(\mu_{e_{m}};\bm{x})^{2}$ (right panels), both as functions of $N$.
Remember that by Theorem~\ref{thm:main_result_1}:
\begin{equation}
\EX_{\VS} \mathcal{E}(\mu_{e_{m}};\bm{x})^{2} = \sigma_{m} \left(\sum\limits_{ U \in \: \UN} \prod\limits_{u \in U} \sigma_{u} \right)^{-1}  \sum\limits_{U \in \: \UNm} \prod\limits_{u \in U} \sigma_{u}.
\end{equation}
For numerical simulations, we make the following approximation

\begin{equation}\label{eq:error_numerical_approximation}
\EX_{\VS} \mathcal{E}(\mu_{e_{m}};\bm{x})^{2} \approx \sigma_{m} \left(\sum\limits_{\substack{U \subset \: [M]\\ |U|=N}} \prod\limits_{u \in U} \sigma_{u} \right)^{-1}  \sum\limits_{\substack{U \subset [M]\\ |U|=N, \: m \notin U}} \prod\limits_{u \in U} \sigma_{u},
\end{equation}
for an $M \geq N$ sufficiently large. The numerator and denominator of the right hand side of \eqref{eq:error_numerical_approximation} can be calculated using an efficient algorithm for the calculation of the elementary symmetric polynomials \citep{KuTa12}[Algorithm 7].

We observe that for low values of $s$, $\EX_{\VS} \tau_{m}^{\F}(\bm{x})$ depends smoothly on $N$. On the other hand, $\EX_{\VS} \tau_{m}^{\F}(\bm{x})$ undergoes a sharp transition at $N = m$ for high values of $s$: the reconstruction of the $m$-th eigenfunction is almost perfect for $N$ slightly larger than $m$. Moreover, $\EX_{\VS} \mathcal{E}(\mu_{e_{m}};\bm{x})^{2}$ respects the upper bound of Theorem~\ref{thm:main_result_1};
the constant $B$ of Proposition~\ref{prop:constant_bound} is small for high values of $s$ and converges to $e$ when $s \rightarrow +\infty$.
\begin{figure}[h]
    \centering
\includegraphics[width= 0.48\textwidth]{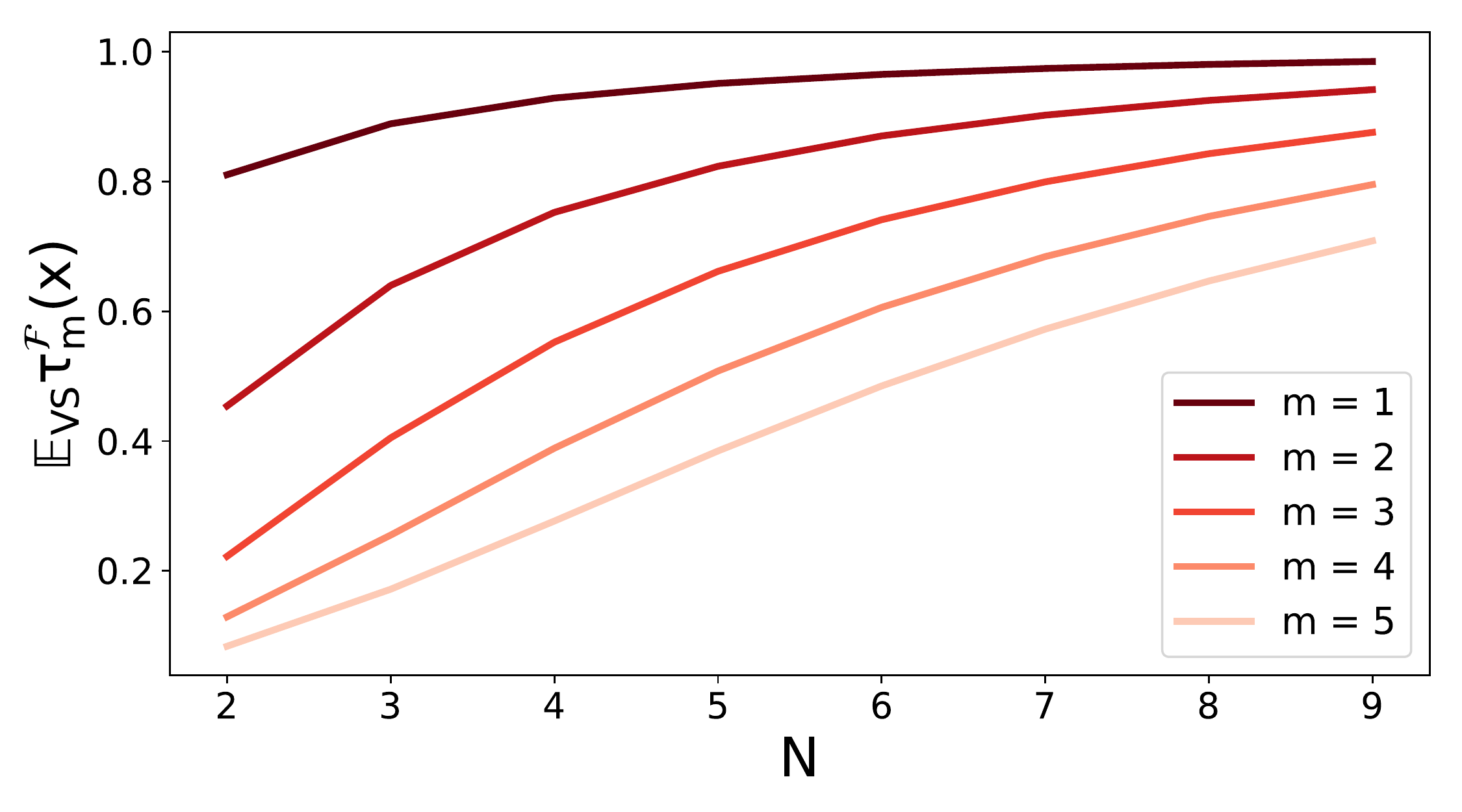}~\includegraphics[width= 0.48\textwidth]{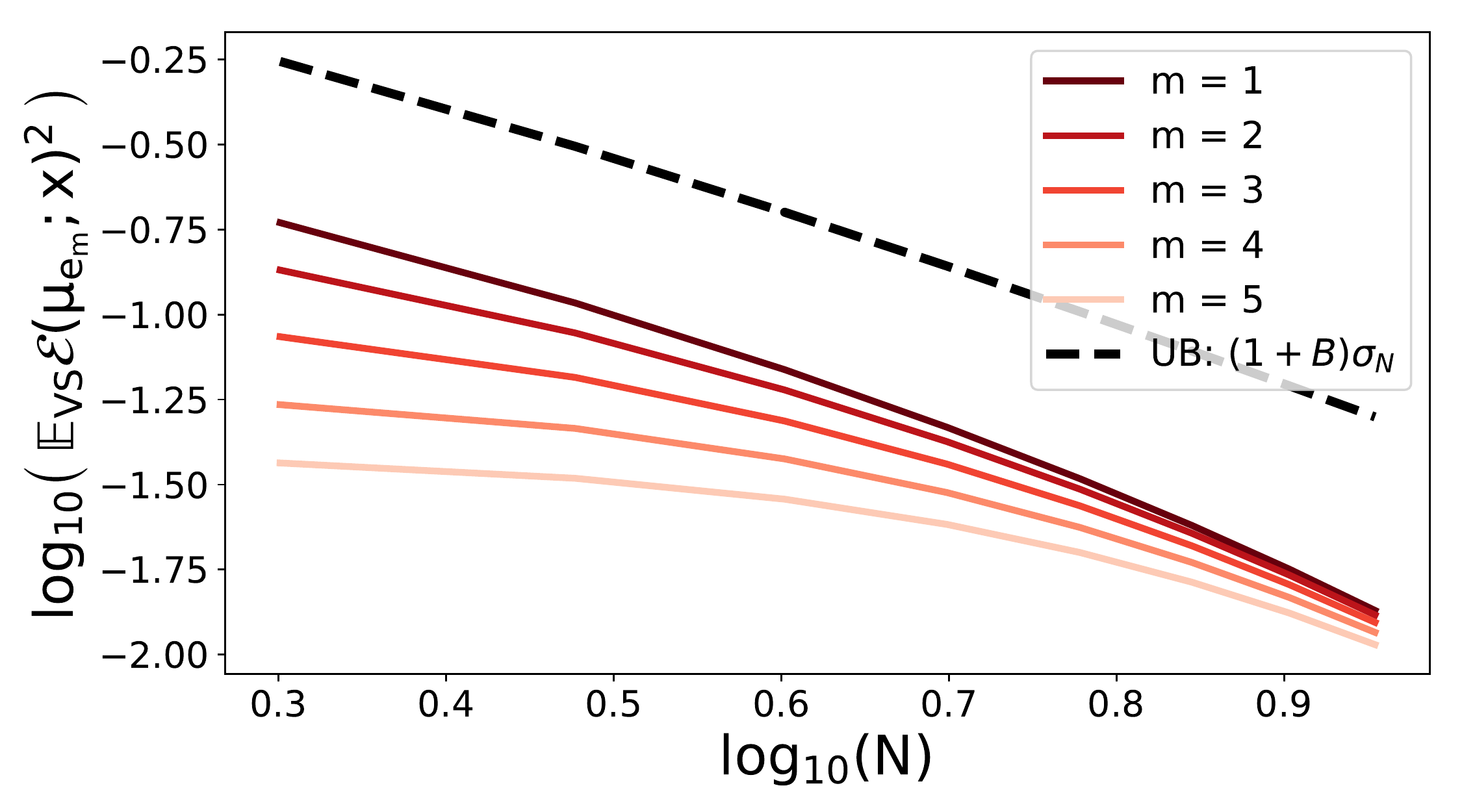}\\
\includegraphics[width= 0.48\textwidth]{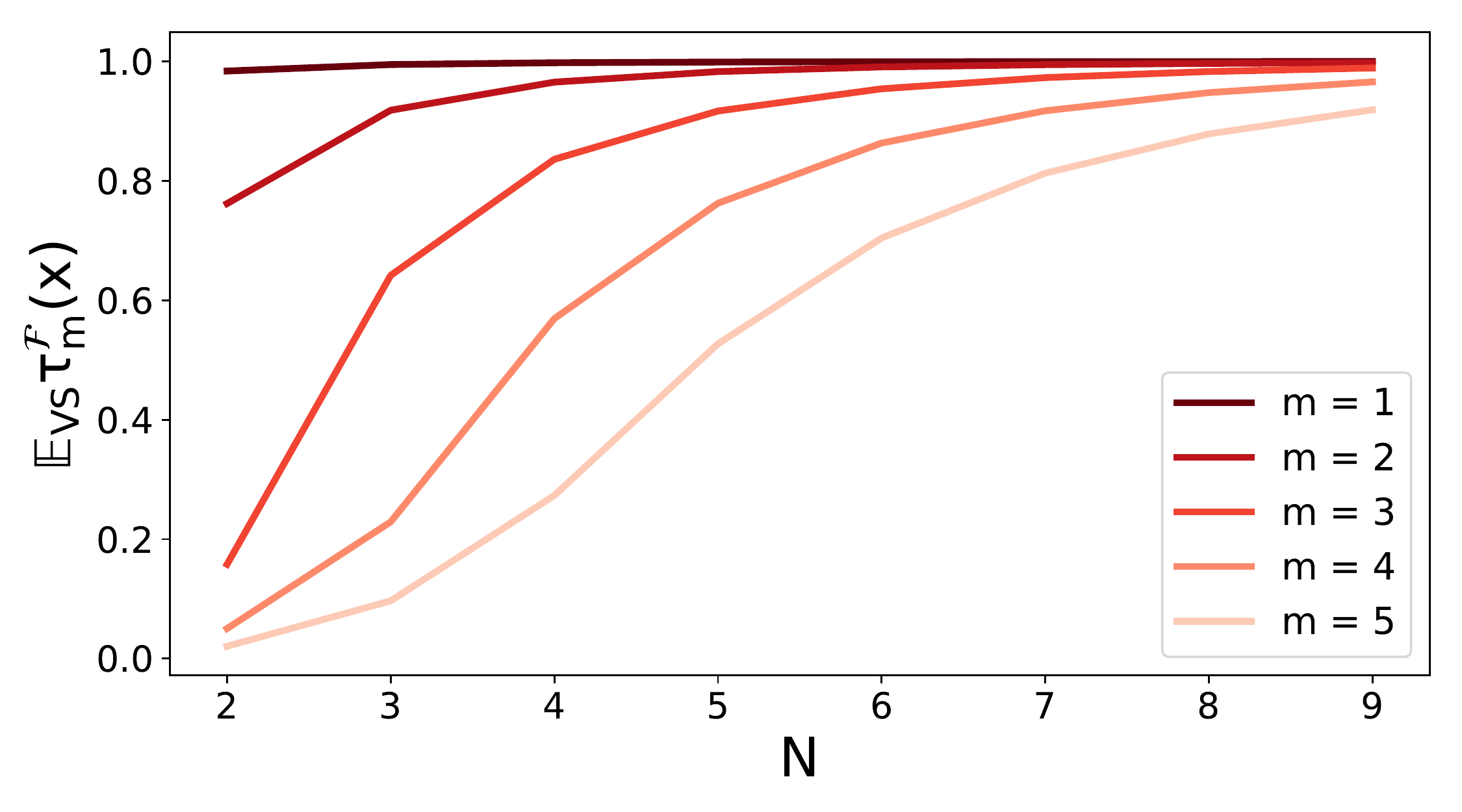}~\includegraphics[width= 0.48\textwidth]{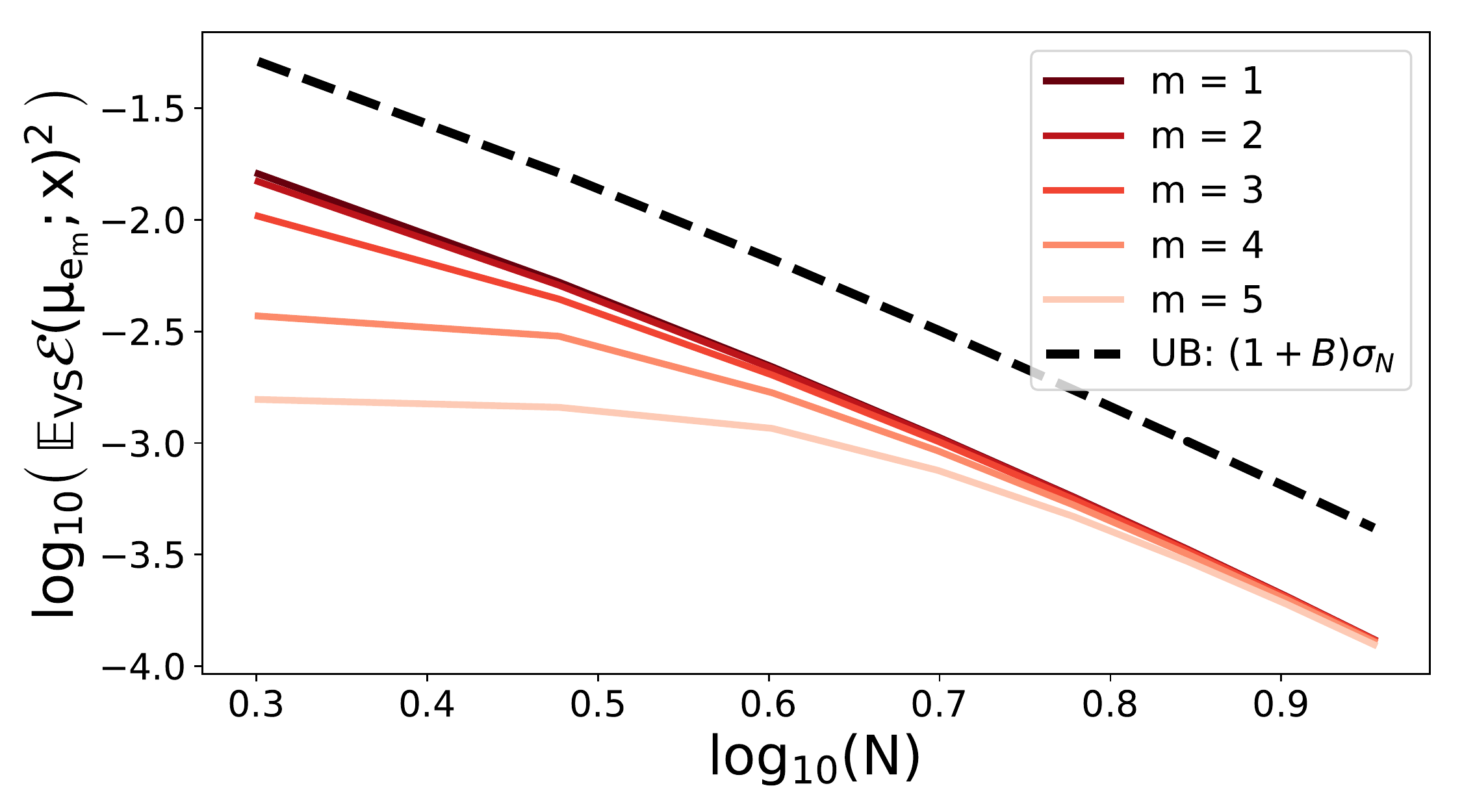}\\
\includegraphics[width= 0.48\textwidth]{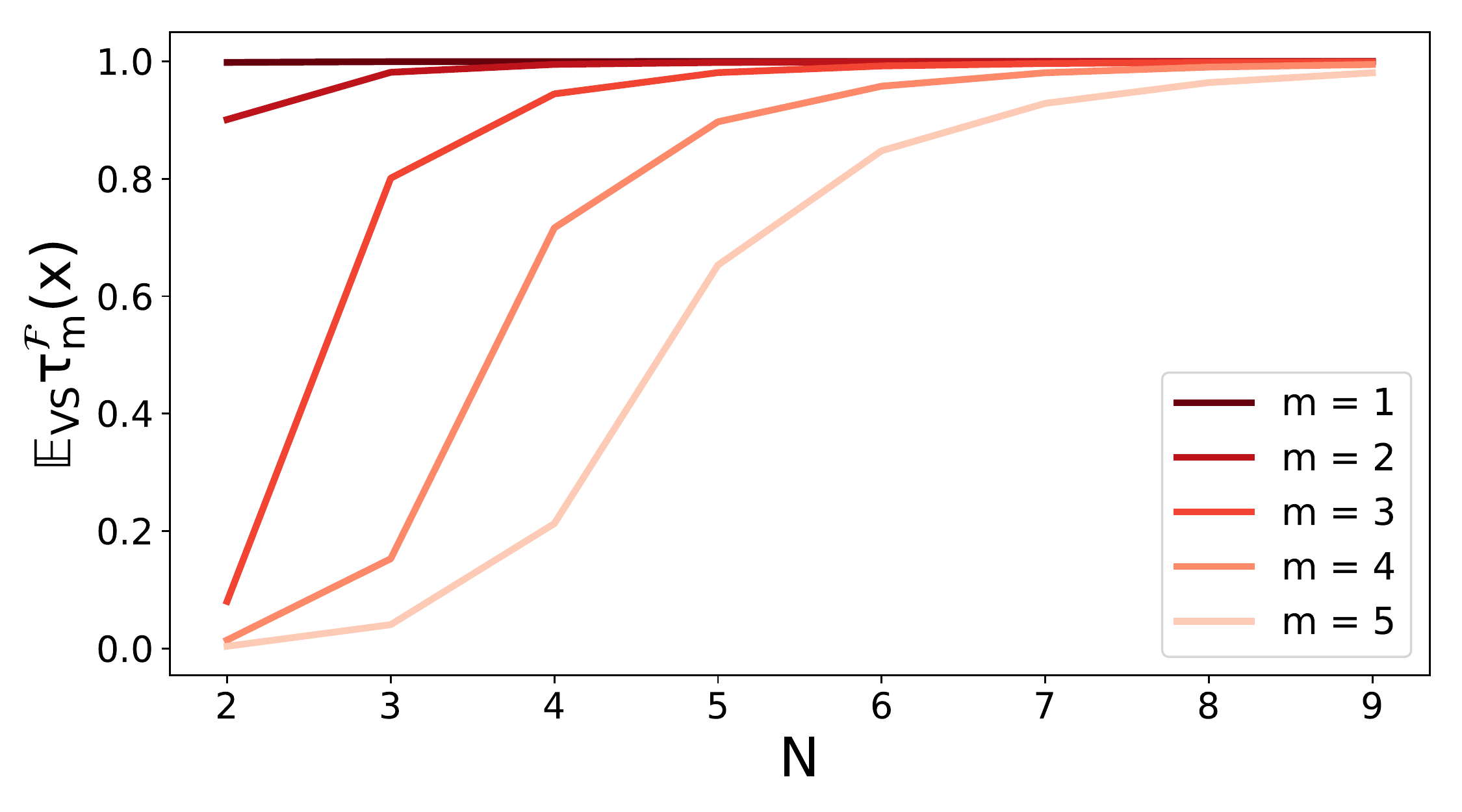}~\includegraphics[width= 0.48\textwidth]{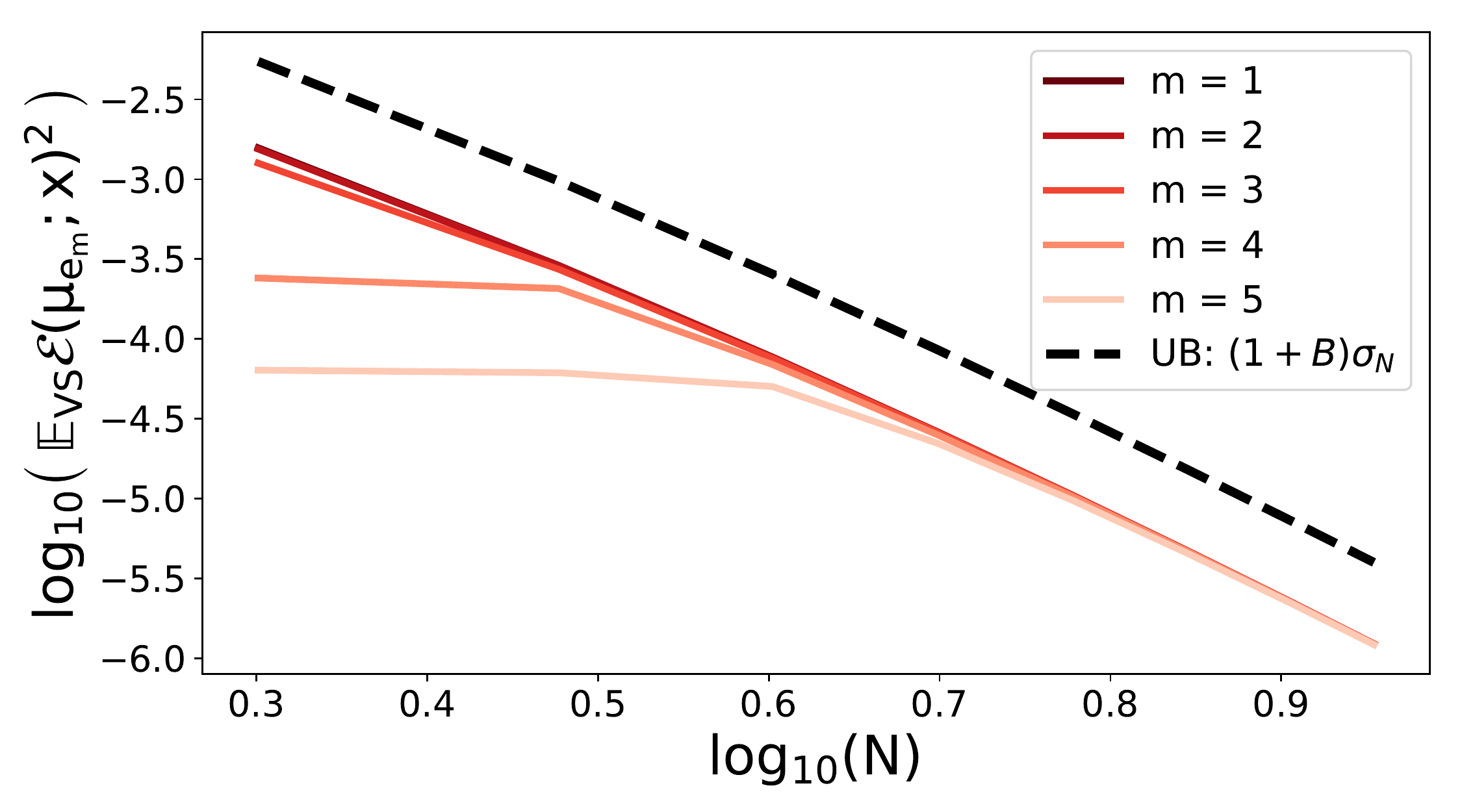}\\
\includegraphics[width= 0.48\textwidth]{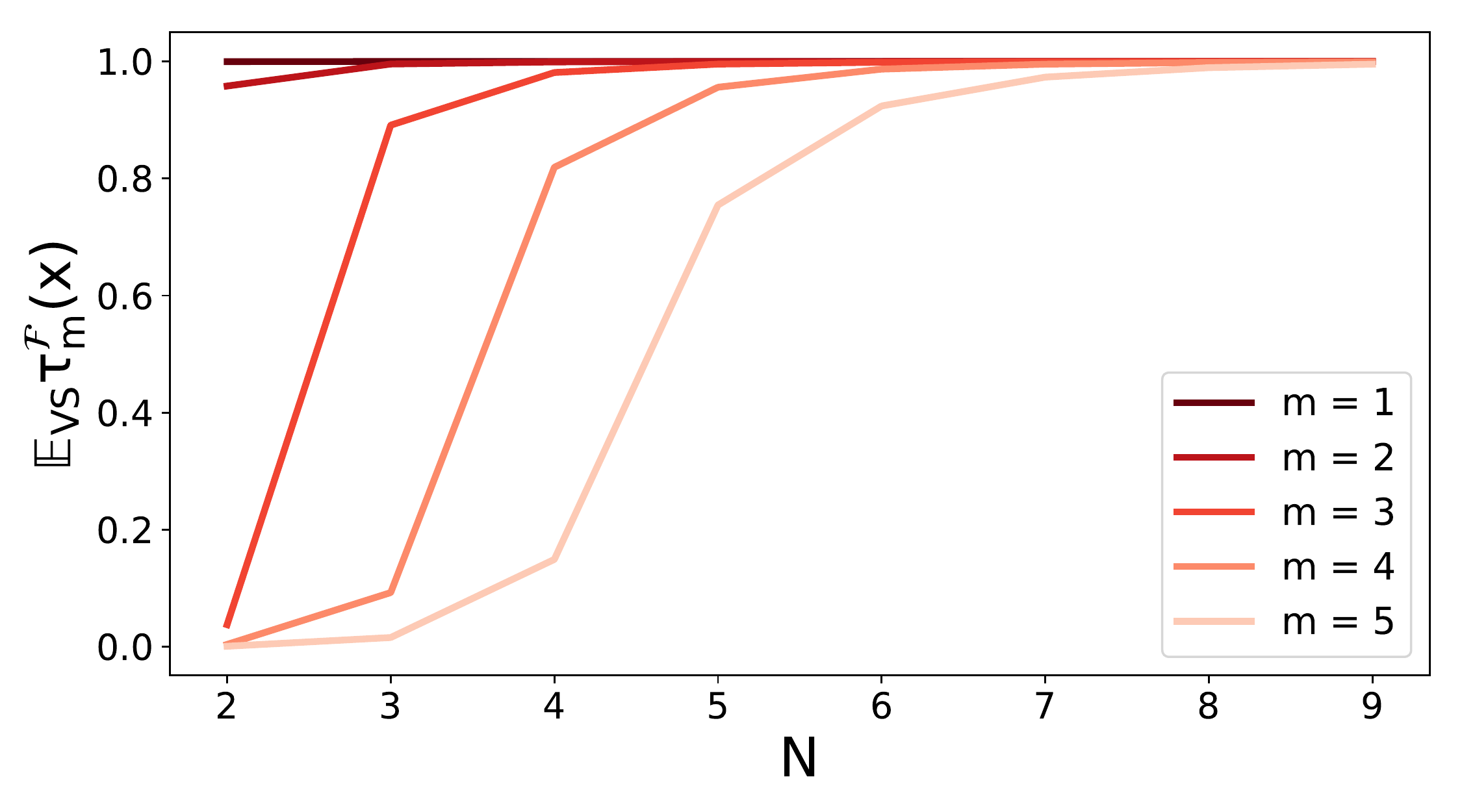}~\includegraphics[width= 0.48\textwidth]{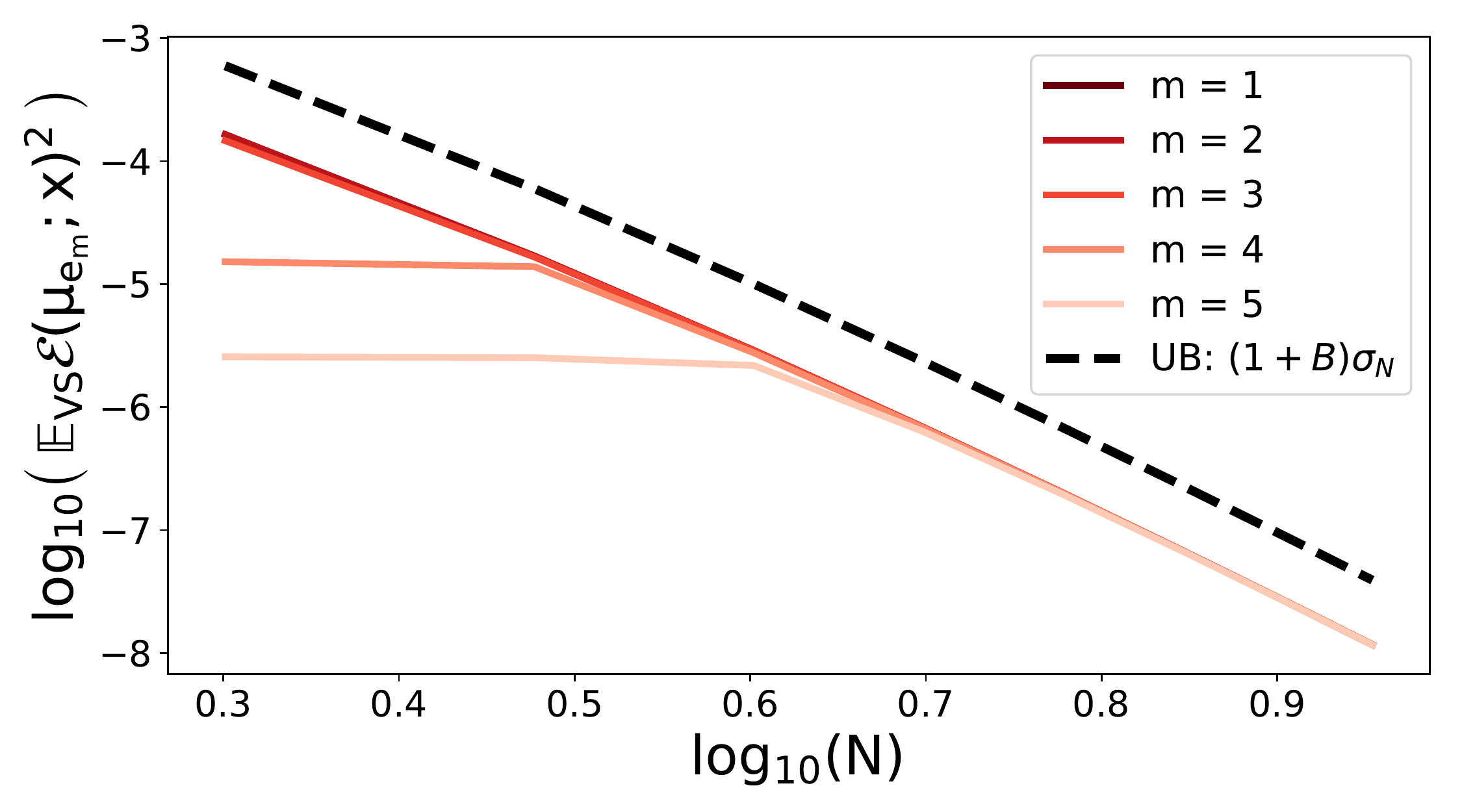}\\
\includegraphics[width= 0.48\textwidth]{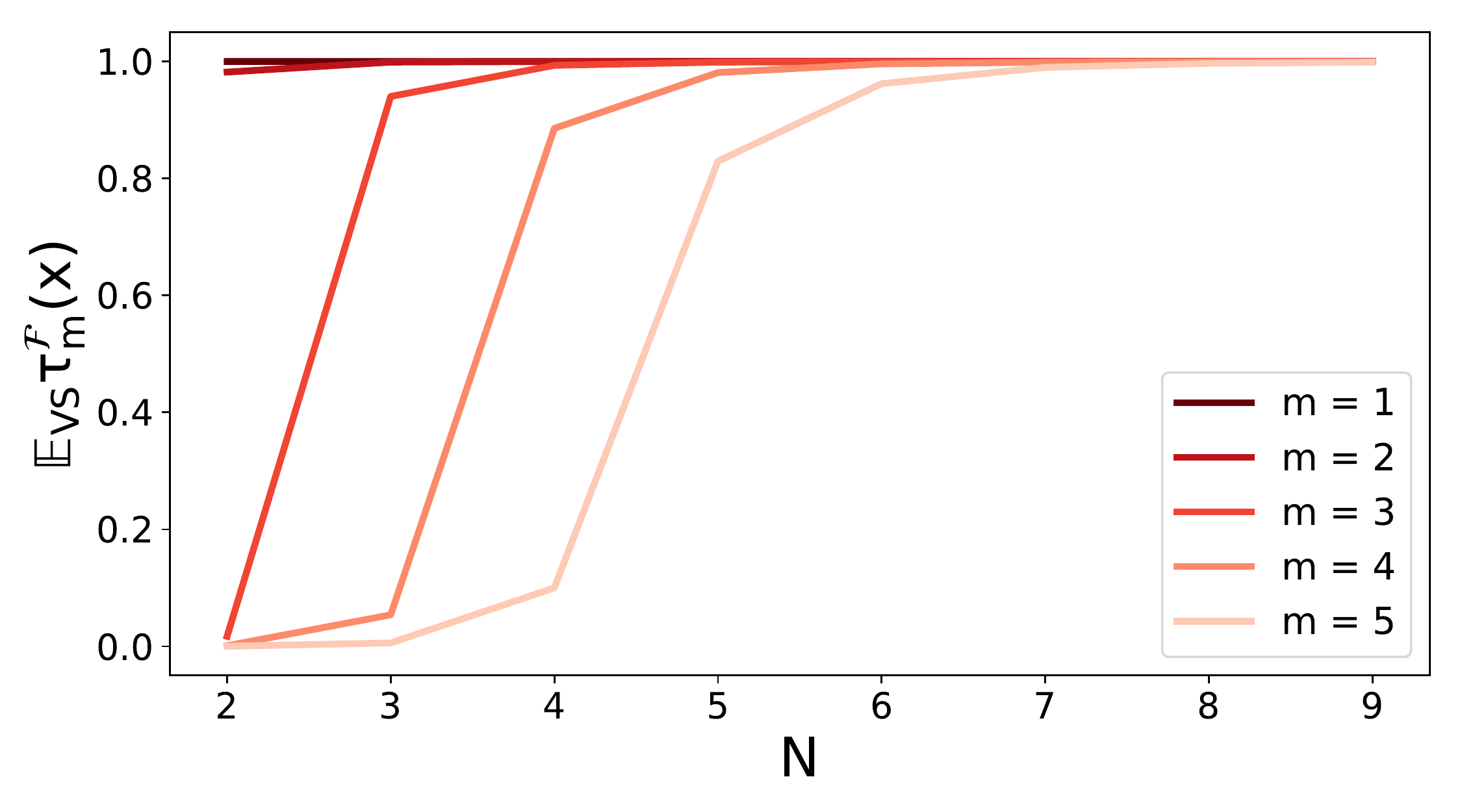}~\includegraphics[width= 0.48\textwidth]{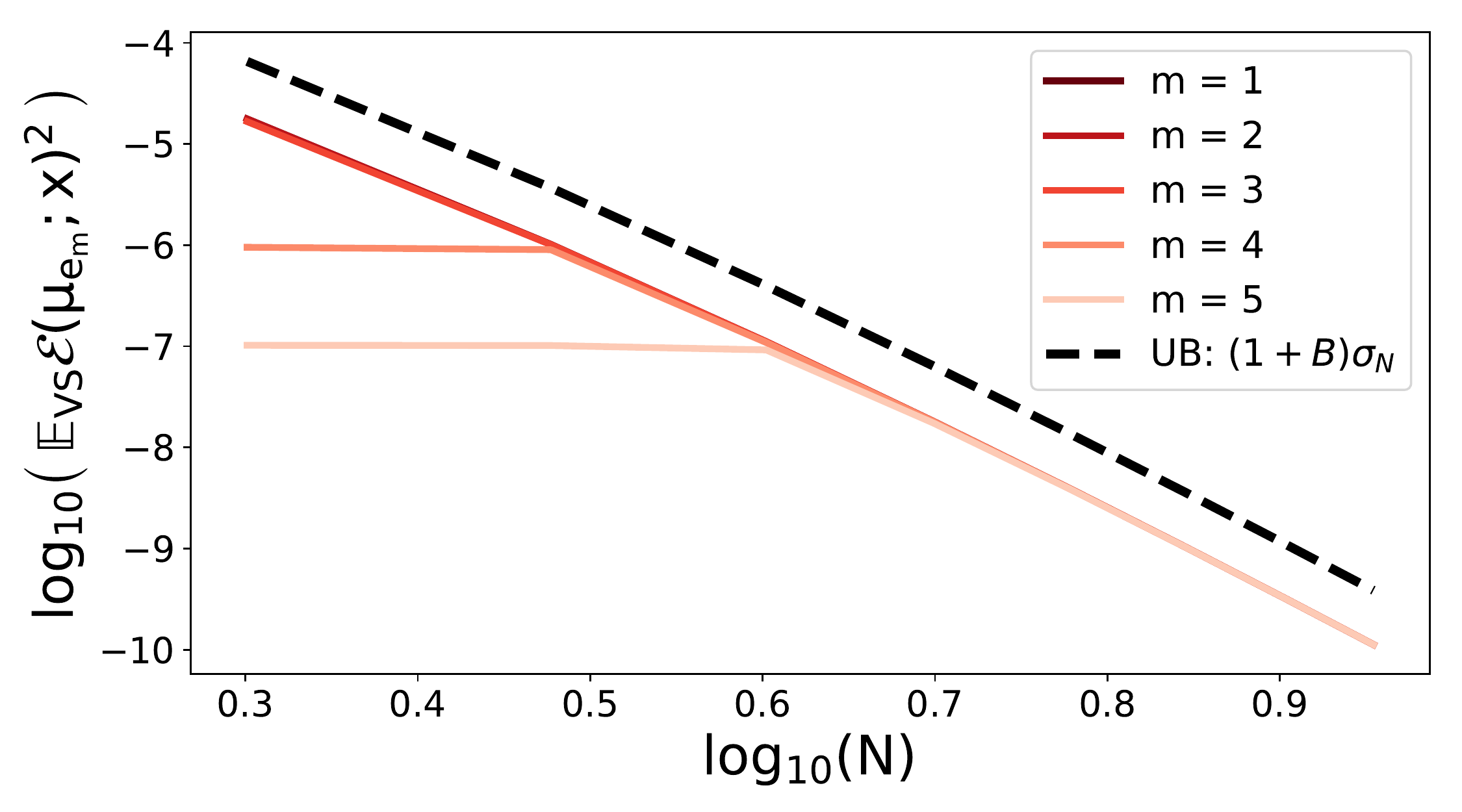}\\
\caption{The expected value of the $m$-th leverage score $\EX_{\VS} \tau_{m}^{\F}(\bm{x})$ (left panels) and the expected interpolation error $\EX_{\VS} \mathcal{E}(\mu_{e_{m}};\bm{x})^{2}$ (right panels), under the distribution of continuous volume sampling, for $m \in \{1,2,3,4,5\}$ and the uni-variate periodic Sobolev kernel.
Rows correspond to increasing values of the smoothness parameter $s=1,2,3,4,5$. \label{fig:periodic_sobolev}}
\end{figure}


\subsection{The uni-dimensional Gaussian spaces}

Consider now the RKHS generated by Gaussian kernel and the Gaussian measure. We take for all $m \in \Ns$, $\sigma_{m}=\alpha^{N}$  \citep{ZhWiRoMo97}, for some $\alpha \in [0,1[$.
Figure~\ref{fig:gaussian} illustrates the expected value of the $m$-th leverage score $\EX_{\VS} \tau_{m}^{\F}(\bm{x})$ (left panels) and the expected interpolation error $\EX_{\VS} \mathcal{E}(\mu_{e_{m}};\bm{x})^{2}$ (right panels), both as functions of $N$, for different values of $m$ and $\alpha \in \{0.7,0.5,0.2\}$. The numerical simulation of
$\EX_{\VS} \mathcal{E}(\mu_{e_{m}};\bm{x})^{2}$ uses again the approximation \eqref{eq:error_numerical_approximation}.

We make the same observations on the dependency of $\EX_{\VS} \tau_{m}^{\F}(\bm{x})$ on $N$ as in the Sobolev case. The rougher the kernel (i.e., the lower the value of $\alpha$), the smoother the transition of $\EX_{\VS} \tau_{m}^{\F}(\bm{x})$ as a function of $N$. Moreover, $\EX_{\VS} \mathcal{E}(\mu_{e_{m}};\bm{x})^{2}$ respects the upper bound of Theorem~\ref{thm:main_result_1};
the constant $B$ of Proposition~\ref{prop:constant_bound} is small for low values of $\alpha$ and converges to $0$ when $\alpha \rightarrow 0$.

\begin{figure}[h]
    \centering
\includegraphics[width= 0.48\textwidth]{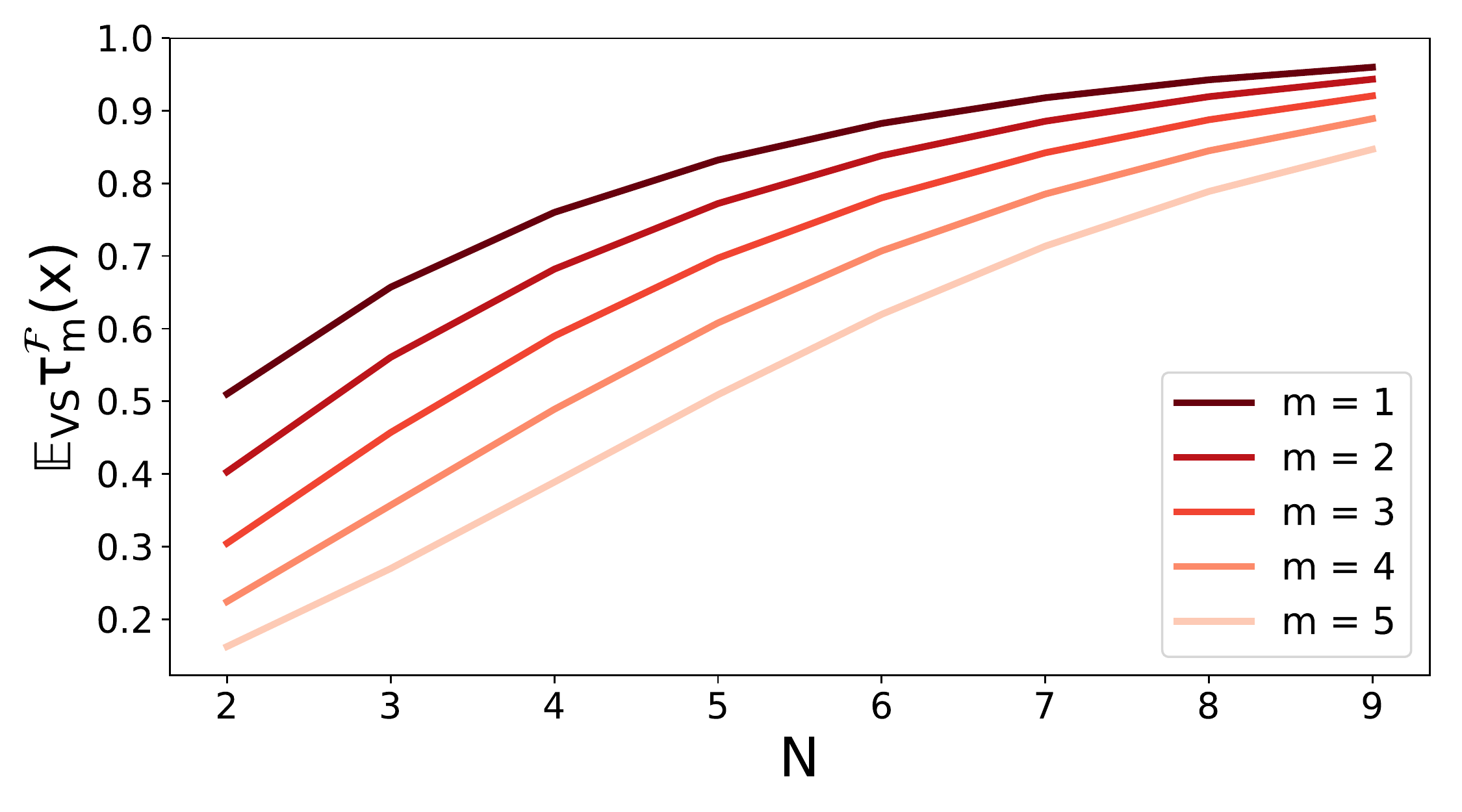}~\includegraphics[width= 0.48\textwidth]{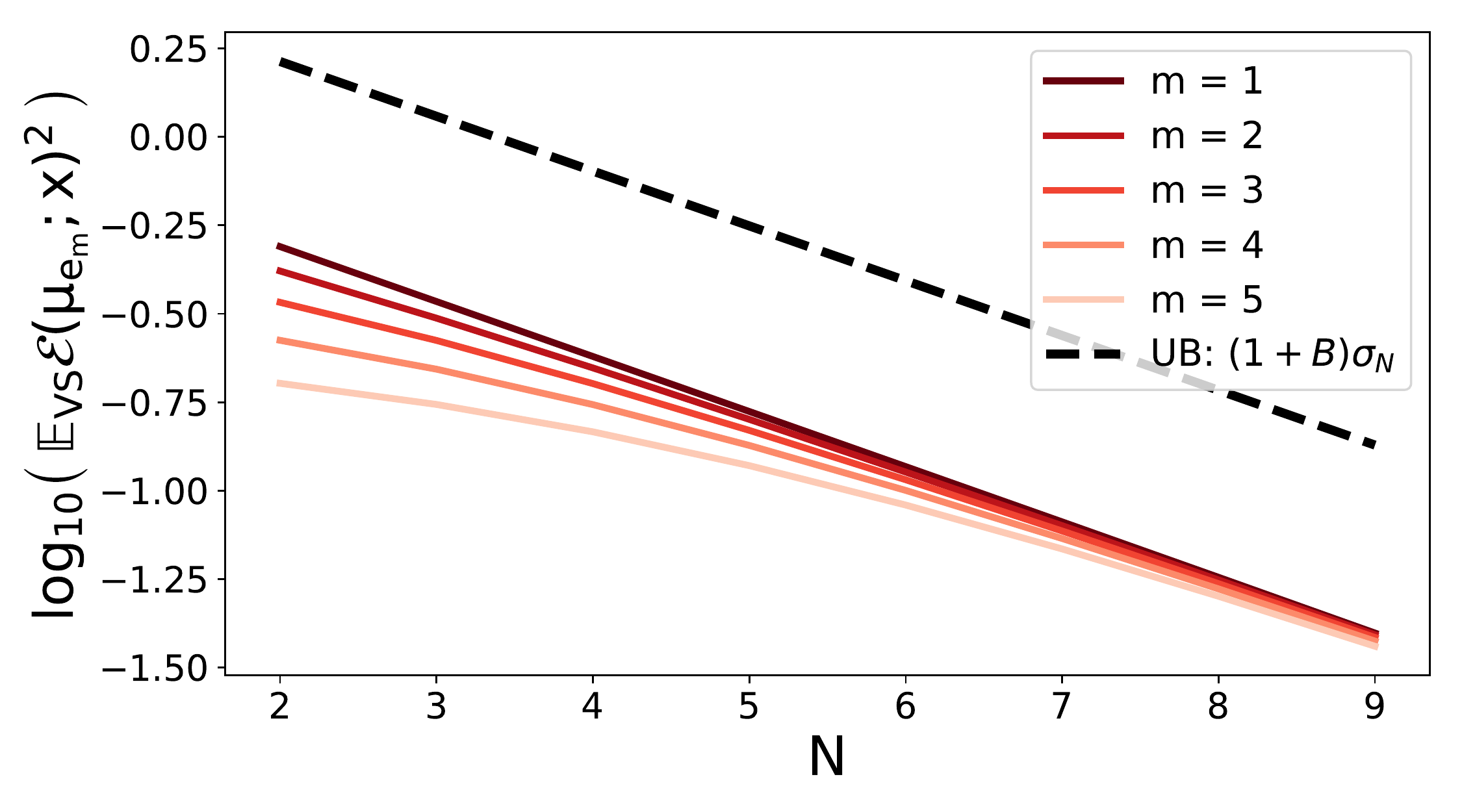}\\
\includegraphics[width= 0.48\textwidth]{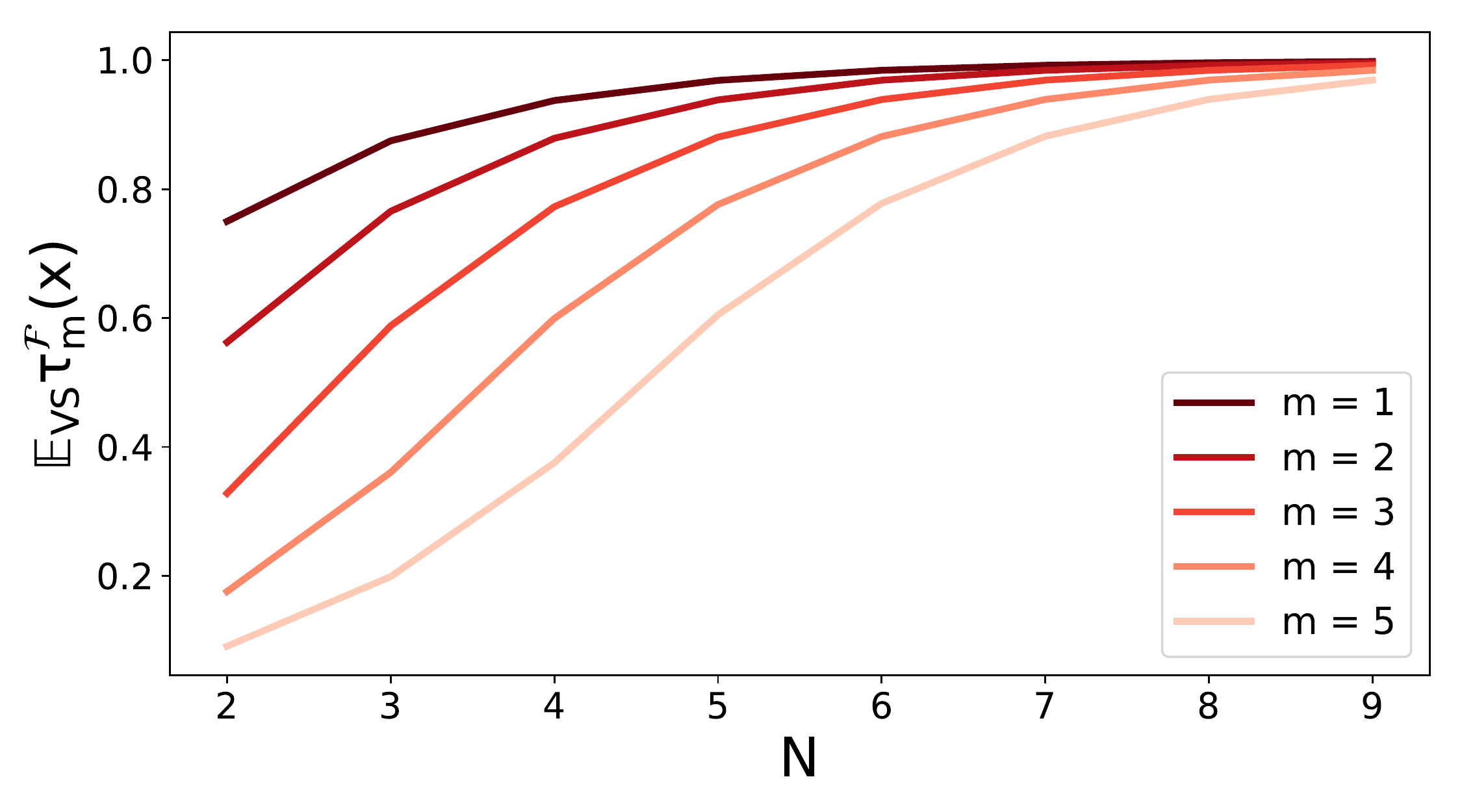}~\includegraphics[width= 0.48\textwidth]{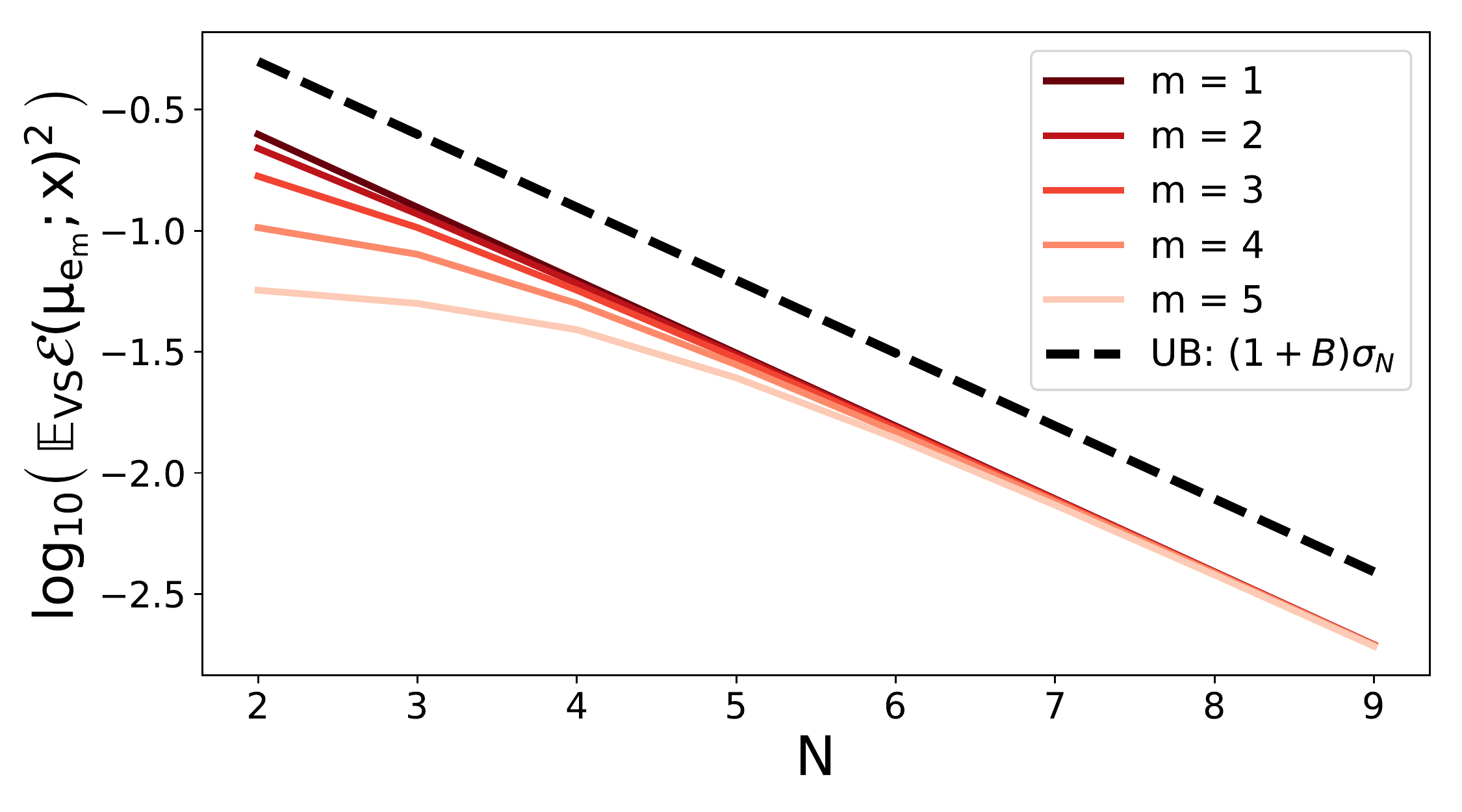}\\
\includegraphics[width= 0.48\textwidth]{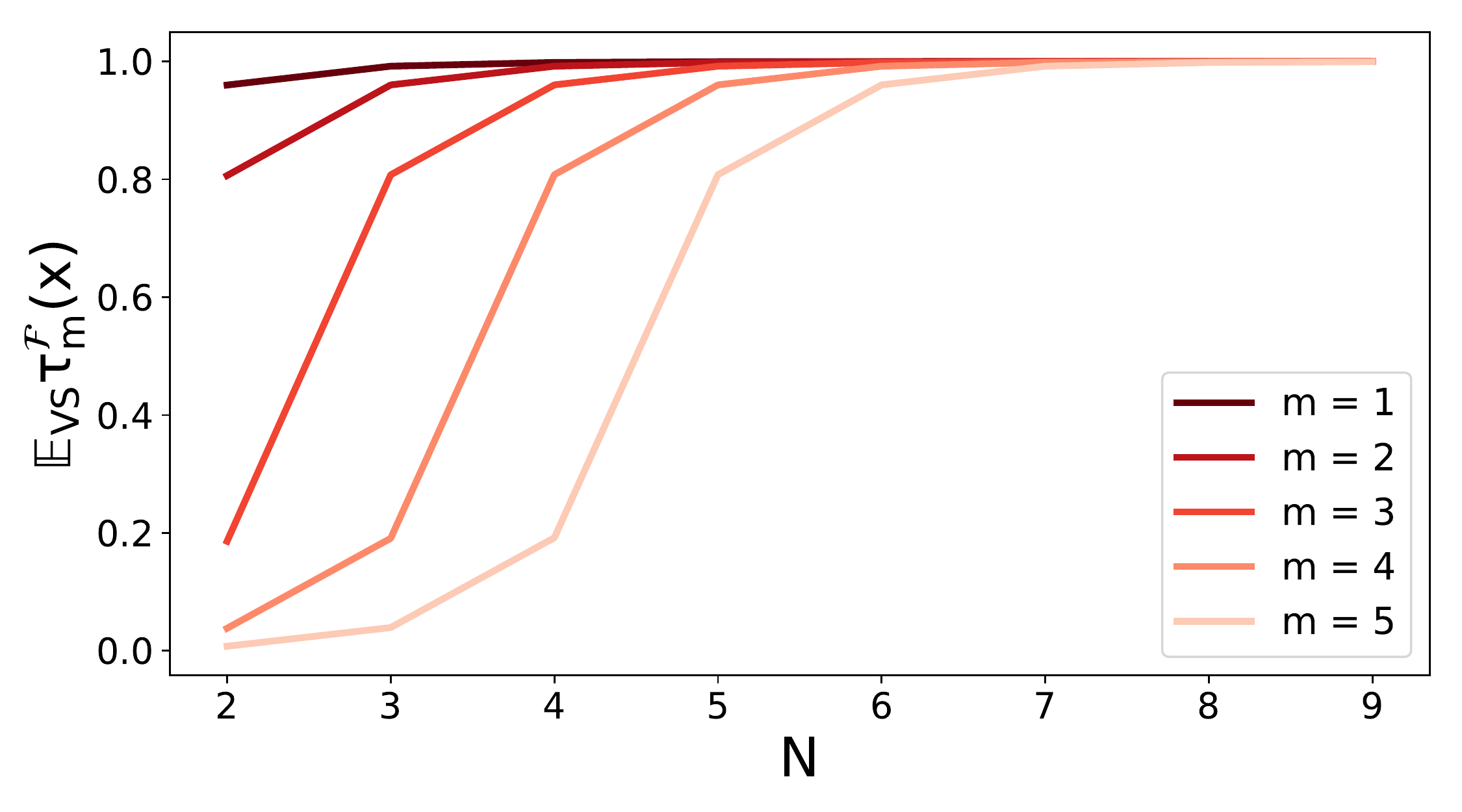}~\includegraphics[width= 0.48\textwidth]{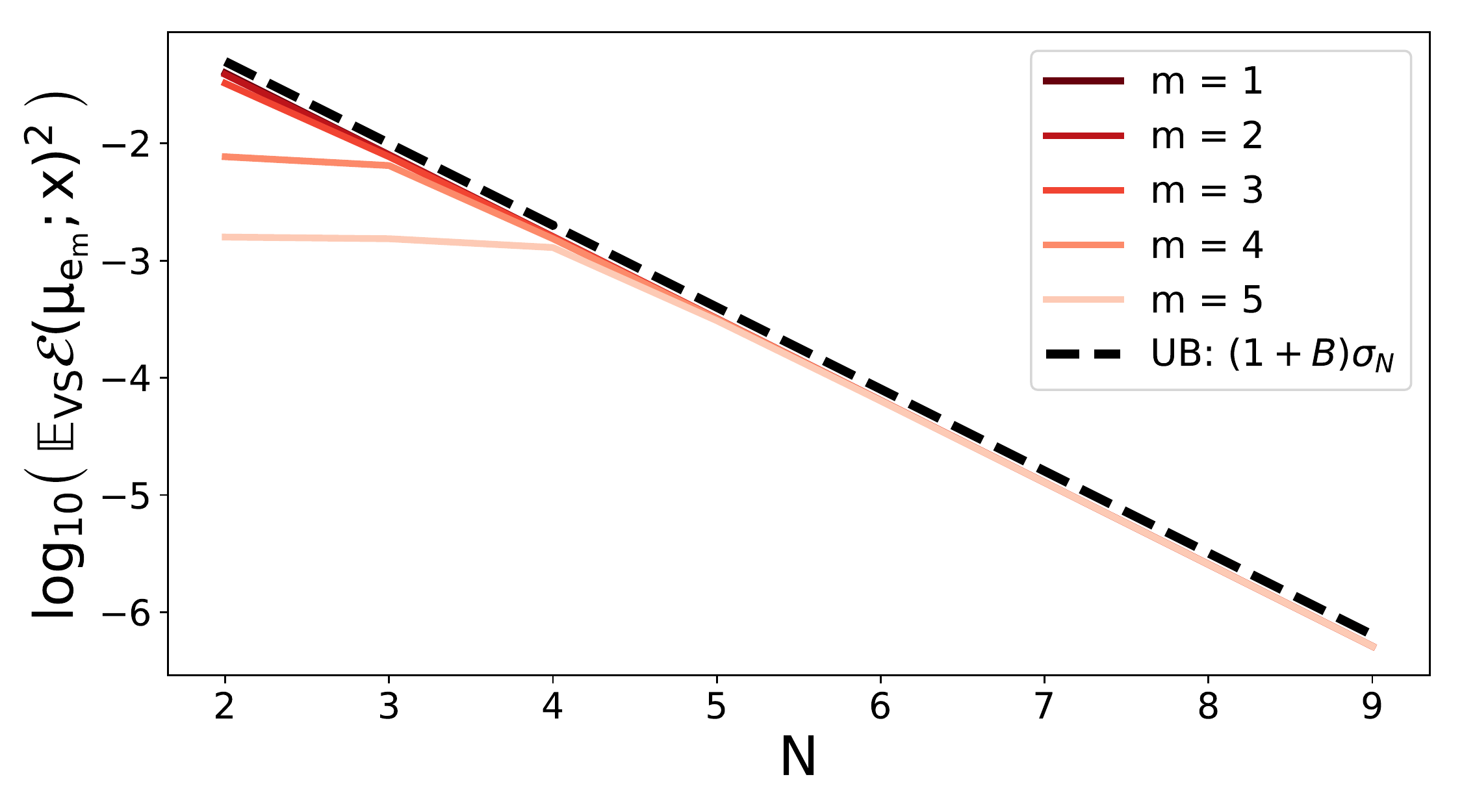}\\
\caption{The expected value of the $m$-th leverage score $\EX_{\VS} \tau_{m}^{\F}(\bm{x})$ and the expected interpolation error $\EX_{\VS} \mathcal{E}(\mu_{e_{m}};\bm{x})^{2}$ under the distribution of continuous volume sampling for $m \in \{1,2,3,4,5\}$. Every row corresponds to a uni-dimensional Gaussian space ($\sigma_{m} = \alpha^{m}$) with a parameter $\alpha \in \{0.7,0.5,0.2\}$. \label{fig:gaussian}}
\end{figure}


\end{document}


}%
           \typeout{*******************************************************}%
 	    \typeout{}%
           \typeout{}%
	   \chead{\small\bf Title Suppressed Due to Excessive Size}%
    \else
  	   \chead{\small\bf\@icmltitlerunning}%
    \fi

  \thispagestyle{empty}


  {\center\baselineskip 18pt
                       \toptitlebar{\Large\bf #1}\bottomtitlebar}
}

\gdef\icmlfullauthorlist{}
\newcommand\addstringtofullauthorlist{\g@addto@macro\icmlfullauthorlist}
\newcommand\addtofullauthorlist[1]{%
  \ifdefined\icmlanyauthors%
    \addstringtofullauthorlist{, #1}%
  \else%
    \addstringtofullauthorlist{#1}%
    \gdef\icmlanyauthors{1}%
  \fi%
  \ifdefined\nohyperref\else\ifdefined\hypersetup%
    \hypersetup{pdfauthor=\icmlfullauthorlist}%
  \fi\fi}

\def\toptitlebar{\hrule height1pt \vskip .25in}
\def\bottomtitlebar{\vskip .22in \hrule height1pt \vskip .3in}

\newenvironment{icmlauthorlist}{%
  \setlength\topsep{0pt}
  \setlength\parskip{0pt}
  \begin{center}
}{%
  \end{center}
}

\newcounter{@affiliationcounter}
\newcommand{\@pa}[1]{%
\ifcsname the@affil#1\endcsname
\else
  \ifcsname @icmlsymbol#1\endcsname
  \else
  \stepcounter{@affiliationcounter}%
  \newcounter{@affil#1}%
  \setcounter{@affil#1}{\value{@affiliationcounter}}%
  \fi
\fi%
\ifcsname @icmlsymbol#1\endcsname
  \textsuperscript{\csname @icmlsymbol#1\endcsname\,}%
\else
  \textsuperscript{\arabic{@affil#1}\,}%
\fi
}

\newcommand{\icmlauthor}[2]{%
  \ifdefined\isaccepted
    \mbox{\bf #1}\,\@for\theaffil:=#2\do{\@pa{\theaffil}} \addtofullauthorlist{#1}%
   \else
    \ifdefined\@icmlfirsttime
    \else
      \gdef\@icmlfirsttime{1}
      \mbox{\bf Anonymous Authors}\@pa{@anon} \addtofullauthorlist{Anonymous Authors}
     \fi
    \fi
}

\newcommand{\icmlsetsymbol}[2]{%
  \expandafter\gdef\csname @icmlsymbol#1\endcsname{#2}
 }

\newcommand{\icmlaffiliation}[2]{%
\ifdefined\isaccepted
\ifcsname the@affil#1\endcsname
 \expandafter\gdef\csname @affilname\csname the@affil#1\endcsname\endcsname{#2}%
\else
  {\bf AUTHORERR: Error in use of \textbackslash{}icmlaffiliation command. Label ``#1'' not mentioned in some \textbackslash{}icmlauthor\{author name\}\{labels here\} command beforehand. }
  \typeout{}%
  \typeout{}%
  \typeout{*******************************************************}%
  \typeout{Affiliation label undefined. }%
  \typeout{Make sure \string\icmlaffiliation\space follows }
  \typeout{all of \string\icmlauthor\space commands}%
  \typeout{*******************************************************}%
  \typeout{}%
  \typeout{}%
\fi
\else 
 \expandafter\gdef\csname @affilname1\endcsname{Anonymous Institution, Anonymous City, Anonymous Region, Anonymous Country}
\fi
}

\newcommand{\icmlcorrespondingauthor}[2]{
\ifdefined\isaccepted
 \ifdefined\icmlcorrespondingauthor@text
   \g@addto@macro\icmlcorrespondingauthor@text{, #1 \textless{}#2\textgreater{}}
 \else
   \gdef\icmlcorrespondingauthor@text{#1 \textless{}#2\textgreater{}}
 \fi
\else
\gdef\icmlcorrespondingauthor@text{Anonymous Author \textless{}anon.email@domain.com\textgreater{}}
\fi
}

\newcommand{\icmlEqualContribution}{\textsuperscript{*}Equal contribution }

\newcounter{@affilnum}
\newcommand{\printAffiliationsAndNotice}[1]{%
\stepcounter{@affiliationcounter}%
{\let\thefootnote\relax\footnotetext{\hspace*{-\footnotesep}\ifdefined\isaccepted #1\fi%
\forloop{@affilnum}{1}{\value{@affilnum} < \value{@affiliationcounter}}{
\textsuperscript{\arabic{@affilnum}}\ifcsname @affilname\the@affilnum\endcsname%
\csname @affilname\the@affilnum\endcsname%
\else
{\bf AUTHORERR: Missing \textbackslash{}icmlaffiliation.}
\fi
}.
\ifdefined\icmlcorrespondingauthor@text
Correspondence to: \icmlcorrespondingauthor@text.
\else
{\bf AUTHORERR: Missing \textbackslash{}icmlcorrespondingauthor.}
\fi

\ \\
\Notice@String
}
}
}


\long\def\icmladdress#1{%
 {\bf The \textbackslash{}icmladdress command is no longer used.  See the example\_paper PDF .tex for usage of \textbackslash{}icmlauther and \textbackslash{}icmlaffiliation.}
}

\def\icmlkeywords#1{%
  \ifdefined\nohyperref\else\ifdefined\hypersetup
    \hypersetup{pdfkeywords={#1}}
  \fi\fi
}


\renewenvironment{abstract}
   {%
\centerline{\large\bf Abstract}
    \vspace{-0.12in}\begin{quote}}
   {\par\end{quote}\vskip 0.12in}


\def\@startsection#1#2#3#4#5#6{\if@noskipsec \leavevmode \fi
   \par \@tempskipa #4\relax
   \@afterindenttrue
   \ifdim \@tempskipa <\z@ \@tempskipa -\@tempskipa \fi
   \if@nobreak \everypar{}\else
     \addpenalty{\@secpenalty}\addvspace{\@tempskipa}\fi \@ifstar
     {\@ssect{#3}{#4}{#5}{#6}}{\@dblarg{\@sict{#1}{#2}{#3}{#4}{#5}{#6}}}}

\def\@sict#1#2#3#4#5#6[#7]#8{\ifnum #2>\c@secnumdepth
     \def\@svsec{}\else
     \refstepcounter{#1}\edef\@svsec{\csname the#1\endcsname}\fi
     \@tempskipa #5\relax
      \ifdim \@tempskipa>\z@
        \begingroup #6\relax
          \@hangfrom{\hskip #3\relax\@svsec.~}{\interlinepenalty \@M #8\par}
        \endgroup
       \csname #1mark\endcsname{#7}\addcontentsline
         {toc}{#1}{\ifnum #2>\c@secnumdepth \else
                      \protect\numberline{\csname the#1\endcsname}\fi
                    #7}\else
        \def\@svsechd{#6\hskip #3\@svsec #8\csname #1mark\endcsname
                      {#7}\addcontentsline
                           {toc}{#1}{\ifnum #2>\c@secnumdepth \else
                             \protect\numberline{\csname the#1\endcsname}\fi
                       #7}}\fi
     \@xsect{#5}}

\def\@sect#1#2#3#4#5#6[#7]#8{\ifnum #2>\c@secnumdepth
     \def\@svsec{}\else
     \refstepcounter{#1}\edef\@svsec{\csname the#1\endcsname\hskip 0.4em }\fi
     \@tempskipa #5\relax
      \ifdim \@tempskipa>\z@
        \begingroup #6\relax
          \@hangfrom{\hskip #3\relax\@svsec}{\interlinepenalty \@M #8\par}
        \endgroup
       \csname #1mark\endcsname{#7}\addcontentsline
         {toc}{#1}{\ifnum #2>\c@secnumdepth \else
                      \protect\numberline{\csname the#1\endcsname}\fi
                    #7}\else
        \def\@svsechd{#6\hskip #3\@svsec #8\csname #1mark\endcsname
                      {#7}\addcontentsline
                           {toc}{#1}{\ifnum #2>\c@secnumdepth \else
                             \protect\numberline{\csname the#1\endcsname}\fi
                       #7}}\fi
     \@xsect{#5}}

\def\thesection {\arabic{section}}
\def\thesubsection {\thesection.\arabic{subsection}}
\def\section{\@startsection{section}{1}{\z@}{-0.12in}{0.02in}
             {\large\bf\raggedright}}
\def\subsection{\@startsection{subsection}{2}{\z@}{-0.10in}{0.01in}
                {\normalsize\bf\raggedright}}
\def\subsubsection{\@startsection{subsubsection}{3}{\z@}{-0.08in}{0.01in}
                {\normalsize\sc\raggedright}}
\def\paragraph{\@startsection{paragraph}{4}{\z@}{1.5ex plus
  0.5ex minus .2ex}{-1em}{\normalsize\bf}}
\def\subparagraph{\@startsection{subparagraph}{5}{\z@}{1.5ex plus
  0.5ex minus .2ex}{-1em}{\normalsize\bf}}

\footnotesep 6.65pt %
\skip\footins 9pt
\def\footnoterule{\kern-3pt \hrule width 0.8in \kern 2.6pt }
\setcounter{footnote}{0}

\parindent 0pt
\topsep 4pt plus 1pt minus 2pt
\partopsep 1pt plus 0.5pt minus 0.5pt
\itemsep 2pt plus 1pt minus 0.5pt
\parsep 2pt plus 1pt minus 0.5pt
\parskip 6pt

\leftmargin 2em \leftmargini\leftmargin \leftmarginii 2em
\leftmarginiii 1.5em \leftmarginiv 1.0em \leftmarginv .5em
\leftmarginvi .5em
\labelwidth\leftmargini\advance\labelwidth-\labelsep \labelsep 5pt

\def\@listi{\leftmargin\leftmargini}
\def\@listii{\leftmargin\leftmarginii
   \labelwidth\leftmarginii\advance\labelwidth-\labelsep
   \topsep 2pt plus 1pt minus 0.5pt
   \parsep 1pt plus 0.5pt minus 0.5pt
   \itemsep \parsep}
\def\@listiii{\leftmargin\leftmarginiii
    \labelwidth\leftmarginiii\advance\labelwidth-\labelsep
    \topsep 1pt plus 0.5pt minus 0.5pt
    \parsep \z@ \partopsep 0.5pt plus 0pt minus 0.5pt
    \itemsep \topsep}
\def\@listiv{\leftmargin\leftmarginiv
     \labelwidth\leftmarginiv\advance\labelwidth-\labelsep}
\def\@listv{\leftmargin\leftmarginv
     \labelwidth\leftmarginv\advance\labelwidth-\labelsep}
\def\@listvi{\leftmargin\leftmarginvi
     \labelwidth\leftmarginvi\advance\labelwidth-\labelsep}

\abovedisplayskip 7pt plus2pt minus5pt%
\belowdisplayskip \abovedisplayskip
\abovedisplayshortskip  0pt plus3pt%
\belowdisplayshortskip  4pt plus3pt minus3pt%

\def\@normalsize{\@setsize\normalsize{11pt}\xpt\@xpt}
\def\small{\@setsize\small{10pt}\ixpt\@ixpt}
\def\footnotesize{\@setsize\footnotesize{10pt}\ixpt\@ixpt}
\def\scriptsize{\@setsize\scriptsize{8pt}\viipt\@viipt}
\def\tiny{\@setsize\tiny{7pt}\vipt\@vipt}
\def\large{\@setsize\large{14pt}\xiipt\@xiipt}
\def\Large{\@setsize\Large{16pt}\xivpt\@xivpt}
\def\LARGE{\@setsize\LARGE{20pt}\xviipt\@xviipt}
\def\huge{\@setsize\huge{23pt}\xxpt\@xxpt}
\def\Huge{\@setsize\Huge{28pt}\xxvpt\@xxvpt}

\newsavebox\newcaptionbox\newdimen\newcaptionboxwid

\long\def\@makecaption#1#2{
 \vskip 10pt
        \baselineskip 11pt
        \setbox\@tempboxa\hbox{#1. #2}
        \ifdim \wd\@tempboxa >\hsize
        \sbox{\newcaptionbox}{\small\sl #1.~}
        \newcaptionboxwid=\wd\newcaptionbox
        \usebox\newcaptionbox {\footnotesize #2}
        \else
          \centerline{{\small\sl #1.} {\small #2}}
        \fi}

\def\fnum@figure{Figure \thefigure}
\def\fnum@table{Table \thetable}

\def\abovestrut#1{\rule[0in]{0in}{#1}\ignorespaces}
\def\belowstrut#1{\rule[-#1]{0in}{#1}\ignorespaces}

\def\abovespace{\abovestrut{0.20in}}
\def\aroundspace{\abovestrut{0.20in}\belowstrut{0.10in}}
\def\belowspace{\belowstrut{0.10in}}

\def\texitem#1{\par\noindent\hangindent 12pt
               \hbox to 12pt {\hss #1 ~}\ignorespaces}
\def\icmlitem{\texitem{$\bullet$}}

\long\def\comment#1{}


\makeatletter
\newbox\icmlrulerbox
\newcount\icmlrulercount
\newdimen\icmlruleroffset
\newdimen\cv@lineheight
\newdimen\cv@boxheight
\newbox\cv@tmpbox
\newcount\cv@refno
\newcount\cv@tot
\newcount\cv@tmpc@ \newcount\cv@tmpc
\def\fillzeros[#1]#2{\cv@tmpc@=#2\relax\ifnum\cv@tmpc@<0\cv@tmpc@=-\cv@tmpc@\fi
\cv@tmpc=1 %
\loop\ifnum\cv@tmpc@<10 \else \divide\cv@tmpc@ by 10 \advance\cv@tmpc by 1 \fi
   \ifnum\cv@tmpc@=10\relax\cv@tmpc@=11\relax\fi \ifnum\cv@tmpc@>10 \repeat
\ifnum#2<0\advance\cv@tmpc1\relax-\fi
\loop\ifnum\cv@tmpc<#1\relax0\advance\cv@tmpc1\relax\fi \ifnum\cv@tmpc<#1 \repeat
\cv@tmpc@=#2\relax\ifnum\cv@tmpc@<0\cv@tmpc@=-\cv@tmpc@\fi \relax\the\cv@tmpc@}%
\def\makevruler[#1][#2][#3][#4][#5]{
	\begingroup\offinterlineskip
		\textheight=#5\vbadness=10000\vfuzz=120ex\overfullrule=0pt%
		\global\setbox\icmlrulerbox=\vbox to \textheight{%
			{
				\parskip=0pt\hfuzz=150em\cv@boxheight=\textheight
				\cv@lineheight=#1\global\icmlrulercount=#2%
				\cv@tot\cv@boxheight\divide\cv@tot\cv@lineheight\advance\cv@tot2%
				\cv@refno1\vskip-\cv@lineheight\vskip1ex%
				\loop\setbox\cv@tmpbox=\hbox to0cm{					 
					\hfil {\hfil\fillzeros[#4]\icmlrulercount}
				}%
				\ht\cv@tmpbox\cv@lineheight\dp\cv@tmpbox0pt\box\cv@tmpbox\break
				\advance\cv@refno1\global\advance\icmlrulercount#3\relax
				\ifnum\cv@refno<\cv@tot\repeat
			}
		}
	\endgroup
}%
\makeatother

\def\icmlruler#1{\makevruler[12pt][#1][1][3][\textheight]\usebox{\icmlrulerbox}}
\AddToShipoutPicture{%
\icmlruleroffset=\textheight
\advance\icmlruleroffset by 5.2pt 
  \color[rgb]{.7,.7,.7}
  \ifdefined\isaccepted \else
	  \AtTextUpperLeft{%
	    \put(\LenToUnit{-35pt},\LenToUnit{-\icmlruleroffset}){
	      \icmlruler{\icmlrulercount}}
	  }
	 \fi
}
\endinput